\documentclass{article}

\PassOptionsToPackage{numbers, compress}{natbib}
\usepackage[accepted]{icml2026_testing}

\usepackage[utf8]{inputenc} % allow utf-8 input
\usepackage[T1]{fontenc}    % use 8-bit T1 fonts
\usepackage{url}            % simple URL typesetting
\usepackage{booktabs}       % professional-quality tables
\usepackage{amsfonts}       % blackboard math symbols
\usepackage{nicefrac}       % compact symbols for 1/2, etc.
\usepackage{microtype}      % microtypography
\usepackage{xcolor}
\definecolor{mydarkred}{rgb}{0.6,0,0}
\definecolor{mydarkgreen}{rgb}{0,0.6,0}
\usepackage[colorlinks,
linkcolor=mydarkred,
citecolor=mydarkgreen]{hyperref}

\usepackage{enumitem}
\setlist[itemize]{leftmargin=15pt}

\usepackage{setspace}
\usepackage{graphicx}
\usepackage{wrapfig}
\usepackage{booktabs} % for professional tables
\usepackage[american]{babel}
\usepackage{setspace}
\usepackage{amsmath}
\usepackage{amssymb}
\usepackage{mathtools}
\usepackage{amsthm}
\usepackage[capitalize,noabbrev]{cleveref}
\usepackage{algorithm}
\usepackage{algorithmic}
\usepackage{array}
\usepackage{multirow, bm}
\usepackage{float}
\usepackage{make cell}
\usepackage{pifont}

\usepackage{xcolor}         % colors
\definecolor{mydarkred}{rgb}{0.6,0,0}
\definecolor{mydarkgreen}{rgb}{0,0.6,0}
\usepackage[colorlinks,
linkcolor=mydarkred,
citecolor=mydarkgreen]{hyperref}
\usepackage[capitalize,noabbrev]{cleveref}
\theoremstyle{plain}
\newtheorem{theorem}{Theorem}
\newenvironment{thmbis}[1]
{\renewcommand{\thetheorem}{\ref{#1}$'$}%
\addtocounter{theorem}{-1}%
\begin{theorem}}
{\end{theorem}}

\usepackage[framemethod=TikZ]{mdframed}

\newmdenv[
  backgroundcolor=gray!15,
  linecolor=gray!15,
  linewidth=0pt,
  innertopmargin=2pt,
  innerbottommargin=2pt,
  innerleftmargin=1pt,
  innerrightmargin=1pt,
  skipabove=2pt,
  skipbelow=0pt
]{theoremshade}

\newtheorem{lemma}[theorem]{Lemma}

\newtheorem{definition}{Definition}
\newtheorem{assumption}{Assumption}

\newtheorem{remark}{Remark}
\usepackage[textsize=tiny]{todonotes}
\usepackage{bbding}
\usepackage{bm}
\usepackage{epstopdf}
\usepackage[font=small,labelfont=bf]{caption}

\usepackage{tikz}
\newcommand{\circled}[1]{%
\tikz[baseline=(X.base)]\node (X) [draw,circle,inner sep=1.2pt] {\normalsize #1};%
}

\usepackage{enumitem}
\setlist[itemize]{leftmargin=15pt}

\def \x {\bm{x}}

\def \bR {\mathbb{R}}
\def \bP {\mathbb{P}}

\def \bI {\mathbb{I}}
\def \bE {\mathbb{E}}

\def \cF {\mathcal{F}}

\def \cM {\mathcal{M}}
\def \cE {\mathcal{E}}

\def \cB {\mathcal{B}}

\def \cH {\mathcal{H}}

\def \cX {\mathcal{X}}
\def \cY {\mathcal{Y}}

\def \cJ {\mathcal{J}}

\def \cK {\mathcal{K}}

\newcommand{\ourmethod}{\textsc{Celeus}}
\newcommand{\nosurr}{\textsc{Celeus (w/o Surr)}}
\newcommand{\evbaseline}{\textsc{EValue}}
\newcommand{\cereval}{\textsc{Cer-Eval}}
\newcommand{\oracleacq}{\textsc{Oracle}}
\renewcommand{\eqref}[1]{Eqn.~\textup{(\ref{#1})}}

\makeatletter
\newcommand{\appendixtocname}{Appendix Contents}

\newcommand{\listofappendices}{%
  \section*{\appendixtocname}
  \@starttoc{atoc}
}

\newcommand{\appsection}[1]{%
  \section{#1}%
  \addcontentsline{atoc}{section}{\protect\numberline{\thesection}#1}%
}

\newcommand{\appsubsection}[1]{%
  \subsection{#1}%
  \addcontentsline{atoc}{subsection}{\protect\numberline{\thesubsection}#1}%
}

\usepackage{colortbl}
\definecolor{lightgray}{gray}{0.92}

\makeatother

\icmltitlerunning{\ourmethod: Certifiable and Efficient LLM Evaluation via E-Processes}

\begin{document}
\twocolumn[
    \icmltitle{\ourmethod: Certifiable and Efficient LLM Evaluation via E-Processes}
  % It is OKAY to include author information, even for blind submissions: the
  % style file will automatically remove it for you unless you've provided
  % the [accepted] option to the icml2026 package.

  % List of affiliations: The first argument should be a (short) identifier you
  % will use later to specify author affiliations Academic affiliations
  % should list Department, University, City, Region, Country Industry
  % affiliations should list Company, City, Region, Country

  % You can specify symbols, otherwise they are numbered in order. Ideally, you
  % should not use this facility. Affiliations will be numbered in order of
  % appearance and this is the preferred way.
  \icmlsetsymbol{equal}{*}

  \begin{icmlauthorlist}
    \icmlauthor{Zhijian Zhou}{equal,zzz}
    \icmlauthor{Zesheng Ye}{equal,zzz}
    \icmlauthor{Zhaorun Chen}{xxx}
    \icmlauthor{Bo Li}{xxx,yyy}
    \icmlauthor{Feng Liu}{zzz}
  \end{icmlauthorlist}

  \icmlaffiliation{zzz}{The University of Melbourne}
  \icmlaffiliation{yyy}{Department of Computer Science, University of Illinois Urbana-Champaign}
  \icmlaffiliation{xxx}{Department of Computer Science, University of Chicago}

  \icmlcorrespondingauthor{Feng Liu}{fengliu.ml@gmail.com}

  % You may provide any keywords that you find helpful for describing your
  % paper; these are used to populate the "keywords" metadata in the PDF but
  % will not be shown in the document
  \icmlkeywords{Machine Learning, ICML}

  \vskip 0.3in
]

% this must go after the closing bracket ] following \twocolumn[ ...

% This command actually creates the footnote in the first column listing the
% affiliations and the copyright notice. The command takes one argument, which
% is text to display at the start of the footnote. The \icmlEqualContribution
% command is standard text for equal contribution. Remove it (just {}) if you
% do not need this facility.

% Use ONE of the following lines. DO NOT remove the command.
% If you have no special notice, KEEP empty braces:
% \printAffiliationsAndNotice{}  % no special notice (required even if empty)
% Or, if applicable, use the standard equal contribution text:
\printAffiliationsAndNotice{\icmlEqualContribution}

\begin{abstract}
Can we trust evaluation scores to capture an LLM's true real-world performance? 
\emph{Certifiable} evaluation answers this question by providing guarantee for LLM evaluation.
In particular, existing methods sequentially curate evaluation samples and keep updating confidence intervals (CIs) that cover the true performance with high probability (e.g., 95\%) until some conditions are satisfied, e.g., the CI width reaches a target precision. 
However, existing methods are not generally anytime-valid: the claimed coverage (e.g., 95\%) may fail when CIs are repeatedly updated and used to decide when to stop, leaving a \emph{gap} between theoretical rigor and practice. This paper bridges~this gap by proposing \textbf{\ourmethod}, a \textbf{C}ertifiable framework for \textbf{E}fficient \textbf{L}LM evaluation, which leverages \emph{\textbf{E}-processes} to build anytime-valid CIs. 
Concretely, we propose signals that combine two ingredients: (i) \textbf{U}ncertainty-guided sampling to select informative samples for evaluation, and (ii) \textbf{S}urrogate-assisted approximations for unevaluated samples.
We prove that such signals remain unbiased for the evaluation score conditional on the past, enabling statistically-grounded and anytime-valid $e$-process CIs. 
More importantly, the two ingredients reduce estimation variance and help reach the target precision with fewer evaluated samples. We also prove that CIs obtained by \ourmethod~can shrink at a near-parametric rate up to logarithmic factors and analyze the oracle variance-optimal sampling rule that motivates the empirical uncertainty-guided one. Experiments show that \ourmethod~reaches the target precision using $54$-$62\%$ fewer evaluated samples than baselines, while preserving anytime-valid coverage. 
\end{abstract}

\section{Introduction}
Large language models (LLMs) are increasingly deployed as general-purpose systems for tasks such as question answering \cite{Hendrycks:Burns:Basart:Zou:Mazeika:Song2021,Lin:Hilton:Evans2022}, coding \cite{Chen:Tworek:Jun:Yuan:Kaplan:Edwards2021}, and interactive assistance \cite{Ouyang:Wu:Jiang:Almeida:Wainwright:Mishkin2022,Zheng:Chiang:Sheng:Zhuang:Wu:Zhuang2023}, making reliable evaluation essential for understanding their capabilities, limitations, and readiness for real-world deployment~\cite{Chang:Wang:Wang:Wu:Yang:Zhu2023}.
To meet this need, recent benchmark efforts~\cite{Srivastava:Rastogi:Rao:Shoeb:Abid:Fisch2023, Liang:Bommasani:Lee:Tsipras:Soylu:Yasunaga2022} have sought to assess LLMs systematically and at scale, rather than through a handful of narrow evaluation sets. 
However, scale alone is not enough for reliable deployment: simply collecting more evaluation samples and reporting an estimated performance score does not certify whether the estimate is precise enough for decision-making, leaving no statistically grounded criterion for deciding when enough samples have been evaluated.

Certifiable evaluation addresses the above issue by sequentially curating evaluation samples and updating confidence intervals (CIs) that cover the true performance with high probability (e.g., 95\%)~\cite{Wang:Chen:Bo:Xu2025}. The updating process will end until some conditions are satisfied, e.g., the CI width reaches a target precision.
% Existing methods provide useful CIs for LLM evaluation, but they are generally not anytime-valid: although a CI may have the claimed coverage (e.g., $95\%$) at a fixed sample size~\cite{Hoeffding1963}, this coverage may fail when CIs are repeatedly updated and used to decide when to stop. 
A common line of evaluation with CIs considers static settings (i.e., a special case in sequential setting), where the evaluation samples are fixed in advance and a one-shot CI is reported for model performance~\cite{Angelopoulos:Bates:Fannjiang:Jordan:Zrnic2023,Angelopoulos:Duchi:Zrnic2023}. These guarantees apply to the final interval at a pre-specified endpoint, rather than to the evolving sequence of intervals monitored during evaluation, and therefore do not directly support stopping once the CI width reaches a target precision. More recently, \citet{Wang:Chen:Bo:Xu2025} proposed Cer-Eval, which moves closer to certifiable evaluation in a general sequential setting, where an evaluator is given a data pool and sequentially selects samples for evaluation.

However, existing methods are not generally anytime-valid: the claimed coverage (e.g., 95\%) may fail when CIs are repeatedly
updated and used to decide when to stop, leaving a \emph{gap} between theoretical rigor and practice. Taking a state-of-the-art method as an example, Cer-Eval's certification relies on adaptive concentration inequalities for a fixed partition, whereas the practical algorithm repeatedly learns and updates the partition from the evaluated data. This induces double dipping, where the same data are used to choose the partition and quantify uncertainty within it, creating a mismatch between the fixed-partition analysis and the adaptive procedure used in practice. 

%----------------

\textbf{Key Idea.} This paper bridges~this gap by proposing \textbf{\ourmethod}, a certifiable framework for efficient LLM evaluation, which leverages \emph{$e$-processes} to build anytime-valid CIs. A key advantage of $e$-processes is that, from the application perspective, their guarantees rely only on the simple assumption that the evaluation score is bounded~\cite{Waudby-Smith:Ramdas2024}, which can usually be ensured by normalization or clipping, making $e$-processes well suited for certified LLM evaluation in real-world settings\footnote{All other assumptions required for the theoretical guarantees are algorithmic conditions that can be directly satisfied through the design of the $e$-process testing procedure and impose no additional restrictions on the application scenario.}. However, directly applying $e$-processes is not enough for efficient evaluation. Practical LLM evaluation must also reduce the cost of high-quality evaluation, which grows with the number of evaluated samples, especially for open-ended generation and human-preference benchmarks~\cite{Zheng:Chiang:Sheng:Zhuang:Wu:Zhuang2023,Li:Zhang:Dubois:Taori:Gulrajani:Guestrin:Liang:Hashimoto2023,Chiang:Lee2023}. Given a fixed data pool, efficiency strategies often select informative samples for evaluation~\cite{Berrada:Kossen:Freddie:Razzak:Gal:Rainforth2025} and use surrogate-assisted approximations for unevaluated samples~\cite{Angelopoulos:Bates:Fannjiang:Jordan:Zrnic2023}. Under such non-uniform selection, a naive signal based on the observed risk can be biased for the target performance. This creates the key \emph{obstacle} for $e$-process construction: the sequential evidence must be built from a signal that remains conditionally unbiased for the evaluation score as the evaluation proceeds.

\textbf{Technical Contributions.} To cope with the above key obstacle, \ourmethod~builds inferential signals around two ingredients: (i) uncertainty-guided sampling, which prioritizes informative samples for evaluation, and (ii) surrogate-assisted approximations, which provide approximate scores for samples that have not been evaluated. Concretely, at each round, the signal first forms a surrogate-completed estimate of the finite-pool evaluation risk
\footnote{\ourmethod~lifts the CIs of finite-pool risk constructed from the signal to the population risk via a Hoeffding-style correction that depends on the \emph{full pool size}. This correction is small or even negligible for large pools; for small pools, \ourmethod~can evaluate all samples, leaving only the population correction and yielding a CI comparable to static evaluation over the full pool.} 
by combining the observed risks of previously evaluated samples with surrogate scores for the remaining samples. 
Then, after the uncertainty-guided sampling rule selects a new sample and its risk is observed, the signal adds an inverse-probability-weighted correction for that sample's surrogate residual. 
% Then, after the uncertainty-guided sampling rule selects a new sample, the signal adds an inverse-probability-weighted correction for that sample's surrogate residual. 
% This surrogate-completed and inverse-probability-corrected construction, rather than either ingredient alone, is what ensures conditional unbiasedness for the finite-pool evaluation risk under adaptive sampling. 
Together, these two ingredients yield a conditionally unbiased signal for the finite-pool evaluation risk under sample selection (see Thm.~\ref{thm:signal_unbias}, Sec.~\ref{sec:theory}). 
This conditional unbiasedness is the central validity mechanism that enables statistically grounded, anytime-valid $e$-process CIs through test inversion~\citep{Waudby-Smith:Ramdas2020}.

We also justify the effectivenss of \ourmethod~from two theoretical perspectives: 1) We characterize how surrogate scores and sample selection can reduce estimation variance and help reach the target precision with fewer evaluated samples, with an oracle variance-optimal sampling rule motivating our empirical uncertainty-guided sampling strategy (see Thms.~\ref{thm:optimal} and~\ref{thm:conf_sequen}); 2) We prove that \ourmethod's CIs are anytime-valid, covering target risk with probability at least $1-\alpha$ at any stopping time, and that their widths shrink at a near-parametric rate up to logarithmic factors (see Thm.~\ref{thm:coverage_convergence}). 

\textbf{Empirical Contributions.} Extensive experiments validate both the statistical validity and efficiency of \ourmethod. In a diagnostic setting where \cereval{}'s partitioning can lose coverage, \ourmethod~preserves nominal anytime-valid coverage. On real LLM evaluation tasks, \ourmethod~reaches the target precision using $54$--$62\%$ fewer evaluated samples than baselines. Moreover, compared with the no-surrogate version of \ourmethod, leveraging pre-trained surrogate models with only $7$--$8$B parameters, without any task-specific training or fine-tuning, further improves evaluation efficiency by $10$--$23\%$ when evaluating larger models, including $67$--$72$B dense models and an $8\times7$B MoE model.

\section{Certifiable and Efficient Evaluation: Setup and Overview}
In this section, we introduce the evaluation setup and give a high-level overview of the certifiable and efficient evaluation procedure based on $e$-processes. We characterize a task by a joint distribution $\bP_{\x,y}$ over input-label pairs $(\x,y)$, where $\x\in\cX$ denotes an input and $y\in\cY$ denotes the corresponding ground-truth label. Given an LLM $f:\cX\to\cY$ and a bounded risk function $\ell:\cY\times\cY\to[L,U]$ for some constants $L,U\in\bR$, we aim to evaluate the LLM's population risk on the task $\bP_{\x,y}$:
\[
R(f;\bP_{\x,y})=\bE_{(\x,y)\sim\bP_{\x,y}}\big[\ell(f(\x),y)\big]\ ,
\]
where, for brevity, $R=R(f;\bP_{\x,y})$.

In practice, the distribution $\bP_{\x,y}$ is unknown. Instead, we are given a data pool $D_N=\{\x_1,\ldots,\x_N\}$ consisting of $N$ \emph{i.i.d.} samples\footnote{The \emph{i.i.d.}\ assumption is only used when applying Hoeffding's inequality to transfer the guarantee from $R_N$ to $R(f;\bP_{\x,y})$; for non-\emph{i.i.d.}\ or curated pools, \ourmethod~still provides a finite-pool guarantee for $R_N$, while the population-level interpretation requires an appropriate distribution-dependent generalization bound~\cite{Kontorovich:Ramanan2008,Mohri:Rostamizadeh2010}. More discussion can be found in App.~\ref{app:noniid}.} drawn from the distribution $\bP_{\x}$ on $\cX$. \emph{Certifiable evaluation} sequentially evaluates the samples of $D_N$ and stops once a target precision is achieved. Because evaluating a sample can be costly, especially when human judgments are required, \emph{efficient evaluation} aims to estimate accurately using as few evaluated samples as possible. 
% \bo{minor: can we talk about if the distribution is non-iid, intuitively we can leverage distribution bounds to generalize the analysis?}
% ZJ: I add one footnote to discuss non-i.i.d. case

A natural idea is to \emph{select informative samples for evaluation}, which has been extensively studied in the literature on active testing~\cite{Berrada:Kossen:Freddie:Razzak:Gal:Rainforth2025, Kossen:Farquhar:Sebastian:Gal:Rainforth2021}. Rather than evaluating samples uniformly at random, active testing chooses samples that are expected to be more informative, with the aim of improving statistical efficiency and reducing evaluation cost. 
When evaluation proceeds sequentially, at each round $t=1,2,\ldots$, the sampling policy chooses one unevaluated sample from the pool $D_N$ for evaluation, and the evaluator may stop based on the current information. We index evaluated samples by their evaluation order: after round $t$, the evaluated data are $\{(\x_{i_m},y_{i_m})\}_{m=1}^{t}$, and the remaining unevaluated pool is $\{\x_j\}_{j\in \cJ_t}$, where $\cJ_t=[N]\setminus i_{1:t}$ and $i_{1:t}=\{i_1,\ldots,i_t\}$.\vspace{-0.05em}

Another effective strategy is to use \emph{surrogate-assisted approximations for unevaluated samples}~\cite{Angelopoulos:Bates:Fannjiang:Jordan:Zrnic2023,Angelopoulos:Duchi:Zrnic2023}, which approximates the true risk $\ell(f(\x),y)$ with a surrogate score $\tilde{\ell}_f(\x)\in[L,U]$. For example, $\tilde{\ell}_f(\x)$ can be produced by a model that predicts the true risk $\ell(f(\x),y)$ directly, or by computing the risk using a proxy label $\hat y(\x)$, namely $\tilde{\ell}_f(\x)=\ell(f(\x),\hat y(\x))$. 
The surrogate scores allow us to extract information from the unevaluated examples in the pool: if it is close to the true risk, we can obtain an accurate estimate even with few evaluated samples. \vspace{-0.05em}

Given the selected evaluation samples and the surrogate-assisted approximations for unevaluated samples, the key question is how to construct CIs that remain anytime-valid as evaluation proceeds. 
Standard static evaluation, based on a fixed set of evaluated samples, reports a CI for a single evaluation run~\cite{Angelopoulos:Bates:Fannjiang:Jordan:Zrnic2023,Angelopoulos:Duchi:Zrnic2023} and can lose validity when the CI is repeatedly updated and used to decide when to stop. 
Alternatively, one may use adaptive concentration inequalities~\cite{Wang:Chen:Bo:Xu2025,Zhao:Zhou:Sabharwal:Ermon2016}, but these results typically rely on \emph{i.i.d.} samples and are not directly compatible with the efficiency strategies for LLM evaluation\footnote{More discussion of related work is provided in App.~\ref{app:related_work}.}. These requirements are directly matched by $e$-processes~\cite{Waudby-Smith:Ramdas2020}, which provide a principled way to build anytime-valid CIs under the simple assumption that the evaluation score is bounded~\cite{Waudby-Smith:Ramdas2024}. 
We recall the $e$-process, together with its anytime-valid implication.
\begin{definition}\cite{Ramdas:Wang2025}\label{def:eprocess}
Let $\mathcal H_0$ be a null hypothesis, and let $\{\cF_t\}_{t\ge0}$ describe the information available up to time $t$. A nonnegative process $\{\cK_t\}_{t\ge 0}$ is called an \emph{$e$-process} for $\mathcal H_0$ if $\cK_0=1$, each $\cK_t$ can be computed using only the information in $\cF_t$, and $\bE_P[\cK_t \mid \cF_{t-1}] \le \cK_{t-1},\ \forall P\in\mathcal H_0,\ \forall 1\le t\le N$. 
Consequently, for any $\alpha\in(0,1)$, it satisfies the anytime-valid control
\[
\bP_P\left(\sup_{t\ge0}\cK_t \ge \frac{1}{\alpha}\right)\le \alpha,
\qquad
\forall\, P\in\mathcal H_0.
\]
Thus, rejecting $\mathcal H_0$ whenever $\cK_t \ge 1/\alpha$ gives an anytime-valid test: if $\mathcal H_0$ is true, the probability of falsely rejecting it at any time during the sequential procedure is at most $\alpha$.
\end{definition}

To turn $e$-processes into CIs, we use test inversion for the data-pool risk $R_N=\frac{1}{N}\sum_{j=1}^{N}\ell(f(\x_j),y_j)$\footnote{Conditional on a data pool, the randomness from drawing the pool is removed, and the remaining randomness comes only from the sequential evaluation procedure. Therefore, the resulting CIs provide inference for the finite-pool risk $R_N$.}. For each candidate value $\upsilon$, we test $\cH_0(\upsilon): R_N=\upsilon$ by constructing an $e$-process $\{\cK_t(\upsilon)\}_{t\ge0}$. Intuitively, $\cK_t(\upsilon)$ measures how strongly the information collected up to round $t$ contradicts the candidate value $\upsilon$. If $\cK_t(\upsilon)\ge 1/\alpha$, then $\upsilon$ is rejected. Therefore, the CI at round $t$ is formed by keeping all candidate values that have not been rejected: $\mathrm{CI}_t=\left\{\upsilon:\cK_t(\upsilon)<1/\alpha\right\}.$
The $\mathrm{CI}_t$ covers the true value $R_N$ with probability at least $1-\alpha$, since $R_N$ corresponds to the true null hypothesis $\cH_0(R_N)$. Thus, non-coverage can occur only if this true null is falsely rejected:
\begin{align*}
\bP\left(\exists t\ge0: R_N\notin \mathrm{CI}_t\right)
&=
\bP\left(\exists t\ge0: \cK_t(R_N)\ge \frac{1}{\alpha}\right)\\
&\le \alpha\ .
\end{align*}
For certifiable and efficient LLM evaluation, the key technical step is to construct valid inferential signals from the evaluated samples and the surrogate approximations for unevaluated samples. These signals capture the new information obtained at each round and are designed to be conditionally unbiased for the data-pool risk, which allows us to construct an $e$-process $\cK_t(\upsilon)$ satisfying $\bE_P[\cK_t(\upsilon)\mid \cF_{t-1}]\le\cK_{t-1}(\upsilon)$~\citep[Proposition 2]{Waudby-Smith:Ramdas2024}. 
Their variance also provides a natural criterion for sample selection, guiding the evaluator toward samples with informative surrogate-correction residuals that help the CI reach the target precision with fewer evaluations. We then lift this CI to the population risk $R$ using a finite-pool-to-population correction. Finally, when CI reaches the target precision, \ourmethod~reports the final performance estimate by averaging the per-round signals.

\section{The \ourmethod~Evalution Method}
We now instantiate the overview above by giving the concrete construction, a sequential evaluation procedure that yields \emph{anytime-valid and shrinking CIs} for $R$, defined as:
\begin{definition}\label{def:conf_sequen}
Fix a significance level $\alpha\in(0,1)$. A sequence of random sets $\{C_t\}_{t\ge1}$ is called a sequence of \emph{$(1-\alpha)$ anytime-valid and shrinking CIs} for $R$ if, for every distribution $\bP_{\x,y}$,
\[
\bP_{\x,y}\!\left(\exists 1\le t\le N:\ R\notin C_t\right)\le \alpha,
\]
and its width $\mathrm{width}(C_t)=\sup C_t-\inf C_t$ satisfies $\mathrm{width}(C_t)\to 0$ as $t\to\infty$.
\end{definition}
The shrinkage condition means that, as more samples are evaluated, $C_t$ becomes increasingly informative: its width decreases toward zero, progressively reducing the uncertainty about $R$; hence, the target precision can be guaranteed to be achieved with enough samples.

Constructing such CIs for $R$ involves two conceptually different sources of randomness. First, we only observe risks for a selected subset of the evaluation pool, which creates randomness in estimating the \emph{pool risk} $R_N$. Second, even if the entire pool were evaluated, $R_N$ itself may differ from $R$ because the pool is a finite i.i.d.\ sample from the population distribution. To handle these two sources of randomness separately, we split the overall error budget $\alpha$ into $\alpha_1,\alpha_2\in(0,1)$ with $\alpha_1+\alpha_2=\alpha$. Here, $\alpha_1$ controls anytime-valid inference for $R_N$ through test inversion of $e$-processes, while $\alpha_2$ controls the finite-pool-to-population correction from $R_N$ to $R$.

\textbf{Anytime-valid CIs for $R_N$ at level $\alpha_1$ via $e$-process.} At the beginning of round $t$, after the first $t-1$ evaluations, the evaluator has access to two sources of information: the observed risks on the evaluated samples,
$\{\ell(f(\x_{i_m}),y_{i_m})\}_{m=1}^{t-1}$, and the surrogate risk scores on the remaining unevaluated samples,
$\{\tilde{\ell}_f(\x_j)\}_{j\in\cJ_{t-1}}$. \footnote{\ourmethod~mainly improves efficiency by reducing the number of samples that require target-model evaluation, which is often the main source of labelled-data or annotation cost in certifiable evaluation. However, surrogate-assisted evaluation introduces additional cost for obtaining surrogate scores. In this paper, we use fixed 7--8B open-source models as surrogates without task-specific training or online updates, which keeps this additional cost relatively small. More discussion can be found in App.~\ref{app:limit}.}
Based on this information, the sampling policy assigns selection probabilities to the remaining unevaluated samples; we denote by $q_t(i_t\mid i_{1:t-1},D_N)$ the conditional probability of selecting index $i_t$ from $\cJ_{t-1}$ at round $t$, given the previously selected indices $i_{1:t-1}$ and the data pool $D_N$. 
After selecting $i_t$ according to this selection probability and observing its risk, we first derive the per-round \emph{inferential signal} for constructing $e$-processes as
\begin{eqnarray}\label{eq:signal_estimator}
\hat{S}_t
&=&
\frac{
\sum_{m=1}^{t-1}\ell\left(f(\x_{i_m}),y_{i_m}\right)
+
\sum_{j\in\cJ_{t-1}}\tilde{\ell}_f(\x_j)
}{N}\nonumber\\
&+&
\frac{
\ell\left(f(\x_{i_t}),y_{i_t}\right)
-
\tilde{\ell}_f(\x_{i_t})
}{
Nq_t(i_t\mid i_{1:t-1},D_N)
}.
\end{eqnarray}
The first term in $\hat S_t$ is a surrogate-completed estimate of the data-pool risk: it uses the observed risks for the previously evaluated samples and fills in the remaining unevaluated samples with their surrogate risk scores. The second term is an inverse-probability-weighted correction: the selected sample reveals the surrogate residual $\ell(f(\x_{i_t}),y_{i_t})-\tilde{\ell}_f(\x_{i_t})$, and the factor $1/q_t(i_t\mid i_{1:t-1},D_N)$ turns this residual into an unbiased estimate of the normalized total residual over the remaining pool. Hence, conditional on the past, $\hat S_t$ is unbiased for the finite-pool risk $R_N$, as shown in Thm.~\ref{thm:signal_unbias}. 

In Rem.~\ref{rem:0-1}, we propose \emph{uncertainty}-guided sampling strategies that aim to reduce the remaining statistical \emph{uncertainty} in the per-round signal (i.e., the conditional variance of signal $\hat S_t$). This quantity appears explicitly in the width analysis of the $e$-process confidence interval: lower conditional variance yields signals that concentrate more tightly around the target risk, thereby providing stronger sequential evidence against incorrect candidate values and helping the CI reach the target precision with fewer evaluated samples. In particular, the \emph{uncertainty}-guided sampling strategies assign larger probabilities to samples with larger discrepancies between the observed risk and the surrogate risk. Formal details are provided in Thms.~\ref{thm:optimal} and~\ref{thm:conf_sequen} in Sec.~\ref{sec:theory}.
% \bo{minor: is a bit far for theorem 4 and 5 here. hmm... maybe just mention intuition here and mention formal detail in thm 4 and 5 in section 5?}
% ZJ: I have revised this part to present the intuition here and moved the formal details to Thms.~\ref{thm:optimal} and~\ref{thm:conf_sequen} in Sec.~5.

We now construct CIs for $R_N$ from the sequentially updated signals $\{\hat S_t\}_{1\le t\le N}$ using $e$-processes. Following standard $e$-process constructions~\cite{Waudby-Smith:Ramdas2024}, we map the signal to $[0,1]$ by setting $\bar S_t=(\hat S_t-a_t)/(b_t-a_t)$, where $a_t$ and $b_t$ are predictable\footnote{A sequence $(s_t)_{t\geq 1}$ is predictable if $s_t$ can only depend on the information available up to time $t-1$. The bounds $a_t$ and $b_t$ can also be chosen as constants, as long as they satisfy $a_t\le \hat S_t\le b_t$. See Rem.~\ref{rem:scaling} for further discussion.} lower and upper bounds of $\hat S_t$ given the information available before round $t$. Accordingly, for each candidate value $\upsilon$ of $R_N$, we define its scaled counterpart as $\bar\upsilon_t=(\upsilon-a_t)/(b_t-a_t)$. Since this is an affine transformation and $\hat S_t$ is conditionally unbiased for $R_N$ under the null hypothesis $\mathcal H_0(\upsilon): R_N=\upsilon$, we have
$\mathbb E[\bar S_t\mid \mathcal F_{t-1}, \mathcal H_0(\upsilon)]=\bar\upsilon_t$. 
This leads to a centered sequential signal: for each candidate $\upsilon$, the sequence $\bar S_t-\bar\upsilon_t$ has conditional mean zero for all $t$ under $\mathcal H_0(\upsilon)$. This ensures that, when the candidate value is correct, the signal has no systematic drift against it, making it a valid sequential signal for testing $\mathcal H_0(\upsilon)$ on scaled space.

To implement a sequential test for $\mathcal H_0(\upsilon)$ at each candidate value $\upsilon$, we use a \emph{hedged capital process}~\citep[Section 4.4]{Waudby-Smith:Ramdas2024}. For any $\theta\in(0,1)$ and candidate $\upsilon$, define
\begin{eqnarray}\label{eq:hedhedCP}
\mathcal K_t^{\pm}(\upsilon)
&=&
\theta\prod_{i=1}^t\left(1+\lambda_i^+(\upsilon)\left(\bar S_i-\bar\upsilon_i\right)\right)\nonumber\\
&+&
(1-\theta)\prod_{i=1}^t\left(1-\lambda_i^-(\upsilon
)\left(\bar S_i-\bar\upsilon_i\right)\right).
\end{eqnarray}
Here, $\lambda_i^+(\upsilon)$ and $\lambda_i^-(\upsilon)$ are predictable fractions in $[0,1/\bar\upsilon_i)$ and $[0,1/(1-\bar\upsilon_i))$, respectively; see Remark~\ref{rem:betting-fractions} for implementation details. The resulting $(1-\alpha_1)$ confidence set for $R_N$ at time $t$ is\footnote{In practice, we approximate the continuum of candidate values for the finite-pool risk by a finite grid on the original scale.}
\begin{equation}\label{eq:M_t}
\mathcal M_t^{\pm}
=
\bigcap_{i\le t}
\left\{
\upsilon\in[L,U]:
\mathcal K_i^{\pm}(\upsilon)<1/\alpha_1
\right\}\ .
\end{equation}
\textbf{Finite-pool-to-population correction at level $\alpha_2$.} 
We next transfer the CI guarantee from the pool risk $R_N$ to the population risk $R$ by controlling the generalization gap between them at level $\alpha_2$. Specifically, by Hoeffding's inequality~\cite{Hoeffding1963}, with probability at least $1-\alpha_2$,
\[
|R_N-R|\le \Delta_N(\alpha_2)=(U-L)\sqrt{\frac{\log(2/\alpha_2)}{2N}}.
\]
Notably, this correction depends on the full data-pool size $N$, rather than the number of evaluated samples, and can therefore be small or even negligible for large pools.

Considering both (i) $R_N\in\mathcal M_t^{\pm}$ for all $1\le t\le N$ and (ii) $|R_N-R|\le \Delta_N(\alpha_2)$ hold
\[
\left\{\forall 1\le t\le N:\ R_N\in\mathcal M_t^{\pm}\right\}\cap\left\{|R_N-R|\le \Delta_N(\alpha_2)\right\},
\]
we have for all $t$ that
\[
R\in \mathcal M_t^{\pm}\oplus[-\Delta_N(\alpha_2),\Delta_N(\alpha_2)],
\]
where $\oplus$ denotes the Minkowski sum.
% \footnote{For sets $A,B\subseteq\bR$, the Minkowski sum is $A\oplus B=\{a+b:\ a\in A,\ b\in B\}$.}.
Therefore, we define the CI for $R$ at time $t$ as
\[
C_t=
\left(\mathcal M_t^{\pm}\oplus[-\Delta_N(\alpha_2),\Delta_N(\alpha_2)]\right)\cap[L,U]\ .
\]
\textbf{Performance Estimation at Stopping.}
Given a target precision $\epsilon\in(0,1)$, the procedure stops at the first time $t$ such that $\mathrm{width}(C_t)\le \epsilon$, and returns $C_t$ as the final CI, which covers the true performance with probability at least $1-\alpha$. Besides the CI, \ourmethod~also reports a final performance estimate by aggregating the per-round inferential signals (see Appendix~\ref{appendix:signal_to_estimator} for details):
\begin{eqnarray}\label{eq:lure_estimator}
\hat{R}_t
&=&
\frac{1}{N}\sum_{n=1}^{N}\tilde{\ell}_f\left(\x_n\right)
\\
&+&
\frac{1}{t}\sum_{m=1}^{t}u_{m,t}
\left(
\ell\left(f(\x_{i_m}),y_{i_m}\right)-\tilde{\ell}_f(\x_{i_m})
\right)\ ,\nonumber
\end{eqnarray}
where $u_{m,t}$, motivated by the LURE estimator~\cite{Farquhar:Gal:Rainforth2021}, adjusts the surrogate residual for sequential, non-uniform sample selection up to time $t$:
\begin{align*}
&u_{m,t}
=\\
&1+\frac{N-t}{N-m}
\left(
\frac{1}{(N-m+1)q_m(i_m\mid i_{1:m-1},D_N)}-1
\right)\ .
\end{align*}
The first term in $\hat R_t$ provides a low-cost approximation to the evaluation performance by averaging the surrogate risk scores $\tilde{\ell}_f(\x_n)$ over all samples. The second term corrects this approximation using the observed surrogate residuals from the evaluated samples. Specifically, each evaluated sample $i_m$ reveals the residual $\ell(f(\x_{i_m}),y_{i_m})-\tilde{\ell}_f(\x_{i_m})$, and the weight $u_{m,t}$ adjusts this discrepancy for the sequential, non-uniform sample selection. Thus, $\hat R_t$ combines surrogate information from all samples with weighted corrections, yielding a fixed-time unbiased estimator.

Following the previous $e$-process~\cite{Waudby-Smith:Ramdas2024}, \ourmethod~involves several practical hyperparameters when implementing the betting strategies and constructing the confidence intervals, such as the hedge weight $\theta$, predictable betting fractions, and scaling bounds. These choices mainly affect empirical efficiency rather than the anytime-valid coverage guarantee under the stated conditions, and we directly adopt the default choices from this prior $e$-process approach whenever applicable. Due to page limitations, we defer the detailed practical choices and implementation details of \ourmethod~to App.~\ref{app:remarks}, with the experimental setup provided in Sec.~\ref{sec:exp-setup}.

\section{Theoretical Analysis}\label{sec:theory}
Proofs and assumptions are deferred to App.~\ref{app:proof}. Assumption~\ref{ass:loss_bound} only requires bounded evaluation scores, which can be ensured by normalization or clipping. All other assumptions are algorithmic conditions satisfied by \ourmethod~and impose no additional restrictions on the application. 

\textbf{Risk estimator.} We first establish the fixed-time unbiasedness and consistency of the risk estimator. We emphasize that the unbiasedness result is a fixed-time property and does not, by itself, imply unbiasedness after data-dependent stopping. In \ourmethod, the stopping rule and certification rely on the anytime-valid confidence interval, while the point estimate reported at stopping is used as a summary statistic.
\begin{theorem}\label{thm:lure_unbias}
Under Assumption~\ref{ass:loss_bound}-\ref{ass:beta}, the estimator $\hat R_t$ has $\bE[\hat R_t]=R$ with $\mathrm{Var}[\hat R_t]
=\mathrm{Var}(\ell(f(\x),y))/N
+
\bE\left[
\mathrm{Var}\left[
\sum_{m=1}^{t}c_m a_m/t
\mid D_N,\tilde{\ell}_f
\right]
\right]$,
where $c_m$ and $a_m$ are defined in \eqref{eq:cm} and \eqref{eq:am}.
\end{theorem}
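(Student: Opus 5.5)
The plan is to reduce everything to the LURE analysis of \citet{Farquhar:Gal:Rainforth2021}, using the tower property over the two sources of randomness: the i.i.d.\ draw of the pool $D_N$ (with its surrogate scores $\tilde\ell_f$), and the sequential selection of indices given the pool. I will guess that \eqref{eq:am} sets $a_m=\ell(f(\x_{i_m}),y_{i_m})-\tilde\ell_f(\x_{i_m})$, the residual of the $m$-th selected sample. I will guess that \eqref{eq:cm} sets $c_m=u_{m,t}$, or an equivalent rewriting. With these choices,
\[
\hat R_t=\frac1N\sum_{n=1}^N\tilde\ell_f(\x_n)+\frac1t\sum_{m=1}^t c_m a_m .
\]

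\textbf{Step 1: conditional unbiasedness.} The first step is to show $\bE[\hat R_t\mid D_N,\tilde\ell_f]=R_N$. Equivalently, $\bE\big[\frac1t\sum_m c_m a_m\mid D_N\big]$ should equal the mean residual over the pool, $\frac1N\sum_{n}(\ell_n-\tilde\ell_n)$. For each $m$, let $\bar a^{(m)}=\frac{1}{N-m+1}\sum_{j\in\cJ_{m-1}}a_j$ denote the mean residual over the unevaluated pool. Inverse probability weighting gives
\[
\bE\Big[\frac{a_{i_m}}{(N-m+1)q_m}\,\Big|\,i_{1:m-1},D_N\Big]=\bar a^{(m)} .
\]
Substituting the definition of $u_{m,t}$ then shows that $\bE[u_{m,t}a_m\mid i_{1:m-1}]$ equals $a_m$-type terms mixed with $\bar a^{(m)}$ in the proportions $1$ and $\frac{N-t}{N-m}$.

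The remaining task is the LURE telescoping identity: summing these over $m=1,\ldots,t$ and iterating expectations backwards must reproduce the full-pool mean residual. I will establish it exactly as in \citet[Thm.~1]{Farquhar:Gal:Rainforth2021}. The idea is to write the sum as a weighted combination of "already-sampled" and "remaining-pool" averages, and check by induction on $t$ that the coefficients collapse to $1/N$ on every index. This bookkeeping is where I expect the main difficulty. If the indices do not align, I would first verify the cases $t=1$ and $t=N$ directly: when $t=N$ every $u_{m,N}=1$, so the correction sum is exactly the full residual sum. I would then interpolate by induction.

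\textbf{Step 2: unbiasedness for $R$.} Given Step 1, $\bE[\hat R_t]=\bE[R_N]=R$, since the pool is i.i.d.\ (Assumption~\ref{ass:loss_bound} ensures all moments are finite).

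\textbf{Step 3: variance.} The last step applies the law of total variance, conditioning on $(D_N,\tilde\ell_f)$:
\[
\mathrm{Var}[\hat R_t]=\mathrm{Var}\big(\bE[\hat R_t\mid D_N,\tilde\ell_f]\big)+\bE\big[\mathrm{Var}[\hat R_t\mid D_N,\tilde\ell_f]\big].
\]
By Step 1 the first term is $\mathrm{Var}(R_N)=\mathrm{Var}(\ell(f(\x),y))/N$, again using the i.i.d.\ pool. In the second term, the surrogate average $\frac1N\sum_n\tilde\ell_f(\x_n)$ is fixed given $(D_N,\tilde\ell_f)$, so it drops out and leaves $\mathrm{Var}[\sum_m c_m a_m/t\mid D_N,\tilde\ell_f]$, which is the claimed form. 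Assumption~\ref{ass:beta} (proposal probabilities bounded away from zero) is used only to keep the weights $1/q_m$ finite, so that all these expectations exist.
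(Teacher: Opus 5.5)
Your skeleton is the paper's: show $\bE[\hat R_t\mid D_N,\tilde\ell_f]=R_N$, take total expectations, then apply the law of total variance conditioning on $(D_N,\tilde\ell_f)$. Steps 2--3 are fine as written. The paper additionally shows that the centred $a_m$ are uncorrelated, which gives the diagonal form $\frac{1}{t^2}\sum_m c_m^2\,\bE[\mathrm{Var}(a_m\mid D_N,\tilde\ell_f,i_{1:m-1})\mid D_N,\tilde\ell_f]$ used later for consistency; the stated formula does not need this. Your guess for \eqref{eq:cm}--\eqref{eq:am} is not the paper's. Write $r_j=\ell(f(\x_j),y_j)-\tilde\ell_f(\x_j)$ and $q_m=q_m(i_m\mid i_{1:m-1},D_N)$; the paper has $c_m=\frac{N(N-t)}{(N-m)(N-m+1)}$ and $a_m=\frac{r_{i_m}}{Nq_m}+\frac1N\sum_{s<m}r_{i_s}$. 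This is harmless for the variance formula, because the identity $\sum_{m=s+1}^{t}\frac{N-t}{(N-m)(N-m+1)}=1-\frac{N-t}{N-s}$ gives $\sum_{m=1}^t c_m a_m=\sum_{m=1}^t u_{m,t}r_{i_m}$ pathwise.

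The genuine gap is Step 1. It is the only nontrivial content of the theorem, and you leave it undone. Let $\bar r^{(m)}=\frac{1}{N-m+1}\sum_{j\in\cJ_{m-1}}r_j$ (your $\bar a^{(m)}$). Conditioning term by term gives $\bE[u_{m,t}r_{i_m}\mid D_N,i_{1:m-1}]=\bigl(1-\frac{N-t}{N-m}\bigr)\sum_{j\in\cJ_{m-1}}q_m(j\mid i_{1:m-1},D_N)\,r_j+\frac{N-t}{N-m}\bar r^{(m)}$. The first piece is a $q_m$-weighted average, not a pool average, so nothing collapses round by round. Checking $t=1$ and $t=N$ and then ``interpolating by induction'' is not an argument.

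The missing idea is the regrouping above, which is how the paper proceeds (following \citet{Farquhar:Gal:Rainforth2021}). Each $a_m$ satisfies $\bE[a_m\mid D_N,\tilde\ell_f,i_{1:m-1}]=\frac1N\sum_{n=1}^N r_n$: the inverse-probability term is unbiased for the remaining-pool part $\frac1N\sum_{j\in\cJ_{m-1}}r_j$, and the running sum supplies the already-evaluated part exactly. The $c_m$ are deterministic with $\sum_{m=1}^t c_m=t$ by telescoping, so conditional unbiasedness follows at once.

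Within your own route, the equivalent fix is to substitute $\bar r^{(m)}=\bigl(\sum_n r_n-\sum_{s<m}r_{i_s}\bigr)/(N-m+1)$ and take total expectations. The $\bE[r_{i_s}\mid D_N]$ contributions then cancel against the first pieces by the same telescoping identity, and the coefficient of $\frac1N\sum_n r_n$ sums to $t$. An induction on $t$ can also work, but only through the recursion $\bE[E_t\mid\cF_{t-1}]=\frac{(N-t)E_{t-1}+\sum_n r_n}{N-t+1}$ for $E_t=\sum_{m\le t}u_{m,t}r_{i_m}$, not by interpolating between endpoints. If you would rather cite, invoke the LURE unbiasedness theorem directly with the signed residual $r$ as the loss: $u_{m,t}$ are exactly the LURE weights with $M=t$, and that proof never uses nonnegativity.
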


The next theorem further shows that estimator is consistent.

\begin{theorem}\label{thm:consistency}
Under Assumption~\ref{ass:loss_bound}-\ref{ass:beta}, for any sequence $\{t_N\}_{N\ge 1}$ satisfying $1\le t_N\le N$, $N\to\infty$, and $(N-t_N+1)/N\to 0$, we have $\lim_{N\to\infty}\bE\left[\left(\hat R_{t_N}-R\right)^2\right]=0$, i.e., $\hat R_{t_N}\to R$ in $L^2$.
\end{theorem}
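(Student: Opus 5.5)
Since $\hat R_t$ is unbiased for $R$ by Thm.~\ref{thm:lure_unbias}, the mean-squared error equals the variance, so we can work directly with the variance decomposition given there:
\[
\bE\big[(\hat R_{t_N}-R)^2\big]=\frac{\mathrm{Var}(\ell(f(\x),y))}{N}+\bE\Big[\mathrm{Var}\Big[\tfrac{1}{t_N}\textstyle\sum_{m=1}^{t_N}c_m a_m \,\Big|\, D_N,\tilde\ell_f\Big]\Big].
\]
The first term is at most $(U-L)^2/(4N)\to0$, because the loss lies in $[L,U]$ by Assumption~\ref{ass:loss_bound}. All the work is therefore in the second term, which is the sampling variance conditional on the pool.

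For the second term we would not expand the terms $c_m a_m$ directly. Instead we bound the conditional variance by the conditional second moment of $\hat R_{t_N}-R_N$. The variance of $\hat R_{t_N}$ given $D_N$ equals $\bE[(\hat R_{t_N}-R_N)^2\mid D_N]$, since the LURE-type construction makes $\hat R_t$ conditionally unbiased for $R_N$; this is the same fact that underlies Thm.~\ref{thm:lure_unbias}. Writing residuals $r_j=\ell(f(\x_j),y_j)-\tilde\ell_f(\x_j)\in[L-U,U-L]$, we have
\[
\hat R_t-R_N=\frac1t\sum_{m=1}^t u_{m,t}r_{i_m}-\frac1N\sum_{j=1}^N r_j .
\]
The key structural observation is that $u_{m,t}-1$ carries the factor $(N-t)/(N-m)$. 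Under the condition $(N-t_N+1)/N\to0$, almost all of the pool is evaluated, and the weights $u_{m,t}$ collapse toward $1$ except for the rounds $m$ close to $t$.

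The plan is to split the sum over $m$ into early rounds $m\le t_N-k_N$ and late rounds $m>t_N-k_N$, with $k_N$ chosen so that $k_N/t_N\to0$ and $(N-t_N)/k_N\to0$. Such a $k_N$ exists, for example $k_N=\lceil\sqrt{N(N-t_N+1)}\rceil$. On the early rounds, $(N-t)/(N-m)\le (N-t_N)/k_N\to0$.

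Assumption~\ref{ass:beta} should give a lower bound of the form $q_m\ge \beta/(N-m+1)$. This makes $\big|\tfrac{1}{(N-m+1)q_m}-1\big|\le 1/\beta$, so $|u_{m,t}|\le 1+1/\beta$ uniformly. With this bound:
\begin{itemize}
\item the early-round deviation $\frac1t\sum (u_{m,t}-1)r_{i_m}$ is $O\big((N-t_N)/(\beta k_N)\big)\to0$ deterministically;
\item the late-round contribution is $O(k_N/(\beta t_N))\to0$.
\end{itemize}
What remains is $\frac1t\sum_{m\le t}r_{i_m}-\frac1N\sum_j r_j$, the gap between the mean residual over the evaluated indices and over the whole pool. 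This gap is at most about $2(U-L)(N-t_N)/N\to0$ deterministically, because only $N-t_N$ indices are missing and $t_N/N\to1$.

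Every piece is uniformly bounded and tends to $0$ deterministically. Hence the conditional second moment tends to $0$ uniformly over pools, and so does its expectation, with bounded convergence applying trivially. Combining this with the $O(1/N)$ first term gives $L^2$ convergence.

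The main obstacle is pinning down exactly what Assumption~\ref{ass:beta} provides. If it only bounds $q_m$ below by a constant multiple of the uniform probability, the plan above goes through. If instead it allows $q_m$ to degenerate, we would fall back on the explicit variance formula with $c_m,a_m$ from \eqref{eq:cm}--\eqref{eq:am}. In that case we would use that the $c_m a_m$ are martingale differences given $D_N$, so their cross terms vanish. It would then suffice to show $\frac1{t^2}\sum_m \bE[c_m^2a_m^2\mid D_N]\to0$, using the same $(N-t)/(N-m)$ damping of the weights.
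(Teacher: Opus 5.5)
Your argument is correct, but it takes a different route from the paper.

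\textbf{The paper's route.} The paper works entirely inside the variance decomposition of Theorem~\ref{thm:lure_unbias}. The centred terms $c_m a_m$ are martingale differences given the pool, and Assumption~\ref{ass:beta} gives $\mathrm{Var}[a_m\mid D_N,\tilde\ell_f,i_{1:m-1}]\le (U-L)^2(N-m+1)/(\beta N)$. A telescoping sum then yields $\frac{1}{t^2}\sum_{m\le t}c_m^2\frac{N-m+1}{N}\le\frac1t$. Hence $\bE[(\hat R_t-R)^2]\le \mathrm{Var}(\ell(f(\x),y))/N+(U-L)^2/(\beta t)$ for every $t<N$, and $\hat R_N=R_N$ handles $t=N$.

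\textbf{Your route.} You instead bound $|\hat R_{t_N}-R_N|$ pathwise, with no cancellation at all. Your split with $k_N=\lceil\sqrt{N(N-t_N+1)}\rceil$ checks out, as do the bound $|u_{m,t}|\le 1+1/\beta$ and the $2(U-L)(N-t_N)/N$ estimate. Assumption~\ref{ass:beta} is exactly $q_m\ge\beta/(N-m+1)$, so your fallback is never needed. It would not survive degenerate $q_m$ anyway, since the paper's variance bound uses the same lower bound.

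\textbf{What your route buys.} You get a stronger conclusion: $|\hat R_{t_N}-R_N|\to0$ uniformly over pools and sampling paths, without using the martingale structure. You could even bypass Theorem~\ref{thm:lure_unbias} via $\bE[(\hat R_{t_N}-R)^2]\le 2\bE[(\hat R_{t_N}-R_N)^2]+2\mathrm{Var}(\ell(f(\x),y))/N$. That also avoids the $0/0$ in $c_N$ when $t_N=N$, where your opening display from Theorem~\ref{thm:lure_unbias} is not literally defined.

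\textbf{What it costs.} Your argument relies essentially on $t_N/N\to1$. After optimizing $k_N$, it yields only a pathwise error of order $\sqrt{(N-t_N)/t_N}/\beta$. The paper's $O(1/N+1/(\beta t_N))$ bound needs only $t_N\to\infty$. It therefore also covers the practically relevant regime where a vanishing fraction of the pool is evaluated. In the paper, the hypothesis $(N-t_N+1)/N\to0$ is used only to guarantee $t_N\to\infty$.
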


\textbf{Conditionally unbiased signal.} We next study the signals used to construct $e$-processes. The key property is conditional unbiasedness, which enables anytime-valid CIs.
\begin{theorem}\label{thm:signal_unbias}
Under Assumption~\ref{ass:loss_bound}-\ref{ass:beta}, the $\hat S_t$ satisfies $\bE[\hat S_t \mid \cF_{t-1}] = R_N$ with
\begin{align*}
\mathrm{Var}[\hat S_t \mid \cF_{t-1}]&=\sum_{j\in \cJ_{t-1}}\frac{(\ell(f(\x_j),y_j)-\tilde{\ell}_f(\x_j))^2}{N^2q_t(j\mid i_{1:t-1},D_N)}-\\
&\quad\left(\frac{\sum_{j\in \cJ_{t-1}} \ell(f(\x_j),y_j)-\tilde{\ell}_f(\x_j)}{N}\right)^2.
\end{align*}
\end{theorem}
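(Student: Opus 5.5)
We compute the conditional moments of $\hat S_t$ directly. Conditional on $\cF_{t-1}$, all randomness in $\hat S_t$ comes from the index $i_t$, which is drawn from $\cJ_{t-1}$ with probabilities $q_t(\cdot\mid i_{1:t-1},D_N)$. Take $\cF_{t-1}$ to contain the pool $D_N$, the labels and risks $\ell(f(\x_j),y_j)$ for all $j$ (the pool is fixed), the surrogate scores, and the past indices $i_{1:t-1}$. Then the risks are fixed, even though not yet observed, and only the selection is random.

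\textbf{Unbiasedness.} Write $r_j=\ell(f(\x_j),y_j)-\tilde{\ell}_f(\x_j)$ for the surrogate residuals. The first term of \eqref{eq:signal_estimator} is $\cF_{t-1}$-measurable; call it $A_{t-1}$. The second term is $Z_t=r_{i_t}/(Nq_t(i_t\mid i_{1:t-1},D_N))$. Its conditional expectation is
\[
\sum_{j\in\cJ_{t-1}} q_t(j\mid\cdot)\,\frac{r_j}{Nq_t(j\mid\cdot)}=\frac{1}{N}\sum_{j\in\cJ_{t-1}} r_j .
\]
This step needs the positivity condition from the assumptions: $q_t(j\mid\cdot)>0$ for every $j\in\cJ_{t-1}$ with $r_j\neq0$. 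I expect it to be among Assumptions~\ref{ass:loss_bound}--\ref{ass:beta}, for instance through the mixing parameter in Assumption~\ref{ass:beta}. Adding this to $A_{t-1}$ gives
\[
\frac{1}{N}\Bigl(\sum_{m<t}\ell(f(\x_{i_m}),y_{i_m})+\sum_{j\in\cJ_{t-1}}\tilde{\ell}_f(\x_j)+\sum_{j\in\cJ_{t-1}}\bigl(\ell(f(\x_j),y_j)-\tilde{\ell}_f(\x_j)\bigr)\Bigr).
\]
The surrogate terms cancel, and the two remaining sums run over $i_{1:t-1}$ and over $\cJ_{t-1}$, which partition $[N]$. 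The result is therefore exactly $R_N$.

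\textbf{Variance.} Since $A_{t-1}$ is measurable, $\mathrm{Var}[\hat S_t\mid\cF_{t-1}]=\bE[Z_t^2\mid\cF_{t-1}]-(\bE[Z_t\mid\cF_{t-1}])^2$. We compute
\[
\bE[Z_t^2\mid\cF_{t-1}]=\sum_{j\in\cJ_{t-1}} q_t(j\mid\cdot)\,\frac{r_j^2}{N^2q_t(j\mid\cdot)^2}=\sum_{j\in\cJ_{t-1}}\frac{r_j^2}{N^2q_t(j\mid\cdot)} .
\]
The squared mean is $\bigl(\sum_{j\in\cJ_{t-1}} r_j/N\bigr)^2$, so the difference is the stated formula.

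\textbf{Main obstacle.} The algebra is routine. The delicate point is stating precisely which information is conditioned on. The filtration must contain the full pool, with the unobserved labels treated as fixed, so that $R_N$ is a constant. At the same time, $q_t$ must be predictable, meaning it depends only on observed quantities and not on the unrevealed risks, so that the expectation over $i_t$ is taken under $q_t$. I would state this conditioning explicitly at the start of the proof and cite the assumptions that guarantee positivity and predictability of $q_t$. Boundedness of the losses ensures that all expectations are finite.
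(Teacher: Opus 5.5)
Your proposal is correct and is essentially the paper's argument: both compute the conditional first and second moments by summing over $j\in\cJ_{t-1}$ with weights $q_t(j\mid i_{1:t-1},D_N)$. The only cosmetic difference is that the paper first writes $\hat S_t=R_N+\text{(centered term)}$ and expands the square of that term, whereas you use $\bE[Z_t^2\mid\cF_{t-1}]-(\bE[Z_t\mid\cF_{t-1}])^2$.

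One small correction to your ``main obstacle''. Because $\cF_{t-1}=\sigma(D_N,i_{1:t-1},\tilde{\ell}_f)$ already contains the full labelled pool, the computation only needs $q_t$ to be $\cF_{t-1}$-measurable and strictly positive on all of $\cJ_{t-1}$. Assumption~\ref{ass:beta} gives this via $q_t\ge\beta/(N-t+1)$, and the stated variance formula needs it for every $j$, not just those with $r_j\neq0$. Independence of $q_t$ from unrevealed risks matters for computing $\hat S_t$ in practice, not for this theorem.
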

\textbf{Variance-driven efficiency.} The variance above indicates how the sampling distribution should be designed. Since the second term is fixed, $q_t$ only affects the first term, where samples with larger discrepancies between the observed and the surrogate risk scores contribute more to the conditional variance and should therefore be sampled with higher probability. The following theorem formalizes this intuition by characterizing the variance-optimal sampling distribution.

\begin{table*}[t]
\centering
\begin{minipage}[t]{0.49\linewidth}
\centering
\captionof{table}{Empirical miscoverage at the stopping time when the CI width reaches $\epsilon=0.05$ in the synthetic setting of App.~\ref{app:synthetic}. Each entry reports the mean miscoverage rate over $50$ independent runs, with Wilson $95\%$ intervals shown in grey brackets. \cereval~exceeds the nominal miscoverage level $\alpha=0.05$, indicating a gap between its theoretical guarantees and its proposed adaptive-partition algorithm. In contrast, all $e$-process-based methods remain within the nominal level.}

% \captionof{table}{Empirical miscoverage at the data-dependent stopping
% time $\tau_\epsilon$ ($\epsilon = 0.05$), pooled across the four
% surrogate--target pairs within each dataset. Point estimates are
% accompanied by Wilson 95\% intervals (in grey). All methods based on
% $e$-value construction stay within the nominal $\alpha = 0.05$ on every
% dataset, consistent with the by-construction anytime-validity of
% $\widetilde{\cM}_t^\pm$ in \eqref{eq:M_t} and of $C_t$ via
% Thm.~\ref{thm:coverage_convergence}. \zs{caption to be updated}}
\label{tab:rq1-coverage}

\setlength{\tabcolsep}{1pt}
\renewcommand{\arraystretch}{1.25}
\resizebox{\linewidth}{!}{%
\begin{tabular}{c c c c c}
\toprule
\large \cereval 
& \large \evbaseline 
& \large \textsc{w/o Surr} 
& \large \ourmethod 
& \large \oracleacq \\
\midrule
\begin{tabular}{@{}c@{}}{\large $0.161$}\\[-1pt]{\color{gray!70}$[0.158,0.164]$}\end{tabular}
& \begin{tabular}{@{}c@{}}{\large $0.030$}\\[-1pt]{\color{gray!90}$[0.029,0.032]$}\end{tabular}
& \begin{tabular}{@{}c@{}}{\large $0.030$}\\[-1pt]{\color{gray!90}$[0.029,0.032]$}\end{tabular}
& \begin{tabular}{@{}c@{}}{\large $0.021$}\\[-1pt]{\color{gray!90}$[0.020,0.023]$}\end{tabular}
& \begin{tabular}{@{}c@{}}{\large $0.028$}\\[-1pt]{\color{gray!90}$[0.027,0.030]$}\end{tabular} \\
\bottomrule
\end{tabular}%
}
\end{minipage}%
\hfill
\begin{minipage}[t]{0.49\linewidth}
\centering
\captionof{table}{Median number of evaluated samples to reach
$\epsilon=0.05$, across the six surrogate-target
pairs and $50$ independent runs. We mark the relative savings of \ourmethod{} vs.\
\evbaseline, and the \textbf{top-performer} in bold. The \oracleacq{} is
included for reference only.}
\label{tab:labels-to-eps}

\setlength{\tabcolsep}{4pt}
\renewcommand{\arraystretch}{1.15}
\resizebox{\linewidth}{!}{%
\begin{tabular}{l l l l}
\toprule
& \textbf{SST-2} & \textbf{MMLU} & \textbf{AG\,News} \\
& {\scriptsize ($N{=}68{,}221$)}
& {\scriptsize ($N{=}115{,}700$)}
& {\scriptsize ($N{=}127{,}600$)} \\
\midrule
\evbaseline
& $6{,}889$
& $8{,}144$
& $15{,}336$ \\
\textsc{w/o Surr}
& $3{,}844_{\,(-44\%)}$
& $5{,}110_{\,(-37\%)}$
& $8{,}298_{\,(-46\%)}$ \\
\ourmethod
& $\mathbf{3{,}156}_{\,(-54\%)}$
& $\mathbf{3{,}242}_{\,(-60\%)}$
& $\mathbf{5{,}840}_{\,(-62\%)}$ \\
\midrule
{\color{gray}\oracleacq}
& {\color{gray}$2{,}698$}
& {\color{gray}$1{,}923$}
& {\color{gray}$4{,}238$} \\
\bottomrule
\end{tabular}%
}
\end{minipage}
\end{table*}
\begin{theorem}\label{thm:optimal}
\vspace{-0.1em}
Under Assumptions~\ref{ass:loss_bound}-\ref{ass:beta}, fix a round $t$. If not all $\ell(f(\x_j),y_j)-\tilde{\ell}_f(\x_j), j\in \cJ_{t-1},$
are zero, the following proposal distribution minimizes the variance of $\hat R_t$ and $\hat S_t$:~\footnote{This is stepwise optimality in round $t$. Optimizing the cumulative conditional variance term in Thm.~\ref{thm:conf_sequen} is a trajectory-level adaptive stochastic optimization problem; related adaptive optimization problems can be NP-hard~\citep{Golovin:Krause2011}.}
\[
q_t^*(j\mid i_{1:t-1},D_N)
=
\frac{\left|\ell(f(\x_j),y_j)-\tilde{\ell}_f(\x_j)\right|}{\sum_{s\in \cJ_{t-1}}\left|\ell(f(\x_s),y_s)-\tilde{\ell}_f(\x_s)\right|}\ .
\]
In contrast, if all $\ell(f(\x_j),y_j)-\tilde{\ell}_f(\x_j)$
are zero, then every proposal distribution $q_t$ is optimal.
\end{theorem}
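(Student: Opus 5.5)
Fix round $t$ and condition on $\cF_{t-1}$, so the pool $D_N$, the labels, the surrogate scores and the past indices $i_{1:t-1}$ are all fixed. Write $r_j=\ell(f(\x_j),y_j)-\tilde{\ell}_f(\x_j)$ for $j\in\cJ_{t-1}$. By Thm.~\ref{thm:signal_unbias}, the conditional mean of $\hat S_t$ is $R_N$, which does not depend on $q_t$. The second term of the conditional variance, $(\sum_j r_j/N)^2$, is also free of $q_t$. So minimizing $\mathrm{Var}[\hat S_t\mid\cF_{t-1}]$ over proposals reduces to minimizing
\[
\Phi(q)=\sum_{j\in\cJ_{t-1}}\frac{r_j^2}{q_j}
\]
over the simplex $\{q_j\ge0,\ \sum_j q_j=1\}$. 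Here we take $r_j^2/q_j=0$ when $r_j=0$ and $q_j=0$, and $+\infty$ when $r_j\ne0$ and $q_j=0$. Assumption~\ref{ass:beta} presumably enforces the positivity needed for the IPW term.

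To minimize $\Phi$, apply Cauchy--Schwarz:
\[
\Big(\sum_j|r_j|\Big)^2=\Big(\sum_j \frac{|r_j|}{\sqrt{q_j}}\sqrt{q_j}\Big)^2\le \Phi(q)\sum_j q_j=\Phi(q).
\]
Equality holds iff $\sqrt{q_j}\propto|r_j|/\sqrt{q_j}$, i.e.\ $q_j\propto|r_j|$. If some $r_j\neq0$, the normalizer is positive, and $q_t^*$ attains the bound. Hence $q_t^*$ is optimal, and it is the unique optimum among proposals supported where needed. If all $r_j=0$, then $\hat S_t$ equals the first term of \eqref{eq:signal_estimator} for every choice, so its variance is $0$ for any $q_t$ and every proposal is optimal.

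For $\hat R_t$, I will use the decomposition in Thm.~\ref{thm:lure_unbias}. The term $\mathrm{Var}(\ell)/N$ does not involve sampling. The remaining term is the variance of $\frac1t\sum_m c_m a_m$ given $D_N,\tilde\ell_f$. Presumably, by \eqref{eq:cm}–\eqref{eq:am}, $a_m$ is a martingale-difference-type quantity built from the IPW residual $r_{i_m}/((N-m+1)q_m)$ minus the remaining mean residual, and the $c_m$ are deterministic weights. Cross terms then vanish by the tower property, so the variance equals $\frac1{t^2}\sum_m c_m^2\,\bE[\mathrm{Var}(a_m\mid\cF_{m-1})]$. The round-$t$ proposal affects only the $m=t$ summand, and by the $1/\bE$-free structure it does so only through $\mathrm{Var}(a_t\mid\cF_{t-1})$. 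That quantity is an affine function of the same $\Phi(q_t)$ with a positive coefficient, namely $\Phi/(N-t+1)^2$ minus a $q$-free square. So the same Cauchy--Schwarz argument applies pointwise in $\cF_{t-1}$, which also gives the expectation.

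The main obstacle is this last step: confirming from the definitions \eqref{eq:cm}–\eqref{eq:am} that the $a_m$ are conditionally centered, so that cross terms vanish, and that choosing $q_t$ does not change the weights of other rounds. If $c_m$ depends on earlier $q$'s but not on $q_t$, stepwise optimality still holds, which matches the footnote's framing. A secondary point is to handle indices with $r_j=0$ and $q_j^*=0$ correctly, so the IPW unbiasedness of Thm.~\ref{thm:signal_unbias} still holds. It does, because those terms contribute zero residual.
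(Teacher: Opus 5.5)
Your proposal is correct. Your treatment of $\hat S_t$ is exactly the paper's: you condition on $\cF_{t-1}$, drop the $q_t$-free squared-mean term of Thm.~\ref{thm:signal_unbias}, apply Cauchy--Schwarz to $\Phi$, and handle the all-zero case the same way.

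For $\hat R_t$ you take a different and heavier route, through the martingale representation of Thm.~\ref{thm:lure_unbias}. The paper instead notes that only the $m=t$ summand of $\hat R_t$ involves $q_t$, and that $u_{t,t}=1/\bigl((N-t+1)q_t(i_t\mid i_{1:t-1},D_N)\bigr)$. So $\hat R_t$ equals an $\cF_{t-1}$-measurable quantity plus $r_{i_t}/\bigl(t(N-t+1)q_t(i_t\mid i_{1:t-1},D_N)\bigr)$. Hence $\mathrm{Var}(\hat R_t\mid\cF_{t-1})=\Phi(q_t)/\bigl(t^2(N-t+1)^2\bigr)$ minus a $q_t$-free square. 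Since $\bE[\hat R_t\mid\cF_{t-1}]$ is also $q_t$-free, the law of total variance finishes the argument. This sidesteps the step you call the main obstacle, and it also shows that the conditional variance is minimized pointwise.

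Your route still closes. The weights $c_m$ in \eqref{eq:cm} are deterministic, and \eqref{eq:am} gives $\bE[a_m\mid\cF_{m-1}]=\frac1N\sum_{n=1}^N r_n$ for every $m$. Note that $a_m$ adds, rather than subtracts, the already-observed residuals, so the martingale difference is $a_m$ minus the full-pool mean residual. Cross terms therefore vanish, and $q_t$ enters only through $\mathrm{Var}(a_t\mid\cF_{t-1})=\Phi(q_t)/N^2-\bigl(\sum_{j\in\cJ_{t-1}} r_j/N\bigr)^2$. The coefficient here is $1/N^2$, not $1/(N-t+1)^2$; the latter appears only after multiplying by $c_t^2=N^2/(N-t+1)^2$. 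This slip is harmless, since only the coefficient's positivity matters.

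What your route buys is an explicit picture of round $t$'s share of the total variance. Its cost is reliance on the identity $\frac1t\sum_m u_{m,t}r_{i_m}=\frac1t\sum_m c_m a_m$, which the paper borrows from prior work and which can be verified by matching coefficients.

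Your handling of indices with $r_j=0$ and $q^*_j=0$ is more careful than the paper's. $q_t^*$ can violate the floor in Assumption~\ref{ass:beta}, so the minimization should be read over the closed simplex with the convention $0/0=0$, exactly as you set it up.
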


We next translate this variance reduction into a guarantee on efficiency. The following theorem shows that the CIs provide anytime-valid coverage for $R_N$ and shrink as more samples are evaluated.

\begin{theorem}\label{thm:conf_sequen}
Fix $\alpha_1\in(0,1)$ and $c\in(0,1)$. Under Assumptions~\ref{ass:loss_bound}-\ref{ass:betting}, the sequence $\left(\cM_{t}^\pm\right)_{1\le t\le N}$, defined in \eqref{eq:M_t}, is a sequence of $(1-\alpha_1)$ anytime-valid, shrinking CIs for $R_N$. Moreover, for $t\ge t_0$, define
\[
B_t=
\log\log t+
\sum_{i=1}^t
\frac{\mathrm{Var}\left(\hat S_i\mid \cF_{i-1}\right)}
{i\log(i+1)}.
\]
Then, for some constant $G_6>0$ and all $t\ge t_0$,
\[
\mathrm{width}\left(\cM_t^\pm\right)
\le
G_6
\sqrt{
\frac{\log t\log\log t}{t}B_t
}\ ,
\]
where $t_0$ is defined in \eqref{eq:t_0_define} (App.~\ref{app:prof_conf_sequen}). It follows that
$\mathrm{width}\left(\cM_t^\pm\right)=O\left(\log\log t\sqrt{\log t/t}\right)$.
\end{theorem}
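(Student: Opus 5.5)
The plan has two parts: first anytime validity, then a width bound obtained by adapting the predictable-plug-in hedged-capital analysis of Waudby-Smith and Ramdas to our non-i.i.d. signals.

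\textbf{Validity.} Fix the true value $\upsilon=R_N$. By Thm.~\ref{thm:signal_unbias}, $\bE[\bar S_t-\bar\upsilon_t\mid\cF_{t-1}]=0$, because $a_t,b_t$ are predictable and the rescaling is affine. Since $\bar S_t\in[0,1]$ and $\lambda_t^+\in[0,1/\bar\upsilon_t)$, we have $1+\lambda_t^+(\bar S_t-\bar\upsilon_t)\ge 1-\lambda_t^+\bar\upsilon_t>0$. The analogous bound holds for the minus branch. Each product is therefore a nonnegative martingale with initial value $1$, and the convex combination $\cK_t^\pm(R_N)$ is a nonnegative martingale with $\cK_0^\pm=1$.

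Ville's inequality (Definition~\ref{def:eprocess}) then gives $\bP(\exists t\le N:\cK_t^\pm(R_N)\ge 1/\alpha_1)\le\alpha_1$. The running intersection in \eqref{eq:M_t} excludes $R_N$ only on this event, so coverage holds simultaneously for all $t$. Shrinkage will follow from the width bound, since the right-hand side tends to $0$.

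\textbf{Width.} Set $Z_i=\bar S_i-\bar r_i$ with $\bar r_i=(R_N-a_i)/(b_i-a_i)$. For a candidate $\upsilon=R_N+\delta$ with $\delta>0$ (the other sign is symmetric), we have $\bar S_i-\bar\upsilon_i=Z_i-\delta/(b_i-a_i)$. Assumption~\ref{ass:betting} gives boundedness of $b_i-a_i$ and betting fractions of predictable plug-in type, $\lambda_i\asymp\min\{c,\sqrt{2\log(2/\alpha_1)/(\hat\sigma_{i-1}^2 i\log(i+1))}\}$. Using $\log(1+x)\ge x-x^2$ for $|x|\le c<1$, the log-capital of the relevant branch is at least
\[
\log\theta+\sum_{i\le t}\lambda_i\Bigl(\frac{\delta}{b_i-a_i}-Z_i\Bigr)-\sum_{i\le t}\lambda_i^2\Bigl(Z_i-\frac{\delta}{b_i-a_i}\Bigr)^2 .
\]
The candidate is rejected once this exceeds $\log(1/\alpha_1)$. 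Three terms must be controlled.

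\begin{enumerate}
\item $\sum\lambda_iZ_i$ is a martingale with bounded increments. I will bound it by a time-uniform (law-of-iterated-logarithm) Freedman inequality, giving $O\bigl(\sqrt{\sum\lambda_i^2\mathrm{Var}(\hat S_i\mid\cF_{i-1})\log\log t}\bigr)$.
\item The quadratic term splits into $\sum\lambda_i^2\mathrm{Var}(\hat S_i\mid\cF_{i-1})$ plus a martingale remainder. Substituting $\lambda_i^2\lesssim 1/(i\log(i+1))$ turns this into exactly the series appearing in $B_t$.
\item For the drift, $\lambda_i\gtrsim 1/\sqrt{i\log i}$ gives $\sum_{i\le t}\lambda_i\gtrsim\sqrt{t/\log t}$.
\end{enumerate}

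Rejection is therefore guaranteed as soon as $\delta\sqrt{t/\log t}\gtrsim B_t+\sqrt{B_t\log\log t}+\log(1/\alpha_1)$. Rearranging, and absorbing constants and the $\log\log t$ factor, yields $\delta\le G_6\sqrt{\log t\log\log t\,B_t/t}$. The threshold $t_0$ is where the LIL bound activates and the $\lambda$ floor dominates.

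\textbf{Rate.} Thm.~\ref{thm:signal_unbias} together with bounded $q_t$ from below (Assumption~\ref{ass:beta}) makes $\mathrm{Var}(\hat S_i\mid\cF_{i-1})$ uniformly bounded. Hence $\sum 1/(i\log(i+1))=O(\log\log t)$, so $B_t=O(\log\log t)$ and the stated $O(\log\log t\sqrt{\log t/t})$ follows.

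\textbf{Main obstacle.} The delicate step is the uniform-in-time, high-probability control of the random terms (items 1 and 2). These terms are not i.i.d.\ and have data-dependent variances, so I would first try a stitched Freedman/LIL bound with predictable quadratic variation. A further difficulty is that the width bound must hold for all candidates $\upsilon$ at once; I plan to handle this by monotonicity of the capital in $\delta$.
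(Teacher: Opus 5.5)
Your plan follows the paper's proof closely. Validity comes from positivity of the clipped factors, conditional unbiasedness (Thm.~\ref{thm:signal_unbias}), Ville's inequality and the running intersection. The width bound comes from a log-capital lower bound in which a drift of order $\sqrt{t/\log t}$ dominates a dyadically stitched Freedman bound, whose predictable quadratic variation is at most $(\overline\gamma/\underline\Delta)^2\sum_{i\le t}\mathrm{Var}(\hat S_i\mid\cF_{i-1})/(i\log(i+1))$. The rate then follows from the uniform variance bound implied by Assumption~\ref{ass:beta}.

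The main technical difference is the quadratic penalty. The paper simply uses $(\bar S_i-\bar\upsilon_i)^2\le 1$, so the penalty is $\sum_i\lambda_i^2\le\overline\gamma^2(C_0+\log\log(t+1))$ and no further concentration is needed. This loses nothing, because $B_t\ge\log\log t$ anyway. Your split into conditional variance, a second martingale and $\delta$-dependent cross terms is valid, but it costs an extra time-uniform bound and does not improve the stated rate.

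Two details need adjusting:
\begin{enumerate}
\item Assumption~\ref{ass:betting} clips at $1/(\bar\upsilon_t+c)$, not at $c$. You therefore only know $x\ge-1/(1+c)$, and you must use $\log(1+u)\ge u-u^2/(2(1-1/(1+c)))$ as the paper does. Your $\log(1+x)\ge x-x^2$ fails near $-1/(1+c)$ when $c$ is small.
\item For the drift, the paper sums only over $i\in\{\lfloor t/2\rfloor,\ldots,t\}$ with $t\ge 2t_c$, where the clip is inactive.
\end{enumerate}

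Your concern about uniformity over $\upsilon$ is legitimate, and the paper does not address it: its Freedman event is built for a fixed $\upsilon$. Suppose $\tilde\lambda_i^\pm$ does not depend on $\upsilon$, as the paper's proof assumes. Then the simplest fix is that the clip is inactive for $i\ge t_c$. The martingale term is therefore $\upsilon$-free up to at most $t_c$ increments, each of size at most $1/c$. By contrast, monotonicity of $\cK_t^+$ and $\cK_t^-$ in $\upsilon$ alone would still require a union bound over a grid of candidates, because the exclusion radius is random.

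Finally, both arguments give the width bound only on an event of probability at least $1-\omega$, with $G_6$ depending on $\omega$. The statement leaves this implicit.
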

The bound shows that the CI width is driven by the cumulative conditional variance of the per-round signals. By Thm.~\ref{thm:signal_unbias}, this variance is mainly affected by the surrogate residuals $\ell(f(\x),y)-\tilde{\ell}_f(\x)$ and the sampling distribution $q_t$. Without surrogate information, e.g., $\tilde{\ell}_f(\x)=0$, residuals can be large; under uniform sampling, the procedure cannot prioritize high-variance samples. Accurate surrogate scores and suitable non-uniform sampling reduce these residual and sampling effects, thereby shrinking the CI faster and reaching the target precision with fewer evaluated samples.

\textbf{Coverage for population risk and shrinkage rate.} We finally lift the guarantee to population risk.
\begin{theorem}\label{thm:coverage_convergence}
Fix a level $\alpha\in(0,1)$ and choose any split $\alpha_1,\alpha_2\in(0,\alpha)$ with
$\alpha_1+\alpha_2=\alpha$.
Under Assumptions~\ref{ass:loss_bound}-\ref{ass:betting}, the sequence $(C_t)_{1\le t\le N}$ is a sequence of $(1-\alpha)$ anytime-valid, shrinking CIs for the population risk $R$ with width $\mathrm{width}\left(C_t\right)=O\left(\log\log t\sqrt{\log t/t}\right)
+O\left(\sqrt{1/N}\right).$
\end{theorem}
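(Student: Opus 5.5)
The proof combines Thm.~\ref{thm:conf_sequen} with the Hoeffding correction by a union bound over two failure events, and then adds the two width contributions.

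\textbf{Coverage.} Define the events
\[
E_1=\{\forall 1\le t\le N:\ R_N\in\cM_t^\pm\},\qquad E_2=\{|R_N-R|\le \Delta_N(\alpha_2)\}.
\]
Conditional on the pool $D_N$ (and the surrogate scores), Thm.~\ref{thm:conf_sequen} gives $\bP(E_1^c\mid D_N)\le\alpha_1$. Taking expectation over $D_N$ yields the unconditional bound $\bP(E_1^c)\le\alpha_1$. The pool is i.i.d. and $\ell\in[L,U]$, so Hoeffding's inequality gives $\bP(E_2^c)\le\alpha_2$. A union bound then gives $\bP(E_1\cap E_2)\ge1-\alpha$.

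On $E_1\cap E_2$, for every $t$ we have $R=R_N+(R-R_N)\in\cM_t^\pm\oplus[-\Delta_N(\alpha_2),\Delta_N(\alpha_2)]$. Since $R\in[L,U]$ by boundedness of $\ell$, intersecting with $[L,U]$ keeps $R$ in the set, so $R\in C_t$. Hence $\bP(\exists t\le N: R\notin C_t)\le\alpha$ for every $\bP_{\x,y}$, which is the coverage condition in Definition~\ref{def:conf_sequen}.

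\textbf{Width.} For sets in $\bR$, $\mathrm{width}(A\oplus[-\Delta,\Delta])\le\mathrm{width}(A)+2\Delta$, and intersecting with $[L,U]$ cannot increase width. Therefore
\[
\mathrm{width}(C_t)\le \mathrm{width}(\cM_t^\pm)+2(U-L)\sqrt{\log(2/\alpha_2)/(2N)}.
\]
Thm.~\ref{thm:conf_sequen} bounds the first term by $O(\log\log t\sqrt{\log t/t})$ for $t\ge t_0$. For $t<t_0$, the width is trivially bounded by $U-L$, a constant absorbed into the $O(\cdot)$ for finitely many $t$. The second term is $O(\sqrt{1/N})$ with constants depending only on $U-L$ and $\alpha_2$. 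Together these give the stated rate.

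\textbf{Shrinkage.} As $t\to N$ and $N\to\infty$ with $t\to\infty$, both terms vanish, which gives the shrinking property.

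\textbf{Potential subtlety.} The main point needing care is that $\cM_t^\pm$ is empty only in degenerate cases, so that its width is well defined; on $E_1$ it contains $R_N$ and is therefore nonempty. A second point is that the $O(\cdot)$ constant from Thm.~\ref{thm:conf_sequen} must be uniform over pools. That theorem's bound uses $B_t$, and the conditional variances in $B_t$ are bounded uniformly by Assumption~\ref{ass:loss_bound} together with the lower bound on $q_t$ from the algorithmic assumptions. Hence $B_t=O(\log\log t)$ uniformly, and the constant does not depend on the pool.
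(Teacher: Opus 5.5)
Your proposal is correct and follows the paper's proof essentially step for step. It uses the same two events (pool-level coverage from Thm.~\ref{thm:conf_sequen} and the Hoeffding event $|R_N-R|\le\Delta_N(\alpha_2)$) joined by a union bound, the same Minkowski-sum inequality $\mathrm{width}(C_t)\le\mathrm{width}(\cM_t^\pm)+2\Delta_N(\alpha_2)$, and the same uniform bound $\mathrm{Var}(\hat S_i\mid\cF_{i-1})\le (U-L)^2/\beta$ that makes the cumulative-variance term $O(\log\log t)$ regardless of the pool. Your extra remarks are minor clarifications the paper leaves implicit: conditioning on $D_N$ before averaging, checking $R\in[L,U]$, and handling $t<t_0$.
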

Together, these results explain why \ourmethod~provides certifiable and efficient evaluation. Thms.~\ref{thm:lure_unbias}-\ref{thm:consistency} show that the final performance estimator targets the true evaluation risk and is consistent. Thm.~\ref{thm:signal_unbias} ensures the conditional unbiasedness of signal $\hat S_t$ for $e$-processes, while Thm.~\ref{thm:optimal} shows how surrogate scores and sampling probabilities reduce variance, thereby tightening the CI in Thm.~\ref{thm:conf_sequen}. Finally, we show in Thm.~\ref{thm:coverage_convergence} that the resulting anytime-valid CIs shrink with $t$ and $N$, yielding tight certified bounds under the stated sampling conditions; we also provide validation in our experiments.

\section{Experiments}\label{sec:experiments}
In this section, we evaluate \ourmethod~on real LLM evaluation tasks. Specifically, we sequentially curate evaluation samples, update the CI after each round, and stop once the CI width reaches the target precision $\epsilon=0.05$. Under this setup, Sec.~\ref{sec:exp-main-results} tests the anytime-valid guarantee: with $\alpha=0.05$, the CI should cover the true performance with probability at least $1-\alpha$ at the stopping time when the CI width reaches the target precision. Beyond theoretical rigor, we evaluate the efficiency of \ourmethod~by measuring how many samples are evaluated before reaching the target precision, and also study the contribution of \ourmethod's key design components. Sec.~\ref{sec:exp-setup} describes the detailed experimental setup. Sec.~\ref{sec:exp-additional} provides additional analyses. Our code
is available at: \url{https://github.com/zyecs/celeus}.
\begin{figure*}[t]
\centering
\includegraphics[width=\linewidth]{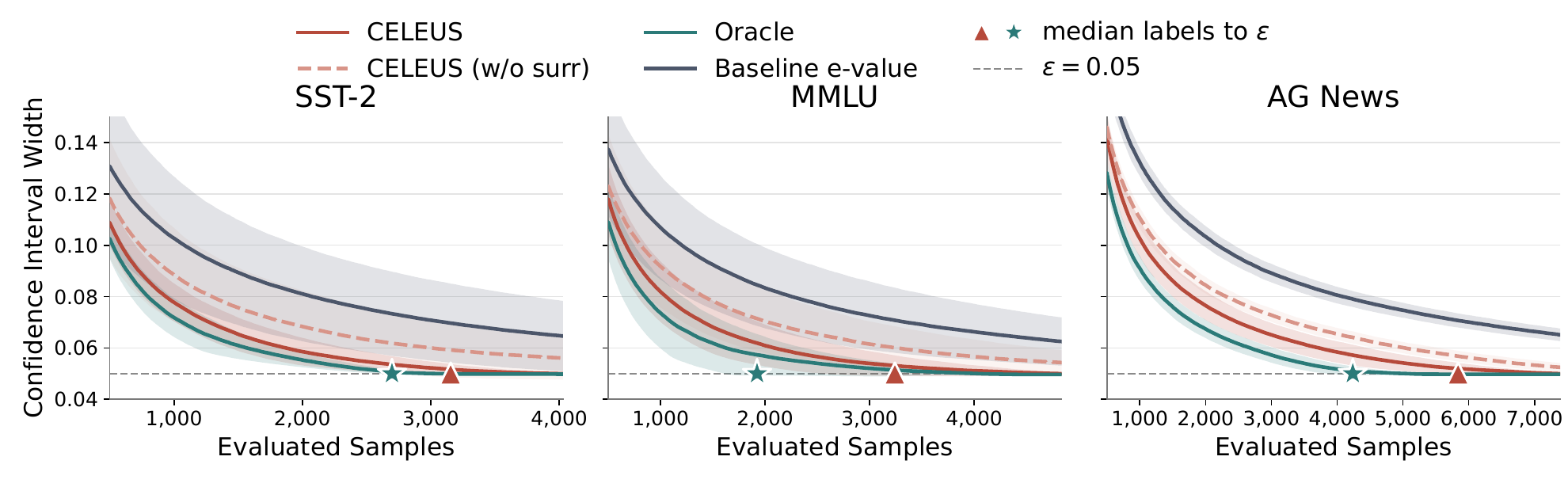}
\caption{
CI width versus the number of evaluated samples. Each trajectory is averaged over $50$ seeds and surrogate--target pairs, with shaded bands showing $\pm 1$ standard deviation. The dashed line marks $\epsilon=0.05$. At the budget where \oracleacq~reaches $\epsilon$, \ourmethod~reduces the CI-width gap between \evbaseline~and \oracleacq~by $90\%/93\%/84\%$ on SST-2/MMLU/AG$\,$News, compared with $53\%/60\%/60\%$ for \nosurr.
}
\label{fig:rq2-efficiency-main}
\end{figure*}
\subsection{Setup}
\label{sec:exp-setup}

\textbf{Datasets.} 
Following~\citet{Berrada:Kossen:Freddie:Razzak:Gal:Rainforth2025}, we evaluate LLMs on downstream tasks. We use three benchmarks with different levels of task difficulty, label spaces, and evaluation-pool sizes: Stanford Sentiment Treebank 2 (SST-2)~\cite{socher-etal-2013-recursive} for binary sentiment classification, AG News Topic Classification (AG$\,$News)~\cite{zhang2015character} for multi-class topic classification, and Massive Multitask Language Understanding (MMLU)~\cite{Hendrycks:Burns:Basart:Zou:Mazeika:Song2021}, a broad multiple-choice benchmark that tests LLMs across a wide range of academic subjects, including mathematics, law, history, and other domains requiring factual knowledge and reasoning. For each benchmark, all available samples are treated as the evaluation pool $D_N$.

\textbf{Surrogate-target pairs.}
The target model is the model whose performance we aim to evaluate, and the surrogate model is used to provide surrogate scores for unevaluated samples. We evaluate multiple target models, including Llama-3-70B, DeepSeek-67B, Mixtral-8$\times$7B, and Qwen-2.5-72B, covering different model families and generations. We instantiate \emph{six} surrogate-target pairs across model-family and generation axes. Specifically, Llama-3-8B is used as the surrogate for four targets: Llama-3-70B (same family, same generation), DeepSeek-67B, Mixtral-8$\times$7B, and Qwen-2.5-72B (all cross-family, same generation). We further use Llama-2-7B as the surrogate for Llama-3-70B (same family, cross-generation) and DeepSeek-67B (cross-family, cross-generation). In all pairs, the surrogate model is substantially smaller without any task-specific training or fine-tuning.

\textbf{Baselines.}
We first use a diagnostic case to examine \cereval~\citep{Wang:Chen:Bo:Xu2025}, showing that its adaptive partitioning can fail to maintain anytime-valid coverage; and in the following comparisons, we focus on methods designed to be anytime-valid:  \ding{172} \evbaseline~applies the standard $e$-processes for bounded mean estimation~\citep{Waudby-Smith:Ramdas2024} under uniform sampling without a surrogate.
\ding{173} \nosurr~ablates the surrogate component of \ourmethod~by setting $\tilde{\ell}_f\equiv 0$ while retaining the uncertainty-guided sampling rule, isolating the contribution of surrogate-assisted approximation.
\ding{174} \oracleacq~uses the variance-optimal sampling rule $q_t^\star$ in Thm.~\ref{thm:optimal}, which requires the per-sample residuals and is unknown in practice; it serves as an oracle reference for the best sampling efficiency attainable by \ourmethod.

\textbf{Evaluation metric and sample selection.}
We report the evaluation results with $0$-$1$ risk in the main text, with additional results on cross-entropy risk provided in App.~\ref{app:ce-results}. For \ourmethod, we use the uncertainty-guided sampling strategies from Rems.~\ref{rem:0-1} and~\ref{rem:cross_entropy}. For the no-surrogate baselines, i.e., \evbaseline~and \nosurr, we use uniform sampling over the unevaluated pool. Detailed results for different sample-selection strategies are provided in App.~\ref{app:acquisition-strategy}.

% \textbf{Evaluation metric and acquisition.}
% We report $0$-$1$ loss in the main text (more results on evaluating cross-entropy are in App.~\ref{app:ce-results}
% For \ourmethod's acquisition, we deploy Strategy~$4$ from Remark~\ref{rem:0-1} \zs{TODO: why}.
% For the no-surrogate baselines (i.e., \evbaseline~and \nosurr), we replace the acquisition rule with uniform sampling over the unevaluated pool.
% We sweep the remaining acquisition strategies of Rem.~\ref{rem:0-1} in App.~\ref{app:acq-ablation}.

\textbf{Hyperparameters.}
We set the overall significance level to $\alpha=0.05$, which controls the miscoverage probability of the CIs, and split it evenly as $\alpha_1=\alpha_2=0.025$ between anytime-valid inference for $R_N$ and the finite-pool-to-population correction. Following previous $e$-process ~\cite{Waudby-Smith:Ramdas2024}, we use predictable fractions in Rem.~\ref{rem:betting-fractions} with $c=0.5$, set $\theta=0.5$ in \eqref{eq:hedhedCP}, and use the scaling strategy in Rem.~\ref{rem:scaling}. For sampling, we set $\beta=0.4$ to control the lower bound of sampling probabilities, with sensitivity analyzed in App.~\ref{app:hyperparameters}. We stop the evaluation at the first time $t$ such that $\mathrm{width}(C_t)\le\epsilon=0.05$.

\begin{figure*}[t]
\centering
\includegraphics[width=\linewidth]{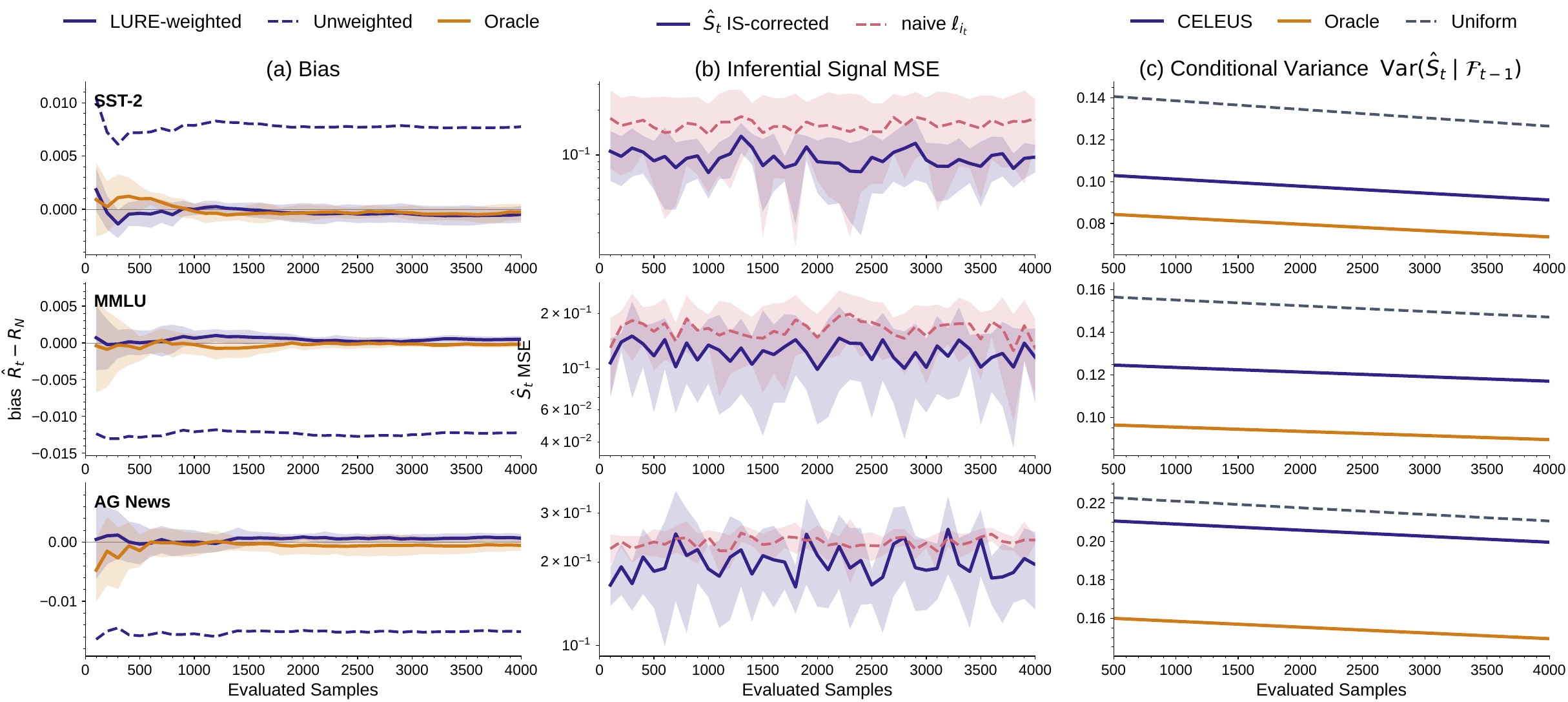}
    \caption{
    Mechanism analysis for \ourmethod, pooled across the six surrogate-target pairs. 
    \emph{Left:} empirical bias of the final estimator $\hat R_t$ in \eqref{eq:lure_estimator} versus the unweighted estimator that removes $u_{m,t}$. 
    \emph{Middle:} MSE of the proposed signal $\hat S_t$ in \eqref{eq:signal_estimator} versus the naive single-step observed risk of the newly evaluated sample.
    \emph{Right:} $\mathrm{Var}(\hat S_t\mid\cF_{t-1})$ under uniform sampling, \ourmethod's uncertainty-guided sampling, and the oracle $q_t^\star$ in Thm.~\ref{thm:optimal}.
    }
\label{fig:rq3-bias-mse}
\end{figure*}
\subsection{Main Results}\label{sec:exp-main-results}
We now evaluate three claims about \ourmethod~under the setup of Sec.~\ref{sec:exp-setup}: its CIs retain anytime-valid coverage rate at the stopping time when the CI width reaches the target precision $\epsilon=0.05$ \textbf{(RQ1)}; it reaches precision~$\epsilon$ with fewer evaluated samples than compared methods \textbf{(RQ2)}; and the final performance estimator~\eqref{eq:lure_estimator}, the inferential signal~\eqref{eq:signal_estimator}, and the uncertainty-guided sampling strategy motivated by Thm.~\ref{thm:optimal} each contribute to this efficiency gain \textbf{(RQ3)}.

\textbf{RQ1: Anytime-valid coverage.} For each method, we repeat the experiment $50$ times; each experiment consists of $1000$ Monte Carlo trials to estimate the miscoverage rate, where each trial regenerates a finite evaluation pool and runs the method (see App.~\ref{app:synthetic} for details). As shown in Table~\ref{tab:rq1-coverage}, \cereval~has a miscoverage rate of $0.161$, far above the level $\alpha=0.05$, indicating that its adaptive-partition algorithm can lose the claimed coverage. In contrast, all $e$-process-based methods stay below $0.05$, with miscoverage rates between $0.021$ and $0.030$.

% Tab.~(\ref{tab:rq1-coverage}) reports empirical miscoverage at $\tau_\epsilon$, pooled over the four pairs of Tab.~(\ref{tab:setup-pairs}) within each dataset. SAVE-Ada misses the true risk on at most $0.5\%$ of $753$--$800$ trajectories per cell, with the Wilson upper bound never above $1.3\%$. \textsc{IID+$e$-process} also stays within nominal, with slightly elevated miscoverage on \textsc{SST-2} ($1.3\%$, $[0.6,2.9]$). These numbers confirm the by-construction anytime-validity of $\widetilde\cM_t^\pm$ in \eqref{eq:M_t} and of $C_t$ via Thm.~(\ref{thm:coverage_convergence}). We hold the \textsc{Cer-Eval} comparison until the broader sweep completes: three seeds per cell is too few for a binomial interval.

% \subsection{Label efficiency: confidence-interval width and labels-to-$\epsilon$ (RQ2)}
% \label{sec:exp-efficiency}

\textbf{RQ2: Efficiency.} 
We compare \ourmethod~by CI width and the number of evaluated samples needed to reach $\epsilon=0.05$. 
Across SST-2, MMLU, and AG\,News, \ourmethod~uses $54\%$, $60\%$, and $62\%$ fewer labels than \evbaseline~(Table~\ref{tab:labels-to-eps}), evaluating only $4.6\%$, $2.8\%$, and $4.6\%$ of each pool. 
Compared with \nosurr, it reduces the required labels by $10$-$23\%$ using $7$-$8$B surrogate models without task-specific training or fine-tuning. 
Figure~\ref{fig:rq2-efficiency-main} shows that this gain holds anytime: \ourmethod's CI width remains below \evbaseline~and \nosurr~at every budget. 
At the budget where \oracleacq~first reaches $\epsilon$, \ourmethod~closes $90\%/93\%/84\%$ of the \evbaseline-to-\oracleacq~width gap, compared with $53\%/60\%/60\%$ for \nosurr. 
This matches Thm.~\ref{thm:conf_sequen}, as the CI width is controlled by the cumulative conditional variance of the signals, which is minimized by the oracle variance-optimal strategy. 
App.~\ref{app:01-coverage} reports miscoverage results, with all methods below $\alpha=0.05$.

\textbf{RQ3: What makes~\ourmethod\ work.}
We isolate three key designs of \ourmethod: the weighted final performance estimator, the surrogate-corrected inferential signal, and the uncertainty-guided sampling rule, and examine how they contribute to certifiable and efficiency LLM evaluation.

\ding{172}~\textit{Estimator unbiasedness.} Fig.~\ref{fig:rq3-bias-mse}~(left) compares the empirical bias of $\hat R_t$ in~\eqref{eq:lure_estimator} with a naive unweighted estimator that removes the weight $u_{m,t}$, using $R_N$ as the reference since the population risk $R$ is unknown. Across all three datasets, $\hat R_t$ stays centered around zero throughout the evaluation process, consistent with the unbiasedness guarantee in Thm.~\ref{thm:lure_unbias}. In contrast, the naive estimator has a \emph{persistent} bias of order $10^{-2}$, about $1\%$ on SST-2 and MMLU and $2\%$ on AG$\,$News. This shows that the weight $u_{m,t}$ is necessary under non-uniform sample selection.

\ding{173}~\textit{Inferential signal.}
Fig.~\ref{fig:rq3-bias-mse}~(middle) compares $\hat S_t$ in~\eqref{eq:signal_estimator} with the naive single-step risk $\ell(f(\x_{i_t}),y_{i_t})$ as estimators of the finite-pool risk $R_N$. On SST-2 and MMLU, $\hat S_t$ achieves $1.5$-$2\times$ lower mean squared error (MSE) across $t$. This improvement comes from the structure of $\hat S_t$: it combines previously observed risks, surrogate scores for unevaluated samples, and an inverse-probability-weighted correction from the newly evaluated sample. Thus, unlike the naive single-step risk, $\hat S_t$ leverages richer information, making the signal more stable.

\ding{174}~\textit{Conditional variance.}
Fig.~\ref{fig:rq3-bias-mse}~(right) reports $\mathrm{Var}(\hat S_t\mid \cF_{t-1})$ over $50$ seeds under uniform sampling, \ourmethod's uncertainty-guided sampling, and the oracle $q_t^\star$. Compared with uniform sampling, \ourmethod~reduces the variance by $27\%$, $20\%$, and $5\%$ on SST-2, MMLU, and AG$\,$News, and closes $\approx67\%$, $\approx53\%$, and $\approx18\%$ of the uniform-to-\oracleacq~variance gap. This matches Thm.~\ref{thm:optimal}: sampling probabilities that better reflect surrogate-residual magnitudes reduce variance more effectively. Moreover, the reduction in $\mathrm{Var}(\hat S_t\mid\cF_{t-1})$ is exactly the variance that enters the width bound of Thm.~\ref{thm:conf_sequen}, explaining the efficiency gain.

Taken together, the three sub-claims under RQ3 close the loop: LURE weights reduce bias, surrogate scores improve signal quality, and uncertainty-guided sampling reduces conditional variance. 
Combining RQ1 and RQ2, they complete empirical confirmation of theoretical claims of \ourmethod.

\vspace{-0.5em}
\subsection{Additional Experiments}
\label{sec:exp-additional}
\vspace{-0.5em}
In addition to the three RQs above, we provide further analyses of sample-selection strategies, hyperparameter sensitivity, and practical overhead. Due to the page limit, we summarize their scope here and defer the full results to the appendix. App.~\ref{app:acquisition-strategy} compares the sample-selection strategies in Rem.~\ref{rem:0-1}. App.~\ref{app:hyperparameters} studies the sensitivity of \ourmethod~to key hyperparameters. App.~\ref{app:wallclock} compares the per-round runtime of \ourmethod~with the baselines. App.~\ref{app:per-configuration-efficiency} reports detailed results for each surrogate-target pair, corresponding to the summary results in Tab.~\ref{tab:labels-to-eps} and Fig.~\ref{fig:rq2-efficiency-main}.

\section{Conclusion}
This paper proposed \ourmethod, an $e$-process-based framework for certifiable and efficient LLM evaluation. \ourmethod~builds anytime-valid CIs from conditionally unbiased signals, uses surrogate-assisted approximation and uncertainty-guided sampling to reduce variance, and stops once the CI reaches a target precision. Theory and experiments show that \ourmethod~preserves coverage and reaches the target precision with fewer evaluated samples than baselines. An interesting direction for future work is to extend \ourmethod~to distribution-shift-aware evaluation scenarios.

\section*{Acknowledgements}
This research was supported by the University of Melbourne’s Research Computing Services and the Petascale Campus Initiative. ZJZ is supported by the Melbourne Research Scholarship and the ARC with grant number DE240101089. FL is supported by the ARC with grant number DE240101089, LP240100101, DP230101540 and the NSF\&CSIRO Responsible AI program with grant number 2303037.

\section*{Impact Statement}

This paper aims to improve the reliability and efficiency of LLM evaluation by providing anytime-valid confidence intervals for evaluation scores. \ourmethod~can reduce the number of expensive evaluated labelled samples needed for certification while maintaining valid uncertainty quantification, which is especially useful when evaluation relies on human annotation or strong LLM judges. By encouraging evaluation with confidence intervals rather than only point estimates, this work may support more transparent and statistically grounded model comparison. However, \ourmethod~does not guarantee that a benchmark is representative of real-world deployment conditions, and surrogate-assisted evaluation introduces additional computational cost. More broadly, our method addresses statistical uncertainty in evaluation, but does not by itself resolve other risks of LLM deployment, such as safety, bias, privacy, or misuse.

% \newpage
\bibliographystyle{unsrtnat}
\bibliography{TETreference}

\newpage
\appendix
\onecolumn

\listofappendices
\newpage

\appsection{Additional discussions}
\label{app:additional_discussions}
\appsubsection{Related work}
\label{app:related_work}

\textbf{LLM evaluation.}
A large body of work on LLM evaluation focuses on benchmark construction, capability coverage, and evaluation criteria \cite{Srivastava:Rastogi:Rao:Shoeb:Abid:Fisch2023,Liang:Bommasani:Lee:Tsipras:Soylu:Yasunaga2022,Zheng:Chiang:Sheng:Zhuang:Wu:Zhuang2023}. Existing studies have developed diverse datasets and protocols for assessing model performance across reasoning, knowledge, instruction following, safety, and robustness \cite{Hendrycks:Burns:Basart:Zou:Mazeika:Song2021,Lin:Hilton:Evans2022,Zhou:Lu:Mishra:Brahma:Basu:Luan2023,Zhang:Zhang:Xue:Khasahmadi:He:Li2023,Wang:Xing:Culotta2021,Zhu:Wang:Zhou:Wang:Chen:Wang2023}. These works substantially improve our understanding of what aspects of LLM behavior should be measured. However, this literature is primarily concerned with \emph{what} to evaluate, rather than with the statistical reliability of the evaluation procedure itself. In common practice, evaluation often reports an average score on a benchmark under a chosen metric, with limited attention to whether the resulting estimate is sufficiently precise for decision-making.

\textbf{Efficient evaluation via sample selection.}
A growing line of work studies how to improve evaluation efficiency through active testing and adaptive sample selection. Existing approaches include cheaper proxy predictors, representative subset selection, and adaptive selection of informative examples \cite{kossen:Farquhar:Gal:Rainforth2022,Ash:Zhang:Krishnamurthy:Langford:Agarwal,Kossen:Farquhar:Sebastian:Gal:Rainforth2021,Berrada:Kossen:Freddie:Razzak:Gal:Rainforth2025}. Active testing and related adaptive evaluation methods show that evaluation can be made more efficient by prioritizing informative examples rather than evaluating the entire test pool uniformly at random \cite{Kossen:Farquhar:Sebastian:Gal:Rainforth2021,Farquhar:Gal:Rainforth2021}. These methods provide a natural framework for pool-based evaluation under limited evaluation budgets \cite{Berrada:Kossen:Freddie:Razzak:Gal:Rainforth2025}. However, while they offer principled sampling strategies and, in some cases, valid estimation under adaptive sampling, their interval guarantees are typically not designed for fully sequential use. In particular, the resulting intervals or stopping criteria are generally not \emph{anytime-valid}, and may fail to remain valid when the evaluator repeatedly monitors the current interval and stops at a data-dependent time.

\textbf{Prediction-powered and semi-supervised inference for evaluation.}
Another closely related direction is prediction-powered inference (PPI) and, more broadly, semi-supervised inference with auxiliary predictors or automatic raters \cite{Angelopoulos:Bates:Fannjiang:Jordan:Zrnic2023,Angelopoulos:Duchi:Zrnic2023,Zheng:Chiang:Sheng:Zhuang:Wu:Zhuang2023}. These methods improve statistical efficiency by combining evaluated samples with auxiliary predictions on unevaluated samples. This paradigm is particularly relevant to LLM evaluation, where automatic raters or surrogate models can provide useful information on unevaluated examples and reduce the number of human evaluations needed \cite{Zhou:Lu:Mishra:Brahma:Basu:Luan2023}. However, existing PPI-style guarantees are typically derived under \emph{i.i.d.} sampling assumptions and rely on asymptotic normality or related large-sample approximations. As a result, they do not directly apply to the adaptive evaluation setting considered here, where evaluated examples are selected sequentially from a finite pool. More importantly, they do not by themselves provide anytime-valid guarantees under continuous monitoring and adaptive stopping.

\textbf{Sequential inference and anytime-valid CIs.}
Our work is also related to sequential inference, confidence sequences, and $e$-processes \cite{Howard:Ramdas:McAuliffe:Sekhon2021,Howard:Ramdas:McAuliffe:Sekhon2020,Waudby-Smith:Ramdas2024}. This line of work develops hypothesis tests and confidence intervals that remain valid under continuous monitoring, making them suitable for settings with adaptive stopping. In contrast to classical fixed-horizon intervals, anytime-valid procedures preserve statistical validity no matter when the analyst chooses to stop. However, many existing constructions are developed for online settings in which observations arrive sequentially from an external stochastic process~\cite{Zrnic:Candes2024,Park:Zecchin:Simeone2025,Xu:Karampatziakis:Mineiro2024,Sfyraki:Wang2026}. In our setting, evaluation begins with a fixed evaluation pool, and the evaluator sequentially selects which examples to evaluate without replacement. This pool-based structure induces a different dependence pattern and allows the procedure to use information from both evaluated and unevaluated samples. Existing sequential inference tools provide the foundation for anytime-valid inference, including confidence sequences for sampling without replacement \cite{Waudby-Smith:Ramdas2020}, but do not directly address the combination of surrogate-assisted estimation, adaptive sample selection, and certifiable LLM evaluation considered here.

\appsubsection{Remarks on practical choices and implementation details}
\label{app:remarks}

We first describe how the uncertainty-guided sampling strategies are instantiated for the two loss families used in our experiments.

\begin{remark}[Surrogate scoring strategies and sample-selection scores for $0$-$1$ loss.]
\label{rem:0-1}
We describe one oracle strategy and three practical strategies for the case of $0$-$1$ loss.

Let $p_f(\cdot \mid \x)$ and $p_g(\cdot \mid \x)$ denote the predictive distributions of the target LLM $f$ and a surrogate model $g$, respectively. Define the true $0$-$1$ loss of the target model by
\[
\ell_{0\text{-}1}(f,\x,y)=\bI\{\hat y_f(\x)\neq y\},
\qquad
\hat y_f(\x)=\arg\max_{y'\in\cY}p_f(y'\mid\x).
\]
For any model $h\in\{f,g\}$, define the predictive dispersion score
\[
\ell_{\mathrm{disp}}(h,\x)=1-\sum_{y\in\cY}p_h^2(y\mid\x).
\]
For $h_1,h_2\in\{f,g\}$, define the hard pseudo-loss and soft pseudo-loss as
\[
\ell_{\mathrm{hard}}(h_1,h_2,\x)
=
\bI\{\hat y_{h_1}(\x)\neq \hat y_{h_2}(\x)\},
\]
and
\[
\ell_{\mathrm{soft}}(h_1,h_2,\x)
=
1-p_{h_1}(\hat y_{h_2}(\x)\mid \x),
\qquad
\hat y_h(\x)=\arg\max_{y\in\cY}p_h(y\mid\x).
\]
Here, the top-1 prediction of $h_2$ is used as a pseudo label for assessing the prediction of $h_1$.

\begin{itemize}
    \item \textbf{Oracle strategy} (true $0$-$1$ loss and surrogate score). If the true $0$-$1$ loss of the target model were accessible, we would set
    \[
    \ell(f(\x_j),y_j)=\ell_{0\text{-}1}(f,\x_j,y_j),
    \qquad
    \tilde{\ell}_f(\x_j)\in
    \left\{
    \ell_{\mathrm{disp}}(f,\x_j),
    \ell_{\mathrm{soft}}(f,g,\x_j)
    \right\}.
    \]
    The sample-selection score is
    \[
    |\ell(f(\x_j),y_j)-\tilde{\ell}_f(\x_j)|.
    \]
    This oracle strategy is infeasible in practice because it requires the true loss values of unevaluated samples.

    \item \textbf{Strategy A} (hard pseudo-loss as target-loss proxy, predictive dispersion as surrogate score). Set
    \[
    \ell(f(\x_j),y_j)\approx \ell_{\mathrm{hard}}(f,g,\x_j),
    \qquad
    \tilde{\ell}_f(\x_j)=\ell_{\mathrm{disp}}(f,\x_j).
    \]
    The sample-selection score is
    \[
    |\ell_{\mathrm{hard}}(f,g,\x_j)-\ell_{\mathrm{disp}}(f,\x_j)|.
    \]
    This compares the hard disagreement between the target and surrogate predictions with the dispersion of the target model's predictive distribution.

    \item \textbf{Strategy B} (predictive dispersion as target-loss proxy, soft pseudo-loss as surrogate score). Set
    \[
    \ell(f(\x_j),y_j)\approx \ell_{\mathrm{disp}}(f,\x_j),
    \qquad
    \tilde{\ell}_f(\x_j)=\ell_{\mathrm{soft}}(f,g,\x_j).
    \]
    The sample-selection score is
    \[
    |\ell_{\mathrm{disp}}(f,\x_j)-\ell_{\mathrm{soft}}(f,g,\x_j)|.
    \]
    This compares the dispersion of the target model's predictive distribution with its soft loss evaluated at the surrogate model's top-1 prediction.

    \item \textbf{Strategy C} (soft pseudo-loss as target-loss proxy, predictive dispersion as surrogate score). Set
    \[
    \ell(f(\x_j),y_j)\approx \ell_{\mathrm{soft}}(f,g,\x_j),
    \qquad
    \tilde{\ell}_f(\x_j)=\ell_{\mathrm{disp}}(f,\x_j).
    \]
    The sample-selection score is
    \[
    |\ell_{\mathrm{soft}}(f,g,\x_j)-\ell_{\mathrm{disp}}(f,\x_j)|,
    \]
    which is numerically identical to that of Strategy~B. However, Strategies~B and~C differ in which quantity plays the role of the target-loss proxy and which plays the role of the surrogate score inside the estimator $\hat R_t$ in \eqref{eq:lure_estimator}. This distinction changes the correction term and can affect the variance of the estimator and the resulting $e$-process.
\end{itemize}

The three practical strategies use the surrogate model in different ways. Strategy~A uses the surrogate model's top-1 prediction to form a hard pseudo-loss, while Strategies~B and~C use the surrogate model's top-1 prediction to form a soft pseudo-loss. In our main $0$-$1$ loss experiments, we use Strategy~A. The other strategies are also compatible with our framework, and we compare them empirically in App.~\ref{app:acquisition-strategy}; see Fig.~\ref{fig:acq-01}.
\end{remark}

\begin{remark}[Surrogate scoring strategies and sample-selection scores for cross-entropy loss.]
\label{rem:cross_entropy}
Recall that the oracle sampling distribution in Theorem~\ref{thm:optimal} is
$q_t^\star(j)\propto |\ell(f(\x_j),y_j)-\tilde{\ell}_f(\x_j)|$, which is unavailable in practice because it depends on the unknown label $y_j$. To obtain a computable sample-selection score, we replace the true cross-entropy loss with label-free proxies.

Let $p_f(\cdot\mid\x)$ and $p_g(\cdot\mid\x)$ denote the predictive distributions of the target LLM $f$ and a surrogate model $g$, respectively. The true cross-entropy loss of the target model is
\[
\ell_{\mathrm{ce}}(f,\x,y)=-\log p_f(y\mid\x).
\]
For any model $h\in\{f,g\}$, define its predictive entropy as
\[
H(h,\x)=-\sum_{y\in\cY}p_h(y\mid\x)\log p_h(y\mid\x).
\]
We also define the mode loss of the target model as
\[
\ell_{\mathrm{mode}}(f,\x)
=
-\log p_f(\hat y_f(\x)\mid\x),
\qquad
\hat y_f(\x)=\arg\max_{y\in\cY}p_f(y\mid\x),
\]
which uses the target model's own top-1 prediction as a pseudo label.

\begin{itemize}
    \item \textbf{Oracle strategy} (true cross-entropy loss and surrogate score). If the true cross-entropy loss were accessible, we would set
    \[
    \ell(f(\x_j),y_j)=\ell_{\mathrm{ce}}(f,\x_j,y_j),
    \qquad
    \tilde{\ell}_f(\x_j)\in
    \left\{
    H(g,\x_j), H(f,\x_j), \ell_{\mathrm{mode}}(f,\x_j)
    \right\}.
    \]
    The sample-selection score is
    \[
    |\ell(f(\x_j),y_j)-\tilde{\ell}_f(\x_j)|.
    \]
    This oracle strategy is infeasible because it requires the true loss values of unevaluated samples.

    \item \textbf{Strategy A} (mode loss as target-loss proxy, entropy as surrogate score). Set
    \[
    \ell(f(\x_j),y_j)\approx \ell_{\mathrm{mode}}(f,\x_j),
    \qquad
    \tilde{\ell}_f(\x_j)=H(f,\x_j).
    \]
    The sample-selection score is
    \[
    |\ell_{\mathrm{mode}}(f,\x_j)-H(f,\x_j)|.
    \]
    This compares the loss at the target model's top-1 prediction with the entropy of its predictive distribution.

    \item \textbf{Strategy B} (entropy as target-loss proxy, mode loss as surrogate score). Set
    \[
    \ell(f(\x_j),y_j)\approx H(f,\x_j),
    \qquad
    \tilde{\ell}_f(\x_j)=\ell_{\mathrm{mode}}(f,\x_j).
    \]
    The sample-selection score is
    \[
    |H(f,\x_j)-\ell_{\mathrm{mode}}(f,\x_j)|,
    \]
    which is numerically identical to that of Strategy~A. However, Strategies~A and~B differ in which quantity is used as the target-loss proxy and which is used as the surrogate score inside the estimator $\hat R_t$ in \eqref{eq:lure_estimator}. This changes the correction term and can affect the variance of the estimator and the resulting $e$-process.

    \item \textbf{Strategy C} (target-surrogate entropy gap). Set
    \[
    \ell(f(\x_j),y_j)\approx H(f,\x_j),
    \qquad
    \tilde{\ell}_f(\x_j)=H(g,\x_j).
    \]
    The sample-selection score is
    \[
    |H(f,\x_j)-H(g,\x_j)|.
    \]
    This compares the predictive entropies of the target and surrogate models, capturing inter-model disagreement at the distributional level.
\end{itemize}

Strategy~C is the only strategy that uses a separate surrogate model $g$, while Strategies~A and~B are self-referential and use only the target model $f$. In our main cross entropy experiments, we use Strategy~A. The other strategies are also compatible with our framework, and we compare them empirically in App.~\ref{app:acquisition-strategy}; see Fig.~\ref{fig:acq-ce}.
\end{remark}

Overall, these practical strategies are uncertainty-guided sample-selection rules: they use label-free quantities to approximate the oracle residual magnitude $|\ell(f(\x_j),y_j)-\tilde{\ell}_f(\x_j)|$ in Theorem~\ref{thm:optimal}. Since this residual determines the variance-optimal sampling distribution, assigning larger probabilities to samples with larger proxy residuals aims to reduce the conditional variance of $\hat S_t$ and hence accelerate CI-width shrinkage.

Having specified the practical sample-selection strategies, we next describe the implementation choices used to construct the $e$-processes.
\begin{remark}\label{rem:betting-fractions}
Following~\cite{Waudby-Smith:Ramdas2024}, we use predictable, time-adaptive fractions to obtain Type-I error control uniformly over time, so that the resulting confidence sequence remains valid under data-dependent stopping. Since the scaling bounds $(a_t,b_t)$ may vary across rounds, each candidate value $\upsilon$ is also scaled at round $t$ as $\bar\upsilon_t=(\upsilon-a_t)/(b_t-a_t)$. Thus, the fractions are chosen based on the centered scaled signal $\bar S_i-\bar\upsilon_i$, which has conditional mean zero under $\mathcal H_0(\upsilon)$. Specifically, we set
\[
\lambda_t^+(\upsilon)=\sqrt{\frac{2\log(2/\alpha_1)}{\hat{\sigma}_{t-1}^2(\upsilon)\, t\log(t+1)}}\wedge\frac{1}{\bar\upsilon_t+c},
\qquad
\lambda_t^-(\upsilon)=\sqrt{\frac{2\log(2/\alpha_1)}{\hat{\sigma}_{t-1}^2(\upsilon)\, t\log(t+1)}}\wedge\frac{1}{1-\bar\upsilon_t+c},
\]
where $c>0$ is a small constant, and
\[
\hat{\sigma}_t^2(\upsilon)=\frac{1/4+\sum_{i=1}^t(\bar S_i-\bar\upsilon_i)^2}{t+1}.
\]
The clipping terms ensure that $\lambda_t^+(\upsilon)$ and $\lambda_t^-(\upsilon)$ stay within their admissible ranges, while $\hat{\sigma}_{t-1}^2(\upsilon)$ adapts the fractions to the past variability of the centered scaled signals.
\end{remark}

Finally, we describe how the scaling bounds are chosen so that the per-round signal can be mapped to a bounded range before applying the $e$-process construction.
\begin{remark}\label{rem:scaling}
The scaling bounds $a_t$ and $b_t$ only need to be predictable and satisfy $a_t\le \hat S_t\le b_t$ almost surely. In practice, one simple choice is to use fixed bounds $a_t=a$ and $b_t=b$ for all rounds. Suppose that
\[
q_t(j\mid i_{1:t-1},D_N)\ge q_{\min}>0
\]
for all $t$ and all $j\in\cJ_{t-1}$. We can rewrite the signal as
\begin{eqnarray*}
\hat S_t
&=&
\frac{
\sum_{m=1}^{t-1}\ell(f(\x_{i_m}),y_{i_m})
+
\sum_{j\in\cJ_t}\tilde{\ell}_f(\x_j)
+
\ell(f(\x_{i_t}),y_{i_t})
}{N}
\\
&&+
\frac{1-q_t(i_t\mid i_{1:t-1},D_N)}
{Nq_t(i_t\mid i_{1:t-1},D_N)}
\left(
\ell(f(\x_{i_t}),y_{i_t})-\tilde{\ell}_f(\x_{i_t})
\right).
\end{eqnarray*}
The first term is an average of observed losses and surrogate loss scores, and therefore lies in $[L,U]$. The second term is bounded in absolute value by
\[
\frac{(U-L)(1-q_{\min})}{Nq_{\min}}.
\]
Thus, a valid fixed choice is
\[
a = L - \frac{(U-L)(1-q_{\min})}{Nq_{\min}},
\qquad
b = U + \frac{(U-L)(1-q_{\min})}{Nq_{\min}},
\]
which ensures $\hat S_t\in[a,b]$ almost surely for all $t$. Under Assumption~\ref{ass:beta}, we can take $q_{\min}=\beta/N$.
\end{remark}

\appsubsection{From per-round signals to performance estimation}
\label{appendix:signal_to_estimator}

In this appendix, we explain how the final performance estimator in \eqref{eq:lure_estimator} is obtained by aggregating the correction information exposed by the inferential signal in \eqref{eq:signal_estimator}. The inferential signal is designed for $e$-process construction, while the final estimator uses the same surrogate-correction structure to report a scalar performance estimate at stopping.

Recall that the per-round inferential signal is
\[
\hat{S}_t
=
\frac{
\sum_{m=1}^{t-1}\ell\left(f(\x_{i_m}),y_{i_m}\right)
+
\sum_{j\in\cJ_{t-1}}\tilde{\ell}_f(\x_j)
}{N}
+
\frac{
\ell\left(f(\x_{i_t}),y_{i_t}\right)
-
\tilde{\ell}_f(\x_{i_t})
}{
Nq_t(i_t\mid i_{1:t-1},D_N)
}.
\]
Equivalently, it can be written as
\[
\hat S_t
=
\frac{1}{N}\sum_{n=1}^{N}\tilde{\ell}_f(\x_n)
+
\frac{1}{N}\sum_{m=1}^{t-1}
\left(
\ell\left(f(\x_{i_m}),y_{i_m}\right)-\tilde{\ell}_f(\x_{i_m})
\right)
+
\frac{
\ell\left(f(\x_{i_t}),y_{i_t}\right)-\tilde{\ell}_f(\x_{i_t})
}{
Nq_t(i_t\mid i_{1:t-1},D_N)
}.
\]
This form shows that each round contributes a surrogate correction term
\[
\ell\left(f(\x_{i_t}),y_{i_t}\right)-\tilde{\ell}_f(\x_{i_t}),
\]
which measures the discrepancy between the observed risk and the surrogate risk score on the selected sample. The signal uses this correction in an inverse-probability-weighted way so that it remains conditionally unbiased for $R_N$ at each round.

To obtain the final estimator at horizon $t$, we aggregate the correction terms revealed over the first $t$ rounds. Before observing the risk at round $t$, the horizon-$t$ estimator has the predictable component
\[
B_{t-1}^{(t)}
=
\frac{1}{N}\sum_{n=1}^{N}\tilde{\ell}_f(\x_n)
+
\frac{1}{t}\sum_{m=1}^{t-1}u_{m,t}
\left(
\ell\left(f(\x_{i_m}),y_{i_m}\right)-\tilde{\ell}_f(\x_{i_m})
\right).
\]
Relative to this predictable component, the inferential signal can be decomposed as
\[
\hat S_t = B_{t-1}^{(t)}+\hat Z_t,
\]
where
\begin{align*}
\hat Z_t
&=
\left[
\frac{1}{N}\sum_{n=1}^{N}\tilde{\ell}_f(\x_n)
+
\frac{1}{N}\sum_{m=1}^{t-1}
\left(
\ell\left(f(\x_{i_m}),y_{i_m}\right)-\tilde{\ell}_f(\x_{i_m})
\right)
-
B_{t-1}^{(t)}
\right]
\\
&\qquad
+
\frac{
\ell\left(f(\x_{i_t}),y_{i_t}\right)-\tilde{\ell}_f(\x_{i_t})
}{
Nq_t(i_t\mid i_{1:t-1},D_N)
}.
\end{align*}
Here, $\hat Z_t$ is the one-step correction used to turn the predictable component into a conditionally unbiased signal for $e$-process construction.

For performance estimation, we use the same predictable component but aggregate the observed correction terms with horizon-dependent weights. After observing the risk at round $t$, the final estimator is
\[
\hat{R}_t
=
B_{t-1}^{(t)}
+
\frac{1}{t}u_{t,t}
\left(
\ell\left(f(\x_{i_t}),y_{i_t}\right)-\tilde{\ell}_f(\x_{i_t})
\right).
\]
Substituting the definition of $B_{t-1}^{(t)}$ yields
\[
\hat{R}_t
=
\frac{1}{N}\sum_{n=1}^{N}\tilde{\ell}_f(\x_n)
+
\frac{1}{t}\sum_{m=1}^{t}u_{m,t}
\left(
\ell\left(f(\x_{i_m}),y_{i_m}\right)-\tilde{\ell}_f(\x_{i_m})
\right),
\]
where
\[
u_{m,t}
=
1+\frac{N-t}{N-m}
\left(
\frac{1}{(N-m+1)q_m(i_m\mid i_{1:m-1},D_N)}-1
\right).
\]
Thus, the estimator in \eqref{eq:lure_estimator} is obtained by aggregating the same surrogate correction terms that appear in the inferential signal. The signal $\hat S_t$ uses a one-step correction for valid $e$-process updates, whereas $\hat R_t$ aggregates the correction information to produce the final performance estimate.

\appsubsection{Population Interpretation for Non-i.i.d. Evaluation Pools}
\label{app:noniid}

In this appendix, we clarify the role of the \emph{i.i.d.}\ assumption in our population-level guarantee. This assumption is not needed for the e-process confidence interval constructed for the finite-pool risk. It is only used in the finite-pool-to-population correction that transfers the finite-pool guarantee to the population risk.

Conditional on the given evaluation pool, \ourmethod~first constructs anytime-valid confidence intervals for the finite-pool risk
\[
R_N =
\frac{1}{N}\sum_{j=1}^N \ell(f(x_j),y_j).
\]
This finite-pool e-process component is the core sequential guarantee of \ourmethod. It follows from the construction of conditionally unbiased signals and remains valid even when the evaluation pool is fixed in advance or not sampled \emph{i.i.d.}\ from a population distribution.

The population-level guarantee targets
\[
R(f;P_{x,y})
=
\mathrm{E}_{(x,y)\sim P_{x,y}}
\left[
\ell(f(x),y)
\right].
\]
To obtain this guarantee, \ourmethod~adds a correction term that controls the gap between the finite-pool risk $R_N$ and the population risk $R(f;P_{x,y})$. In the main text, this correction is instantiated using Hoeffding's inequality~\cite{Hoeffding1963}, which is appropriate when the evaluation pool consists of independent samples drawn from the target distribution. This yields
\[
\Delta_N(\alpha_2)
=
(U-L)
\sqrt{\frac{\log(2/\alpha_2)}{2N}}\ .
\]
Thus, the Hoeffding correction should be viewed as the finite-pool-to-population component of \ourmethod~under the \emph{i.i.d.}\ pool assumption, rather than as a requirement for the finite-pool e-process component itself.

For non-\emph{i.i.d.}\ evaluation pools, the finite-pool e-process component remains unchanged, but the correction from $R_N$ to $R(f;P_{x,y})$ should be modified according to the pool-generation assumption. When the dependence structure is known, Hoeffding's inequality can be replaced by a concentration or generalization bound adapted to that structure. For example, \citet{Kontorovich:Ramanan2008} provide concentration inequalities for functions of dependent random variables. Their results assume that dependence can be controlled by mixing coefficients, which quantify how strongly future variables depend on the past. Under this condition, they obtain a Hoeffding-like deviation bound with an additional dependence factor determined by these coefficients. \citet{Mohri:Rostamizadeh2010} study a related learning-theoretic setting for stationary $\phi$-mixing and $\beta$-mixing processes. Their bounds assume stationarity, bounded losses, and dependence that decays over time according to the corresponding mixing coefficients. For algorithm-dependent generalization bounds, they additionally require stability of the learning algorithm. Under these conditions, they show that generalization bounds can be extended beyond the \emph{i.i.d.}\ setting, with additional terms depending on the mixing rate. These results illustrate how the finite-pool-to-population correction in \ourmethod~can be replaced when the evaluation pool has a known weak-dependence structure.

In summary, \ourmethod~always provides an anytime-valid guarantee for the finite-pool risk $R_N$ conditional on the evaluation pool. Interpreting the resulting interval as a population-level guarantee additionally requires an assumption connecting the evaluation pool to the target population. Under the \emph{i.i.d.}\ assumption, \ourmethod~uses Hoeffding's inequality for this correction. Under suitable non-\emph{i.i.d.}\ assumptions, one can instead use dependent-data concentration or generalization bounds. Without such assumptions, the conservative interpretation is that \ourmethod~certifies performance on the given evaluation pool, while broader population or deployment claims depend on how representative that pool is.

\appsubsection{Limitations} \label{app:limit}
\ourmethod~mainly improves efficiency by reducing the number of samples that require target-model evaluation, which is often the main source of labelled-data or annotation cost in certifiable evaluation. In this paper, efficiency therefore mainly refers to reducing the number of evaluated labelled samples needed to reach a target confidence interval width, i.e., reducing the cost of obtaining target evaluation scores. As shown in Table~\ref{tab:labels-to-eps} and Figure~\ref{fig:rq2-efficiency-main}, using surrogate scores can substantially reduce the required number of target-model evaluated samples compared with EVALUE and the no-surrogate variant of \ourmethod. However, this label efficiency does not necessarily mean that \ourmethod~has lower computational cost in every setting. Surrogate-assisted evaluation introduces additional cost for obtaining surrogate scores, as well as additional per-round computation for surrogate correction, sample selection, and confidence interval updates. Thus, the per-round runtime of \ourmethod~can be higher than the standard EVALUE, as illustrated by the per-round computation costs in Table~\ref{tab:wallclock}.

The practical computational cost of surrogate scoring depends on how the surrogate is obtained and used. Some prior evaluation pipelines use commercial auto-raters~\cite{Li:Zhang:Dubois:Taori:Gulrajani:Guestrin:Liang:Hashimoto2023} or strong LLM judges~\cite{Zheng:Chiang:Sheng:Zhuang:Wu:Zhuang2023} as surrogate evaluators, whose monetary and computational costs can be substantial. The computational cost may also increase if the surrogate model is updated or fine-tuned during the evaluation process. In contrast, following \citet{Berrada:Kossen:Freddie:Razzak:Gal:Rainforth2025}, we use fixed smaller open-source models as surrogates without task-specific training or online updates. Specifically, our experiments use 7--8B surrogate models to evaluate much larger target models, including 67--72B dense models and an 8$\times$7B MoE model. This design keeps the surrogate computational cost relatively small, avoids extra monetary costs associated with commercial surrogate evaluators, and still provides clear reductions in the number of evaluated labelled samples.

\appsection{Detailed Proofs for Our Theoretical Results}\label{app:proof}

\appsubsection{Assumptions}\label{app:assumptions}

In this appendix, we collect the standing assumptions used in the theoretical analysis. The first assumption is an application-level boundedness condition on the evaluation scores. In most evaluation tasks, this condition is mild and can be satisfied by normalizing the metric or clipping the scores to a fixed range. The remaining assumptions are algorithmic conditions: they concern how the procedure selects samples, scales the inferential signals, and chooses the predictable fractions used in the $e$-process construction. In \ourmethod, these algorithmic conditions are designed to be satisfied and impose no additional restrictions on the application scenario.

\begin{assumption}\label{ass:loss_bound}
The evaluation risk score and the surrogate risk score are uniformly bounded: there exist constants $L,U\in\bR$ such that
\[
L\le \ell(f(\x),y)\le U,
\qquad
L\le \tilde{\ell}_f(\x)\le U.
\]
\end{assumption}

\begin{assumption}\label{ass:beta}
There exists a constant $\beta\in(0,1]$ such that
\[
\min_{j\in\cJ_{t-1}} q_t(j\mid i_{1:t-1},D_N)\ge \frac{\beta}{N-t+1},
\qquad
\forall N\in\mathbb Z_+,\ \forall 1\le t\le N.
\]
\end{assumption}

Assumption~\ref{ass:beta} ensures that every remaining sample has a nonzero chance of being selected. This prevents inverse-probability weights from becoming ill-defined or arbitrarily large.

\begin{assumption}\label{ass:scaling}
The scaling intervals are non-degenerate and their widths are uniformly bounded above and below: there exist constants $0<\underline\Delta\le \overline\Delta<\infty$ such that
\[
\underline\Delta\le b_t-a_t\le \overline\Delta,
\qquad
\forall 1\le t\le N.
\]
\end{assumption}

Assumption~\ref{ass:scaling} is an algorithmic condition on the scaling step. It ensures that the affine transformation used to map the inferential signal to the scaled space is well-defined and stable.

\begin{assumption}\label{ass:betting}
Fix a constant $c>0$. For each candidate value $\upsilon$, choose
\[
\lambda_t^{+}(\upsilon)
=
|\tilde\lambda_t^{+}(\upsilon)|
\wedge
\frac{1}{\bar\upsilon_t+c},
\qquad
\lambda_t^{-}(\upsilon)
=
|\tilde\lambda_t^{-}(\upsilon)|
\wedge
\frac{1}{1-\bar\upsilon_t+c},
\]
where $\bigl(\tilde\lambda_t^{+}(\upsilon)\bigr)_{1\le t\le N}$ and
$\bigl(\tilde\lambda_t^{-}(\upsilon)\bigr)_{1\le t\le N}$ are real-valued predictable sequences. The clipping ensures that the resulting fractions remain in the admissible ranges required for the $e$-process construction. We assume that there exist constants
$0<\underline\gamma\le \overline\gamma<\infty$ such that, for every candidate value $\upsilon$ and all $1\le t\le N$,
\[
\frac{\underline\gamma}{\sqrt{t\log(t+1)}}
\le
|\tilde\lambda_t^{\pm}(\upsilon)|
\le
\frac{\overline\gamma}{\sqrt{t\log(t+1)}} .
\]
\end{assumption}

Assumption~\ref{ass:betting} specifies the predictable fractions used in the $e$-process. This is also an algorithmic condition: the fractions are chosen by the evaluator before observing the risk at each round and can therefore be designed to satisfy the stated bounds.

\appsubsection{Proofs of Theorem~\ref{thm:lure_unbias}}
\begin{thmbis}{thm:lure_unbias}
Under Assumption~\ref{ass:loss_bound}-\ref{ass:beta}, the estimator $\hat R_t$ as defined in \eqref{eq:lure_estimator} satisfies $\bE[\hat R_t]=R$ with
\begin{align*}
\mathrm{Var}[\hat R_t]
=
\frac{\mathrm{Var}(\ell(f(\x),y))}{N}
+
\bE\left[
\mathrm{Var}\left[
\frac{1}{t}\sum_{m=1}^{t}c_m a_m
\mid D_N,\tilde{\ell}_f
\right]
\right],
\end{align*}
where
\begin{equation}\label{eq:cm}
c_m=\frac{N(N-t)}{(N-m)(N-m+1)},
\end{equation}
and
\begin{equation}\label{eq:am}
a_m=
\frac{\ell\left(f(\x_{i_m}),y_{i_m}\right)-\tilde{\ell}_f(\x_{i_m})}
{Nq_m(i_m\mid i_{1:m-1},D_N)}
+
\frac{1}{N}\sum_{s=1}^{m-1}
\left(
\ell\left(f(\x_{i_s}),y_{i_s}\right)-\tilde{\ell}_f\left(\x_{i_s}\right)
\right).
\end{equation}
\end{thmbis}

We present the detailed proof of Theorem~\ref{thm:lure_unbias} as follows.

\begin{proof}
Under Assumption~\ref{ass:loss_bound}-\ref{ass:beta}, we first prove the unbiasedness of $\hat R_t$, and then investigate its variance. Recall the estimator in \eqref{eq:lure_estimator}:
\begin{equation}\label{eq:lure_mean_proof}
\bE[\hat R_t]
=
\bE\left[
\frac{1}{N}\sum_{n=1}^{N}\tilde{\ell}_f\left(\x_n\right)
\right]
+
\bE\Bigg[
\underbrace{
\frac{1}{t}\sum_{m=1}^{t}u_{m,t}
\left(
\ell\left(f(\x_{i_m}),y_{i_m}\right)-\tilde{\ell}_f(\x_{i_m})
\right)
}_{\circled{a}}
\Bigg],
\end{equation}
where
\[
u_{m,t}
=
1+\frac{N-t}{N-m}
\left(
\frac{1}{(N-m+1)q_m(i_m\mid i_{1:m-1},D_N)}-1
\right).
\]
By Assumption~\ref{ass:beta}, the proposal probabilities are strictly positive on the remaining pool, so the inverse-probability terms above are well defined.

The key step is to show that the residual-correction term $\circled{a}$ exactly matches, in expectation, the average residual over the entire pool. To this end, we invoke the unbiasedness results for adaptive sampling without replacement from \citet{Farquhar:Gal:Rainforth2021}, which imply that $\circled{a}$ can be rewritten as
\[
\circled{a}
=
\frac{1}{t}\sum_{m=1}^{t}c_m a_m,
\qquad
c_m=\frac{N(N-t)}{(N-m)(N-m+1)},
\]
and
\[
a_m=
\frac{\ell\left(f(\x_{i_m}),y_{i_m}\right)-\tilde{\ell}_f(\x_{i_m})}
{Nq_m(i_m\mid i_{1:m-1},D_N)}
+
\frac{1}{N}\sum_{s=1}^{m-1}
\left(
\ell\left(f(\x_{i_s}),y_{i_s}\right)-\tilde{\ell}_f\left(\x_{i_s}\right)
\right).
\]

We now compute $\bE[a_m]$ via the tower property. Conditioning on the pool $D_N$, the surrogate predictor $\tilde{\ell}_f$, and the first $m-1$ selections, we obtain
\begin{eqnarray*}
\bE[a_m]
&=&
\bE_{D_N,\tilde{\ell}_f,i_{1:m-1}}
\left[
\bE_m\left[a_m\mid D_N,\tilde{\ell}_f,i_{1:m-1}\right]
\right]\\
&=&
\underbrace{
\bE_{D_N,\tilde{\ell}_f,i_{1:m-1}}
\left[
\bE_m\left[
\frac{\ell\left(f(\x_{i_m}),y_{i_m}\right)-\tilde{\ell}_f(\x_{i_m})}
{Nq_m(i_m\mid i_{1:m-1},D_N)}
\mid D_N,\tilde{\ell}_f,i_{1:m-1}
\right]
\right]
}_{\circled{b}}\\
&&+
\bE_{D_N,\tilde{\ell}_f,i_{1:m-1}}
\left[
\frac{1}{N}\sum_{s=1}^{m-1}
\left(
\ell\left(f(\x_{i_s}),y_{i_s}\right)-\tilde{\ell}_f\left(\x_{i_s}\right)
\right)
\right].
\end{eqnarray*}

For $\circled{b}$, the conditional expectation over the $m$-th draw expands as a finite sum over the remaining unlabeled indices. Specifically, given $(D_N,\tilde{\ell}_f,i_{1:m-1})$, the next selected index takes values $j\in\cJ_{m-1}$ with conditional probability $q_m(j\mid i_{1:m-1},D_N)$, hence
\begin{eqnarray*}
\circled{b}
&=&
\bE_{D_N,\tilde{\ell}_f,i_{1:m-1}}
\left[
\sum_{j\in\cJ_{m-1}}
q_m(j\mid i_{1:m-1},D_N)
\frac{\ell\left(f(\x_j),y_j\right)-\tilde{\ell}_f\left(\x_j\right)}
{Nq_m(j\mid i_{1:m-1},D_N)}
\right]\\
&=&
\bE_{D_N,\tilde{\ell}_f,i_{1:m-1}}
\left[
\sum_{j\in\cJ_{m-1}}
\frac{1}{N}
\left(
\ell\left(f(\x_j),y_j\right)-\tilde{\ell}_f\left(\x_j\right)
\right)
\right].
\end{eqnarray*}

Combining the two terms yields
\begin{eqnarray*}
\bE[a_m]
&=&
\bE_{D_N,\tilde{\ell}_f,i_{1:m-1}}
\left[
\frac{1}{N}\sum_{n=1}^{N}
\left(
\ell\left(f(\x_n),y_n\right)-\tilde{\ell}_f\left(\x_n\right)
\right)
\right]\\
&=&
\bE_{D_N,\tilde{\ell}_f}
\left[
\frac{1}{N}\sum_{n=1}^{N}
\left(
\ell\left(f(\x_n),y_n\right)-\tilde{\ell}_f\left(\x_n\right)
\right)
\right],
\end{eqnarray*}
where the last equality holds because the expression inside the brackets no longer depends on the selection history $i_{1:m-1}$.

Substituting this back into the representation of $\circled{a}$, we have
\begin{eqnarray*}
\bE[\circled{a}]
&=&
\frac{1}{t}\sum_{m=1}^{t}c_m\bE[a_m]\\
&=&
\frac{1}{t}\sum_{m=1}^{t}c_m
\bE\left[
\frac{1}{N}\sum_{n=1}^{N}
\left(
\ell\left(f(\x_n),y_n\right)-\tilde{\ell}_f\left(\x_n\right)
\right)
\right]\\
&=&
\bE\left[
\frac{1}{N}\sum_{n=1}^{N}
\left(
\ell\left(f(\x_n),y_n\right)-\tilde{\ell}_f\left(\x_n\right)
\right)
\right]
\times
\frac{1}{t}\sum_{m=1}^{t}c_m\\
&=&
\bE\left[
\frac{1}{N}\sum_{n=1}^{N}
\left(
\ell\left(f(\x_n),y_n\right)-\tilde{\ell}_f\left(\x_n\right)
\right)
\right].
\end{eqnarray*}
The last equality holds since
\[
\sum_{m=1}^{t}c_m
=
N(N-t)\sum_{m=1}^{t}\frac{1}{(N-m)(N-m+1)}
=
N(N-t)\sum_{m=1}^{t}\left(\frac{1}{N-m}-\frac{1}{N-m+1}\right)
=
t.
\]

Substituting $\bE[\circled{a}]$ back into \eqref{eq:lure_mean_proof}, we obtain
\begin{eqnarray*}
\bE[\hat R_t]
&=&
\bE\left[
\frac{1}{N}\sum_{n=1}^{N}\tilde{\ell}_f\left(\x_n\right)
\right]
+
\bE\left[
\frac{1}{N}\sum_{n=1}^{N}
\left(
\ell\left(f(\x_n),y_n\right)-\tilde{\ell}_f\left(\x_n\right)
\right)
\right]\\
&=&
\bE\left[
\frac{1}{N}\sum_{n=1}^{N}\ell\left(f(\x_n),y_n\right)
\right]\\
&=&
R,
\end{eqnarray*}
which proves the unbiasedness of $\hat R_t$.

We next investigate the variance of $\hat R_t$ under Assumption~\ref{ass:loss_bound}. By the law of total variance, we have
\begin{equation}\label{eq:lure_var_decomp}
\mathrm{Var}[\hat R_t]
=
\bE\Bigg[
\underbrace{
\mathrm{Var}\left[\hat R_t\mid D_N,\tilde{\ell}_f\right]
}_{\circled{c}}
\Bigg]
+
\mathrm{Var}\Bigg[
\underbrace{
\bE\left[\hat R_t\mid D_N,\tilde{\ell}_f\right]
}_{\circled{d}}
\Bigg].
\end{equation}

We first investigate the term $\circled{d}$. From the unbiasedness argument above, conditional on $D_N$ and $\tilde{\ell}_f$, we have
\[
\bE\left[\hat R_t\mid D_N,\tilde{\ell}_f\right]
=
\frac{1}{N}\sum_{n=1}^{N}\ell\left(f(\x_n),y_n\right).
\]
Therefore,
\[
\mathrm{Var}[\circled{d}]
=
\mathrm{Var}\left[
\frac{1}{N}\sum_{n=1}^{N}\ell\left(f(\x_n),y_n\right)
\right]
=
\frac{\mathrm{Var}(\ell(f(\x),y))}{N}.
\]

We now consider the term $\circled{c}$. Since the first term in $\hat R_t$ is measurable with respect to $(D_N,\tilde{\ell}_f)$, it does not contribute to the conditional variance. Hence,
\[
\circled{c}
=
\mathrm{Var}\Bigg[
\underbrace{
\frac{1}{t}\sum_{m=1}^{t}c_m a_m
}_{\circled{a}}
\mid D_N,\tilde{\ell}_f
\Bigg].
\]

Write 
\[
\bE\left[
\circled{a}
\mid D_N,\tilde{\ell}_f
\right]=\frac{1}{N}\sum_{n=1}^{N}
\left(
\ell\left(f(\x_n),y_n\right)-\tilde{\ell}_f\left(\x_n\right)
\right)
\]
for the average residual over the pool. Then
\begin{eqnarray*}
\circled{c}
&=&
\bE\left[
\circled{a}
^2
\mid D_N,\tilde{\ell}_f
\right]
-
\left(
\frac{1}{N}\sum_{n=1}^{N}
\left(
\ell\left(f(\x_n),y_n\right)-\tilde{\ell}_f\left(\x_n\right)
\right)
\right)^2\\
&=&
\bE\Bigg[
\left(
\frac{1}{t}\sum_{m=1}^{t}c_m a_m
-
\frac{1}{N}\sum_{n=1}^{N}
\left(
\ell\left(f(\x_n),y_n\right)-\tilde{\ell}_f\left(\x_n\right)
\right)
\right)^2
\mid D_N,\tilde{\ell}_f
\Bigg]\\
&=&
\bE\Bigg[
\left(
\frac{1}{t}\sum_{m=1}^{t}c_m
\left(
a_m-
\frac{1}{N}\sum_{n=1}^{N}
\left(
\ell\left(f(\x_n),y_n\right)-\tilde{\ell}_f\left(\x_n\right)
\right)
\right)
\right)^2
\mid D_N,\tilde{\ell}_f
\Bigg]\\
&=&
\frac{1}{t^2}
\sum_{m=1}^{t}\sum_{s=1}^{t}
c_mc_s
\bE\left[
\underbrace{\left(
a_m-
\frac{1}{N}\sum_{n=1}^{N}
\left(
\ell\left(f(\x_n),y_n\right)-\tilde{\ell}_f\left(\x_n\right)
\right)
\right)}_{\circled{e}}\right.
\\
&&\qquad\qquad\qquad\qquad\qquad\left.
\cdot
\underbrace{\left(
a_s-
\frac{1}{N}\sum_{n=1}^{N}
\left(
\ell\left(f(\x_n),y_n\right)-\tilde{\ell}_f\left(\x_n\right)
\right)
\right)}_{\circled{f}}
\mid D_N,\tilde{\ell}_f
\right].
\end{eqnarray*}

For $s<m$, since $a_s$ is measurable with respect to $(D_N,\tilde{\ell}_f,i_{1:m-1})$, we have
\begin{eqnarray*}
\bE\left[a_m\cdot\circled{f}\mid D_N,\tilde{\ell}_f\right]
&=&
\bE\left[
\bE\left[a_m\cdot\circled{f}\mid D_N,\tilde{\ell}_f,i_{1:m-1}\right]
\mid D_N,\tilde{\ell}_f
\right]\\
&=&
\bE\left[
\circled{f}\cdot \bE\left[a_m\mid D_N,\tilde{\ell}_f,i_{1:m-1}\right]
\mid D_N,\tilde{\ell}_f
\right].
\end{eqnarray*}
Moreover,
\begin{eqnarray*}
\bE\left[a_m\mid D_N,\tilde{\ell}_f,i_{1:m-1}\right]
&=&
\sum_{j\in\cJ_{m-1}}
q_m(j\mid i_{1:m-1},D_N)
\frac{\ell\left(f(\x_j),y_j\right)-\tilde{\ell}_f\left(\x_j\right)}
{Nq_m(j\mid i_{1:m-1},D_N)}\\
&&+
\frac{1}{N}\sum_{u=1}^{m-1}
\left(
\ell\left(f(\x_{i_u}),y_{i_u}\right)-\tilde{\ell}_f\left(\x_{i_u}\right)
\right)\\
&=&
\frac{1}{N}\sum_{n=1}^{N}
\left(
\ell\left(f(\x_n),y_n\right)-\tilde{\ell}_f\left(\x_n\right)
\right).
\end{eqnarray*}
Therefore,
\begin{eqnarray*}
\bE\left[a_m\cdot\circled{f}\mid D_N,\tilde{\ell}_f\right]
&=&
\bE\left[
\circled{f}\cdot
\frac{1}{N}\sum_{n=1}^{N}
\left(
\ell\left(f(\x_n),y_n\right)-\tilde{\ell}_f\left(\x_n\right)
\right)
\mid D_N,\tilde{\ell}_f
\right].
\end{eqnarray*}

Hence, when $s<m$, the cross term $\circled{e}\times\circled{f}$ is equal to zero. By symmetry, the same conclusion holds when $m<s$. Therefore, only the diagonal terms remain, and
\begin{eqnarray*}
\circled{c}
&=&
\frac{1}{t^2}\sum_{m=1}^{t}c_m^2
\bE\left[
\left(
a_m-
\frac{1}{N}\sum_{n=1}^{N}
\left(
\ell\left(f(\x_n),y_n\right)-\tilde{\ell}_f\left(\x_n\right)
\right)
\right)^2
\mid D_N,\tilde{\ell}_f
\right].
\end{eqnarray*}

Using the tower property once more, we can rewrite the remaining term as
\begin{eqnarray*}
&&
\bE\left[
\left(
a_m-
\frac{1}{N}\sum_{n=1}^{N}
\left(
\ell\left(f(\x_n),y_n\right)-\tilde{\ell}_f\left(\x_n\right)
\right)
\right)^2
\mid D_N,\tilde{\ell}_f
\right]\\
&=&
\bE\Bigg[
\mathrm{Var}\left[
a_m
\mid D_N,\tilde{\ell}_f,i_{1:m-1}
\right]
\mid D_N,\tilde{\ell}_f
\Bigg].
\end{eqnarray*}
Therefore,
\[
\mathrm{Var}\left[\hat R_t\mid D_N,\tilde{\ell}_f\right]
=
\frac{1}{t^2}\sum_{m=1}^{t}c_m^2
\bE\Bigg[
\mathrm{Var}\left[
a_m
\mid D_N,\tilde{\ell}_f,i_{1:m-1}
\right]
\mid D_N,\tilde{\ell}_f
\Bigg].
\]

Substituting this and the expression for $\mathrm{Var}[\circled{d}]$ back into \eqref{eq:lure_var_decomp}, we obtain
\begin{eqnarray*}
\mathrm{Var}[\hat R_t]
&=&
\frac{\mathrm{Var}(\ell(f(\x),y))}{N}
+
\bE\left[
\frac{1}{t^2}\sum_{m=1}^{t}c_m^2
\bE\Bigg[
\mathrm{Var}\left[
a_m
\mid D_N,\tilde{\ell}_f,i_{1:m-1}
\right]
\mid D_N,\tilde{\ell}_f
\Bigg]
\right]\\
&=&
\frac{\mathrm{Var}(\ell(f(\x),y))}{N}
+
\bE\left[
\mathrm{Var}\left[
\frac{1}{t}\sum_{m=1}^{t}c_m a_m
\mid D_N,\tilde{\ell}_f
\right]
\right],
\end{eqnarray*}
which completes the proof.
\end{proof}

\appsubsection{Proofs of Theorem~\ref{thm:consistency}}
We present the detailed proof of Theorem~\ref{thm:consistency} as follows.
\begin{proof}
Under Assumption~\ref{ass:loss_bound}-\ref{ass:beta}, Theorem~\ref{thm:lure_unbias} establishes the unbiasedness of the estimator $\hat R_t$ at each round $t$. Hence
\[
\bE\left[\left(\hat R_t-R\right)^2\right]
=
\mathrm{Var}[\hat R_t].
\]
To establish convergence, it suffices to show that the variance vanishes along the sequence $\{t_N\}_{N\ge 1}$.

\paragraph{Step 1: Control the variance of the residual-correction term.}
Recall from Theorem~\ref{thm:lure_unbias} that, for $t<N$,
\[
\mathrm{Var}[\hat R_t]
=
\frac{\mathrm{Var}(\ell(f(\x),y))}{N}
+
\bE\left[
\mathrm{Var}\left[
\frac{1}{t}\sum_{m=1}^{t}c_m a_m
\mid D_N,\tilde{\ell}_f
\right]
\right],
\]
where
\[
c_m=\frac{N(N-t)}{(N-m)(N-m+1)},
\]
and
\[
a_m=
\frac{\ell\left(f(\x_{i_m}),y_{i_m}\right)-\tilde{\ell}_f(\x_{i_m})}
{Nq_m(i_m\mid i_{1:m-1},D_N)}
+
\frac{1}{N}\sum_{s=1}^{m-1}
\left(
\ell\left(f(\x_{i_s}),y_{i_s}\right)-\tilde{\ell}_f\left(\x_{i_s}\right)
\right).
\]
Moreover, by the variance analysis in the proof of Theorem~\ref{thm:lure_unbias},
\[
\mathrm{Var}\left[
\frac{1}{t}\sum_{m=1}^{t}c_m a_m
\mid D_N,\tilde{\ell}_f
\right]
=
\frac{1}{t^2}\sum_{m=1}^{t}c_m^2
\bE\Bigg[
\mathrm{Var}\left[
a_m
\mid D_N,\tilde{\ell}_f,i_{1:m-1}
\right]
\mid D_N,\tilde{\ell}_f
\Bigg].
\]

We now bound the conditional variance of $a_m$. Since the second term in $a_m$ is measurable with respect to $(D_N,\tilde{\ell}_f,i_{1:m-1})$, it does not contribute to the conditional variance. Hence,
\begin{eqnarray*}
\lefteqn{
\mathrm{Var}\left[
a_m
\mid D_N,\tilde{\ell}_f,i_{1:m-1}
\right]
}\\
&=&
\mathrm{Var}\left[
\frac{\ell\left(f(\x_{i_m}),y_{i_m}\right)-\tilde{\ell}_f(\x_{i_m})}
{Nq_m(i_m\mid i_{1:m-1},D_N)}
\mid D_N,\tilde{\ell}_f,i_{1:m-1}
\right]\\
&\le&
\bE\left[
\left(
\frac{\ell\left(f(\x_{i_m}),y_{i_m}\right)-\tilde{\ell}_f(\x_{i_m})}
{Nq_m(i_m\mid i_{1:m-1},D_N)}
\right)^2
\mid D_N,\tilde{\ell}_f,i_{1:m-1}
\right]\\
&=&
\sum_{j\in \cJ_{m-1}}
q_m(j\mid i_{1:m-1},D_N)
\frac{\left(\ell\left(f(\x_j),y_j\right)-\tilde{\ell}_f\left(\x_j\right)\right)^2}
{N^2q_m^2(j\mid i_{1:m-1},D_N)}\\
&=&
\sum_{j\in \cJ_{m-1}}
\frac{\left(\ell\left(f(\x_j),y_j\right)-\tilde{\ell}_f\left(\x_j\right)\right)^2}
{N^2q_m(j\mid i_{1:m-1},D_N)}\ .
\end{eqnarray*}

Under Assumption~\ref{ass:loss_bound}, $L\le \ell,\tilde{\ell}_f\le U$, so
\[
\left|
\ell\left(f(\x_j),y_j\right)-\tilde{\ell}_f\left(\x_j\right)
\right|
\le U-L,
\qquad
j\in\cJ_{m-1}.
\]
Using the positivity assumption (Assumption~\ref{ass:beta})
\[
q_m(j\mid i_{1:m-1},D_N)\ge \beta/(N-m+1),
\qquad
j\in\cJ_{m-1},
\]
we obtain
\[
\frac{1}{q_m(j\mid i_{1:m-1},D_N)}
\le
\frac{N-m+1}{\beta}.
\]
Therefore,
\begin{eqnarray*}
\mathrm{Var}\left[
a_m
\mid D_N,\tilde{\ell}_f,i_{1:m-1}
\right]
&\le&
\sum_{j\in \cJ_{m-1}}
\frac{(U-L)^2}{N^2q_m(j\mid i_{1:m-1},D_N)}\\
&\le&
\frac{(U-L)^2}{N^2}
\sum_{j\in \cJ_{m-1}}
\frac{N-m+1}{\beta}\\
&\leq&
\frac{(U-L)^2}{\beta}\frac{|\cJ_{m-1}|}{N}\\
&=&
\frac{(U-L)^2}{\beta}\frac{N-m+1}{N}\ .
\end{eqnarray*}

Substituting this into the above display yields
\begin{equation}\label{eq:lure_consistency_bound_1}
\bE\left[
\mathrm{Var}\left[
\frac{1}{t}\sum_{m=1}^{t}c_m a_m
\mid D_N,\tilde{\ell}_f
\right]
\right]
\le
\frac{(U-L)^2}{\beta}
\frac{1}{t^2}
\sum_{m=1}^{t}c_m^2\frac{N-m+1}{N}.
\end{equation}

We next bound the coefficient sum. For $t<N$,
\begin{eqnarray*}
\lefteqn{
\frac{1}{t^2}
\sum_{m=1}^{t}c_m^2\frac{N-m+1}{N}
}\\
&=&
\frac{1}{t^2}
\sum_{m=1}^{t}
\left(
\frac{N(N-t)}{(N-m)(N-m+1)}
\right)^2
\frac{N-m+1}{N}\\
&=&
\frac{N(N-t)^2}{t^2}
\sum_{m=1}^{t}
\frac{1}{(N-m)^2(N-m+1)}\\
&\le&
\frac{N(N-t)^2}{t^2}
\sum_{m=1}^{t}
\frac{1}{(N-t)(N-m)(N-m+1)}\\
&=&
\frac{N(N-t)}{t^2}
\sum_{m=1}^{t}
\frac{1}{(N-m)(N-m+1)}\\
&=&
\frac{N(N-t)}{t^2}
\sum_{m=1}^{t}
\left(
\frac{1}{N-m}-\frac{1}{N-m+1}
\right)\\
&=&
\frac{N(N-t)}{t^2}
\left(
\frac{1}{N-t}-\frac{1}{N}
\right)\\
&=&
\frac{1}{t}.
\end{eqnarray*}
Combining this with \eqref{eq:lure_consistency_bound_1}, we obtain
\begin{equation}\label{eq:lure_consistency_bound_2}
\bE\left[
\mathrm{Var}\left[
\frac{1}{t}\sum_{m=1}^{t}c_m a_m
\mid D_N,\tilde{\ell}_f
\right]
\right]
\le
\frac{(U-L)^2}{\beta t}.
\end{equation}

\paragraph{Step 2: Conclude the $L^2$ convergence.}
If $t<N$, combining Theorem~\ref{thm:lure_unbias} with \eqref{eq:lure_consistency_bound_2} gives
\[
\bE\left[\left(\hat R_t-R\right)^2\right]
=
\mathrm{Var}[\hat R_t]
\le
\frac{\mathrm{Var}\left(\ell(f(\x),y)\right)}{N}
+
\frac{(U-L)^2}{\beta t}.
\]
If $t=N$, then $\hat R_N=R_N$ pathwise, so
\[
\bE\left[\left(\hat R_N-R\right)^2\right]
=
\bE\left[\left(R_N-R\right)^2\right]
=
\frac{\mathrm{Var}\left(\ell(f(\x),y)\right)}{N}.
\]
Therefore, for all $1\le t\le N$,
\[
\bE\left[\left(\hat R_t-R\right)^2\right]
\le
\frac{\mathrm{Var}\left(\ell(f(\x),y)\right)}{N}
+
\frac{(U-L)^2}{\beta t}.
\]

Finally, if $t=t_N\le N$ is any sequence such that $N\to\infty$ and
\[
\frac{N-t_N+1}{N}\to 0,
\]
then $t_N/N\to 1$, and hence $t_N\to\infty$. Therefore both terms on the right-hand side vanish, and we obtain
\[
\bE\left[\left(\hat R_{t_N}-R\right)^2\right]\to 0.
\]
Hence $\hat R_{t_N}\to R$ in $L^2$, and therefore in probability.
\end{proof}

\appsubsection{Proofs of Theorem~\ref{thm:signal_unbias}}
We present the detailed proof of Theorem~\ref{thm:signal_unbias} as follows.

\begin{proof}
Recall the signal in \eqref{eq:signal_estimator}:
\begin{equation*}
\hat{S}_t
=
\frac{
\sum_{m=1}^{t-1}\ell\left(f(\x_{i_m}),y_{i_m}\right)
+
\sum_{j\in\cJ_{t-1}}\tilde{\ell}_f\left(\x_j\right)
}{N}
+
\frac{
\ell\left(f(\x_{i_t}),y_{i_t}\right)
-
\tilde{\ell}_f\left(\x_{i_t}\right)
}{
Nq_t(i_t\mid i_{1:t-1},D_N)
}\ .
\end{equation*}
By Assumption~\ref{ass:beta}, the proposal probabilities are strictly positive on the remaining pool, so the inverse-probability terms above are well defined.

We first rewrite $\hat S_t$ around the finite-pool risk
\[
R_N=\frac{1}{N}\sum_{j=1}^N \ell(f(\x_j),y_j)=
\frac{\sum_{m=1}^{t-1}\ell(f(\x_{i_m}),y_{i_m})
+\sum_{j\in \mathcal J_{t-1}}\ell(f(\x_j),y_j)}{N}\ ,
\]
subtracting this identity from \eqref{eq:signal_estimator} yields
\begin{equation}\label{eq:decomp_hatS}
\hat S_t
=
R_N
+
\underbrace{
\frac{1}{N}
\left(
\frac{\ell(f(\x_{i_t}),y_{i_t})-\tilde{\ell}_f(\x_{i_t})}
{q_t(i_t\mid i_{1:t-1},D_N)}
-
\sum_{j\in \mathcal J_{t-1}}
\left(\ell(f(\x_j),y_j)-\tilde{\ell}_f(\x_j)\right)
\right)
}_{\circled{a}}.
\end{equation}

Under Assumption~\ref{ass:loss_bound}-\ref{ass:beta}, we first prove the conditional unbiasedness. Since $R_N$ is measurable with respect to $\cF_{t-1}$, taking conditional expectation on both sides of \eqref{eq:decomp_hatS} gives
\[
\bE[\hat S_t\mid \cF_{t-1}]
=
R_N+\bE[\circled{a}\mid \cF_{t-1}].
\]
We now compute $\bE[\circled{a}\mid \cF_{t-1}]$. Conditional on $\cF_{t-1}$, the next queried index $i_t$ takes values in $\mathcal J_{t-1}$ with probabilities
$q_t(j\mid i_{1:t-1},D_N)$. Therefore,
\begin{eqnarray*}
\bE[\circled{a}\mid \cF_{t-1}]
&=&
\frac{1}{N}
\sum_{j\in \mathcal J_{t-1}}
q_t(j\mid i_{1:t-1},D_N)
\frac{\ell(f(\x_j),y_j)-\tilde{\ell}_f(\x_j)}
{q_t(j\mid i_{1:t-1},D_N)}
\\
&&
\ -\ \frac{1}{N}
\sum_{j\in \mathcal J_{t-1}}
q_t(j\mid i_{1:t-1},D_N)
\sum_{k\in \mathcal J_{t-1}}
\left(\ell(f(\x_k),y_k)-\tilde{\ell}_f(\x_k)\right)\\
&=&
\frac{1}{N}
\sum_{j\in \mathcal J_{t-1}}
\left(\ell(f(\x_j),y_j)-\tilde{\ell}_f(\x_j)\right)
\\
&&
\ -\ \frac{1}{N}
\left(
\sum_{j\in \mathcal J_{t-1}} q_t(j\mid i_{1:t-1},D_N)
\right)
\left(
\sum_{k\in \mathcal J_{t-1}}
\left(\ell(f(\x_k),y_k)-\tilde{\ell}_f(\x_k)\right)
\right)\\
&=&0,
\end{eqnarray*}
where the last equality uses
\[
\sum_{j\in \mathcal J_{t-1}} q_t(j\mid i_{1:t-1},D_N)=1.
\]
Hence
\[
\bE[\hat S_t\mid \cF_{t-1}] = R_N,
\]
which proves the conditional unbiasedness.

We next investigate the conditional variance of $\hat S_t$. Since $R_N$ is $\cF_{t-1}$-measurable, by \eqref{eq:decomp_hatS},
\[
\mathrm{Var}[\hat S_t\mid \cF_{t-1}]
=
\mathrm{Var}[\circled{a}\mid \cF_{t-1}].
\]
Moreover, since $\bE[\circled{a}\mid \cF_{t-1}]=0$,
\[
\mathrm{Var}[\circled{a}\mid \cF_{t-1}]
=
\bE[\circled{a}^2\mid \cF_{t-1}].
\]
Expanding $\circled{a}^2$ from \eqref{eq:decomp_hatS}, we obtain
\begin{eqnarray*}
\lefteqn{\circled{a}^2\ =\ }\\
&&\underbrace{\frac{\left(\ell(f(\x_{i_t}),y_{i_t})-\tilde{\ell}_f(\x_{i_t})\right)^2}
{N^2 q_t^2(i_t\mid i_{1:t-1},D_N)}}_{\circled{b}}\\
&&\ -\ \underbrace{\frac{2}{N^2}\cdot\frac{\ell(f(\x_{i_t}),y_{i_t})-\tilde{\ell}_f(\x_{i_t})}{q_t(i_t\mid i_{1:t-1},D_N)}\left(\sum_{k\in \mathcal J_{t-1}}\ell(f(\x_k),y_k)-\tilde{\ell}_f(\x_k)\right)}_{\circled{c}}\\
&&\ +\ \underbrace{\frac{1}{N^2}\left(\sum_{k\in \mathcal J_{t-1}}\ell(f(\x_k),y_k)-\tilde{\ell}_f(\x_k)\right)^2}_{\circled{d}}\ .
\end{eqnarray*}

Then, we have
\[
\bE[\circled{a}^2\mid \cF_{t-1}]
=
\bE[\circled{b}\mid \cF_{t-1}]
-
\bE[\circled{c}\mid \cF_{t-1}]
+
\bE[\circled{d}\mid \cF_{t-1}].
\]
For the first term,
\begin{eqnarray*}
\lefteqn{\bE[\circled{b}\mid \cF_{t-1}]}\\
&=&
\sum_{j\in \mathcal J_{t-1}}q_t(j\mid i_{1:t-1},D_N)
\frac{\left(\ell(f(\x_j),y_j)-\tilde{\ell}_f(\x_j)\right)^2}
{N^2 q_t^2(j\mid i_{1:t-1},D_N)}\\
&=&
\sum_{j\in \mathcal J_{t-1}}
\frac{\left(\ell(f(\x_j),y_j)-\tilde{\ell}_f(\x_j)\right)^2}
{N^2 q_t(j\mid i_{1:t-1},D_N)}\ .
\end{eqnarray*}

For the second term, in a similar manner, we have
\begin{eqnarray*}
\lefteqn{\bE[\circled{c}\mid \cF_{t-1}]}\\
&=&
\sum_{j\in \mathcal J_{t-1}}
\frac{2\left(\ell(f(\x_j),y_j)-\tilde{\ell}_f(\x_j)\right)
\left(\sum_{k\in \cJ_{t-1}}\ell(f(\x_k),y_k)-\tilde{\ell}_f(\x_k)\right)}
{q^{-1}_t(j\mid i_{1:t-1},D_N) N^2 q_t(j\mid i_{1:t-1},D_N)}\\
&=&
\sum_{j\in \mathcal J_{t-1}}
\frac{2\left(\ell(f(\x_j),y_j)-\tilde{\ell}_f(\x_j)\right)
\left(\sum_{k\in \cJ_{t-1}}\ell(f(\x_k),y_k)-\tilde{\ell}_f(\x_k)\right)}
{N^2}\\
&=&
2\left(
\frac{\sum_{k\in \cJ_{t-1}}\ell(f(\x_k),y_k)-\tilde{\ell}_f(\x_k)}{N}
\right)^2\ .
\end{eqnarray*}

Finally, since $\circled{d}$ is $\cF_{t-1}$-measurable, we have
\[
\bE[\circled{d}\mid \cF_{t-1}]
=
\left(
\frac{\sum_{k\in \cJ_{t-1}}\ell(f(\x_k),y_k)-\tilde{\ell}_f(\x_k)}{N}
\right)^2.
\]
Therefore,
\begin{align*}
\mathrm{Var}[\hat S_t \mid \cF_{t-1}]
=
&
\sum_{j\in \mathcal J_{t-1}}
\frac{\left(\ell(f(\x_j),y_j)-\tilde{\ell}_f(\x_j)\right)^2}
{N^2 q_t(j\mid i_{1:t-1},D_N)}\\
&-
\left(
\frac{\sum_{j\in \mathcal J_{t-1}}
\left(\ell(f(\x_j),y_j)-\tilde{\ell}_f(\x_j)\right)}
{N}
\right)^2\ ,
\end{align*}
which completes the proof.
\end{proof}

\appsubsection{Proofs of Theorem~\ref{thm:optimal}}
We present the detailed proof of Theorem~\ref{thm:optimal} as follows.
\begin{proof}
Fix $t$, and let
\[
\cF_{t-1}=\sigma(D_N,i_{1:t-1},\tilde{\ell}_f).
\]
By Assumption~\ref{ass:loss_bound}, both $\ell(f(\x_j),y_j)$ and $\tilde{\ell}_f(\x_j)$ are uniformly bounded, hence all quantities
\[
\ell(f(\x_j),y_j)-\tilde{\ell}_f(\x_j),
\qquad
j\in \cJ_{t-1},
\]
are finite. Since $\tilde{\ell}_f$ is constructed using only the information available up to round $t-1$, these quantities are fixed conditional on $\cF_{t-1}$.

We first consider the inferential signal $\hat S_t$. Recall that
\[
\hat S_t
=
\frac{\sum_{m=1}^{t-1}\ell(f(\x_{i_m}),y_{i_m})+\sum_{j\in \cJ_{t-1}}\tilde{\ell}_f(\x_j)}{N}
+
\frac{\ell(f(\x_{i_t}),y_{i_t})-\tilde{\ell}_f(\x_{i_t})}{N q_t(i_t\mid i_{1:t-1},D_N)}\ .
\]
By Assumption~\ref{ass:beta}, the proposal probabilities are strictly positive on the remaining pool, so the inverse-probability terms above are well defined.

We also recall that
\[
R_N
=
\frac{\sum_{m=1}^{t-1}\ell(f(\x_{i_m}),y_{i_m})+\sum_{j\in \cJ_{t-1}}\ell(f(\x_j),y_j)}{N}.
\]

Hence
\begin{equation}\label{eq:optimal_signal_decomp}
\hat S_t-R_N
=
\frac{\ell(f(\x_{i_t}),y_{i_t})-\tilde{\ell}_f(\x_{i_t})}{N q_t(i_t\mid i_{1:t-1},D_N)}
-
\frac{1}{N}\sum_{j\in \cJ_{t-1}}
\left(\ell(f(\x_j),y_j)-\tilde{\ell}_f(\x_j)\right).
\end{equation}

By Theorem~\ref{thm:signal_unbias},
\[
\bE[\hat S_t\mid \cF_{t-1}] = R_N\ ,
\]
and $\mathrm{Var}(\hat S_t\mid \cF_{t-1})=$
\begin{align*}
\sum_{j\in \cJ_{t-1}}
\frac{\left(\ell(f(\x_j),y_j)-\tilde{\ell}_f(\x_j)\right)^2}
{N^2 q_t(j\mid i_{1:t-1},D_N)}
-
\left(
\frac{\sum_{j\in \cJ_{t-1}}
\left(\ell(f(\x_j),y_j)-\tilde{\ell}_f(f\x_j)\right)}
{N}
\right)^2.
\end{align*}
The second term does not depend on the proposal distribution, so minimizing
$\mathrm{Var}(\hat S_t\mid \cF_{t-1})$ is equivalent to minimizing
\[
\sum_{j\in \cJ_{t-1}}
\frac{\left(\ell(f(\x_j),y_j)-\tilde{\ell}_f(\x_j)\right)^2}
{q_t(j\mid i_{1:t-1},D_N)}
\]
over all proposal distributions on $\cJ_{t-1}$.

By Cauchy-Schwarz,
\begin{align*}
&\left(
\sum_{j\in \cJ_{t-1}}
\left|\ell(f(\x_j),y_j)-\tilde{\ell}_f(\x_j)\right|
\right)^2
\\
&=
\left(
\sum_{j\in \cJ_{t-1}}
\frac{\left|\ell(f(\x_j),y_j)-\tilde{\ell}_f(\x_j)\right|}
{\sqrt{q_t(j\mid i_{1:t-1},D_N)}}
\sqrt{q_t(j\mid i_{1:t-1},D_N)}
\right)^2
\\
&\le
\left(
\sum_{j\in \cJ_{t-1}}
\frac{\left(\ell(f(\x_j),y_j)-\tilde{\ell}_f(\x_j)\right)^2}
{q_t(j\mid i_{1:t-1},D_N)}
\right)
\left(
\sum_{j\in \cJ_{t-1}} q_t(j\mid i_{1:t-1},D_N)
\right)
\\
&=
\sum_{j\in \cJ_{t-1}}
\frac{\left(\ell(f(\x_j),y_j)-\tilde{\ell}_f(\x_j)\right)^2}
{q_t(j\mid i_{1:t-1},D_N)}.
\end{align*}
Equality holds if and only if
\[
q_t(j\mid i_{1:t-1},D_N)
\propto
\left|\ell(f(\x_j),y_j)-\tilde{\ell}_f(\x_j)\right|,
\qquad
j\in \cJ_{t-1}.
\]
Hence the minimizer is
\[
q_t^*(j\mid i_{1:t-1},D_N)
=
\frac{\left|\ell(f(\x_j),y_j)-\tilde{\ell}_f(\x_j)\right|}
{\sum_{s\in \cJ_{t-1}}\left|\ell(f(\x_s),y_s)-\tilde{\ell}_f(\x_s)\right|}.
\]

We next consider the estimator $\hat R_t$. Recall that
\[
\hat R_t
=
\frac{1}{N}\sum_{n=1}^{N}\tilde{\ell}_f\left(\x_n\right)
+
\frac{1}{t}\sum_{m=1}^{t}u_{m,t}
\left(
\ell\left(f(\x_{i_m}),y_{i_m}\right)-\tilde{\ell}_f(\x_{i_m})
\right).
\]
Since only the $m=t$ term depends on $q_t$, we can write
\begin{eqnarray*}
\hat R_t
&=&
\underbrace{
\frac{1}{N}\sum_{n=1}^{N}\tilde{\ell}_f\left(\x_n\right)
+
\frac{1}{t}\sum_{m=1}^{t-1}u_{m,t}
\left(
\ell\left(f(\x_{i_m}),y_{i_m}\right)-\tilde{\ell}_f(\x_{i_m})
\right)
}_{\circled{a}}
\\
&&
+
\frac{1}{t}u_{t,t}
\left(
\ell\left(f(\x_{i_t}),y_{i_t}\right)-\tilde{\ell}_f\left(\x_{i_t}\right)
\right).
\end{eqnarray*}
By the definition of $u_{m,t}$,
\[
u_{t,t}
=
1+\frac{N-t}{N-t}
\left(
\frac{1}{(N-t+1)q_t(i_t\mid i_{1:t-1},D_N)}-1
\right)
=
\frac{1}{(N-t+1)q_t(i_t\mid i_{1:t-1},D_N)}\ .
\]
Hence
\begin{equation}\label{eq:optimal_lure_decomp}
\hat R_t
=
\circled{a}
+
\frac{
\ell\left(f(\x_{i_t}),y_{i_t}\right)-\tilde{\ell}_f\left(\x_{i_t}\right)
}{
t(N-t+1)q_t(i_t\mid i_{1:t-1},D_N)
}\ .
\end{equation}

Since $\circled{a}$ is measurable with respect to $\cF_{t-1}$, conditioning on $\cF_{t-1}$ gives
\begin{align*}
\mathrm{Var}(\hat R_t\mid \cF_{t-1})
&=
\sum_{j\in \cJ_{t-1}}
\frac{\left(\ell(f(\x_j),y_j)-\tilde{\ell}_f(\x_j)\right)^2}
{t^2(N-t+1)^2 q_t(j\mid i_{1:t-1},D_N)}
\\
&\qquad
-
\left(
\frac{\sum_{j\in \cJ_{t-1}}
\left(\ell(f(\x_j),y_j)-\tilde{\ell}_f(\x_j)\right)}
{t(N-t+1)}
\right)^2.
\end{align*}
Again, the second term does not depend on the proposal distribution, so minimizing
$\mathrm{Var}(\hat R_t\mid \cF_{t-1})$ is equivalent to minimizing
\[
\sum_{j\in \cJ_{t-1}}
\frac{\left(\ell(f(\x_j),y_j)-\tilde{\ell}_f(\x_j)\right)^2}
{q_t(j\mid i_{1:t-1},D_N)}\ ,
\]
which is exactly the same objective as above. Therefore, the same proposal distribution
\[
q_t^*(j\mid i_{1:t-1},D_N)
=
\frac{\left|\ell(f(\x_j),y_j)-\tilde{\ell}_f(\x_j)\right|}
{\sum_{s\in \cJ_{t-1}}\left|\ell(f(\x_s),y_s)-\tilde{\ell}_f(\x_s)\right|}
\]
also minimizes $\mathrm{Var}(\hat R_t\mid \cF_{t-1})$.

Finally, by the law of total variance,
\[
\mathrm{Var}(\hat R_t)
=
\mathrm{Var}\bigl(\bE[\hat R_t\mid \cF_{t-1}]\bigr)
+
\bE\bigl[\mathrm{Var}(\hat R_t\mid \cF_{t-1})\bigr].
\]
From \eqref{eq:optimal_lure_decomp},
\[
\bE[\hat R_t\mid \cF_{t-1}]
=
\circled{a}
+
\frac{1}{t(N-t+1)}
\sum_{j\in \cJ_{t-1}}
\left(
\ell(f(\x_j),y_j)-\tilde{\ell}_f(\x_j)
\right),
\]
which does not depend on the proposal distribution. Therefore, the same proposal distribution $q_t^*$ also minimizes $\mathrm{Var}(\hat R_t)$.

\medskip

\textbf{In contrast}, suppose that
\[
\ell(f(\x_j),y_j)-\tilde{\ell}_f(\x_j)=0,
\qquad
\forall j\in \cJ_{t-1}.
\]

For the signal $\hat S_t$, \eqref{eq:optimal_signal_decomp} immediately gives
\[
\hat S_t-R_N=0,
\]
and hence
\[
\hat S_t=R_N.
\]
Therefore,
\[
\mathrm{Var}(\hat S_t\mid \cF_{t-1})=0,
\qquad
\mathrm{Var}(\hat S_t)=\mathrm{Var}(R_N),
\]
so the variance is already minimized, and the choice of $q_t$ is immaterial and every proposal distribution is optimal.

For the estimator $\hat R_t$, since $i_t\in \cJ_{t-1}$, \eqref{eq:optimal_lure_decomp} yields
\[
\hat R_t
=
\circled{a}
+
\frac{
\ell\left(f(\x_{i_t}),y_{i_t}\right)-\tilde{\ell}_f\left(\x_{i_t}\right)
}{
t(N-t+1)q_t(i_t\mid i_{1:t-1},D_N)
}
=
\circled{a},
\]
which is $\cF_{t-1}$-measurable and does not depend on $q_t$. Hence
\[
\mathrm{Var}(\hat R_t\mid \cF_{t-1})=0.
\]
By the law of total variance,
\[
\mathrm{Var}(\hat R_t)
=
\mathrm{Var}\bigl(\bE[\hat R_t\mid \cF_{t-1}]\bigr)
+
\bE\bigl[\mathrm{Var}(\hat R_t\mid \cF_{t-1})\bigr]
=
\mathrm{Var}(\circled{a}),
\]
which again does not depend on the proposal distribution. Therefore, in this case the variance is also minimized, and the choice of $q_t$ is immaterial and every proposal distribution is optimal.

In summary, when the quantities $\ell(f(\x_j),y_j)-\tilde{\ell}_f(\x_j)$, $j\in\cJ_{t-1}$, are not all zero, the proposal distribution $q_t^*$ uniquely minimizes the variance. In contrast, when all these quantities are zero, the variance is already minimized, so the choice of $q_t$ is immaterial and every proposal distribution is optimal. This completes the proof.

\end{proof}

\appsubsection{Proofs of Theorem~\ref{thm:conf_sequen}}\label{app:prof_conf_sequen}
Before presenting the detailed proof of Theorem~\ref{thm:conf_sequen}, we first recall the notion of a test martingale and a time-uniform concentration inequality for bounded martingale differences.

\begin{definition}\label{def:test_martingale}
Let $(\Omega,\cF,(\cF_t)_{t\ge 0},\bP)$ be a filtered probability space, where $(\cF_t)_{t\ge 0}$ is a filtration satisfying $\cF_t\subseteq \cF$
for all $t$. A nonnegative, $(\cF_t)$-adapted process $(Y_t)_{t\ge 0}$ is called a \emph{test martingale}
if $Y_0=1$ and, for all $1\le t\le N$,
\[
\bE_{\bP}\left[Y_t \mid \cF_{t-1}\right]=Y_{t-1}\quad \text{a.s.}
\]
\end{definition}

\begin{theorem}\citep[Corollary 8.2]{Hsu:Kontorovich:Levin:Peres:Szepesvari:Wolfer2019}\label{thm:freedman}
Let $(Y_t)_{1\le t\le N}$ be a bounded martingale difference sequence with respect to a
filtration $\cF_0\subset\cF_1\subset\cF_2\subset\cdots$.
Assume that there exists $b>0$ such that $|Y_t|\le b$ almost surely for all $t\in\mathbb{N}$. For each $t\in\mathbb{N}$, define
\[
V_t=\sum_{i=1}^t \bE[Y_i^2 \mid \cF_{i-1}],
\qquad
W_t=\sum_{i=1}^t Y_i.
\]
Then, for any $T\geq1$, $\delta>0$, and $\rho>1$,
\[
\Pr\left(\exists t\in[T]:\ W_t>\sqrt{2\rho V_t\delta}+\frac{4b\delta}{3} \right)\leq\left(1+\left\lceil \log_\rho\left(\frac{2T}{\delta}\right)\right\rceil_{+}\right)\exp(-\delta)\ .
\]
\end{theorem}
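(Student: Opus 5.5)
The plan is the standard argument: an exponential supermartingale, Ville's inequality, and a peeling (stratification) argument over the predictable quadratic variation $V_t$. Throughout, $b$, $\delta$, $\rho$ and $T$ are fixed.

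\textbf{Step 1 (exponential supermartingale).} For $\lambda\in[0,3/b)$ set
\[
\psi_b(\lambda)=\frac{\lambda^2}{2(1-b\lambda/3)}.
\]
Since $|Y_i|\le b$ and $\bE[Y_i\mid\cF_{i-1}]=0$, the Bernstein moment bound gives
\[
\bE[e^{\lambda Y_i}\mid\cF_{i-1}]\le \exp\bigl(\psi_b(\lambda)\,\bE[Y_i^2\mid\cF_{i-1}]\bigr).
\]
This follows from $e^x\le 1+x+\tfrac{x^2}{2}\sum_{k\ge0}(x/3)^k$ for $x\le 3$, applied with $x=\lambda Y_i\le \lambda b$, together with $1+u\le e^u$. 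It follows that
\[
M_t(\lambda)=\exp\bigl(\lambda W_t-\psi_b(\lambda)V_t\bigr)
\]
is a nonnegative supermartingale with $M_0=1$, because $V_t$ is predictable. Ville's inequality then yields, for each fixed $\lambda$,
\[
\Pr\bigl(\exists t\le T:\ \lambda W_t-\psi_b(\lambda)V_t\ge\delta\bigr)\le e^{-\delta}.
\]

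\textbf{Step 2 (peeling over $V_t$).} Since $0\le V_t\le tb^2\le Tb^2$, I partition the range of $V_t$ into buckets. Set $v_0=b^2\delta/2$ and $v_k=\rho^k v_0$ for $k=1,\dots,K$, where $K=\lceil\log_\rho(2T/\delta)\rceil_+$; this choice guarantees $v_K\ge Tb^2$. The buckets are $\{V_t\le v_0\}$ and $\{v_{k-1}<V_t\le v_k\}$ for $1\le k\le K$, giving $1+K$ buckets in total, which is exactly the multiplicity in the claimed bound.

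For each bucket I use one fixed $\lambda_k$, chosen so that $\lambda_k W_t-\psi_b(\lambda_k)v_k<\delta$ implies $W_t\le\sqrt{2v_k\delta}+b\delta/3$. The choice $\lambda_k=\dfrac{\sqrt{2\delta/v_k}}{1+\tfrac{b}{3}\sqrt{2\delta/v_k}}$ gives exactly this after a short computation; it is the standard Bernstein optimization. Because $\psi_b\ge0$ and $V_t\le v_k$ on the bucket, the event from Step 1 failing implies the desired inequality on that bucket.

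\textbf{Step 3 (conversion to the stated threshold).} On a bucket with $k\ge1$ we have $v_k\le\rho V_t$, so $W_t\le\sqrt{2\rho V_t\delta}+b\delta/3$, which is below the stated threshold. On the lowest bucket, $\sqrt{2v_0\delta}=b\delta$, so $W_t\le b\delta+b\delta/3=4b\delta/3$, which is also below the threshold since $\sqrt{2\rho V_t\delta}\ge0$. A union bound over the $1+K$ buckets, each failing with probability at most $e^{-\delta}$ by Step 1, gives the claim.

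The main obstacle is bookkeeping rather than anything conceptual. The bucket index is random and time-dependent, so the union bound has to be taken over the fixed deterministic set of $\lambda_k$ before invoking Ville's inequality, which is uniform in $t$. I will also check that each $\lambda_k$ is below $3/b$ and that the constants line up exactly with the $4b\delta/3$ term.
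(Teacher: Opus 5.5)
Your proof is correct. The paper does not prove this statement; it imports it as a black box from Hsu et al.\ (2019, Corollary~8.2) and uses it only in the stitching step of Theorem~\ref{thm:conf_sequen}. Your argument therefore supplies a self-contained derivation where the paper has only a citation.

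Your argument combines three ingredients:
\begin{itemize}
\item the Bernstein-type supermartingale $\exp(\lambda W_t-\psi_b(\lambda)V_t)$ with $\psi_b(\lambda)=\lambda^2/(2(1-b\lambda/3))$;
\item Ville's inequality, applied separately to each of the finitely many fixed $\lambda_k$, with the union bound taken before the random bucket index enters;
\item geometric peeling of $V_t\in[0,Tb^2]$ starting from $v_0=b^2\delta/2$.
\end{itemize}
This is the standard route for such time-uniform Freedman bounds, and it reproduces the stated constants exactly. The bottom bucket gives $\sqrt{2v_0\delta}+b\delta/3=4b\delta/3$. The covering condition $\rho^K v_0\ge Tb^2$ is exactly $K\ge\log_\rho(2T/\delta)$, which yields the multiplicity $1+\lceil\log_\rho(2T/\delta)\rceil_+$. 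I checked your choice of $\lambda_k$: it lies in $(0,3/b)$ and gives the threshold $\sqrt{2v_k\delta}+b\delta/3$. On the upper buckets you in fact obtain the slightly sharper additive term $b\delta/3$ rather than $4b\delta/3$.

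One point needs a sentence more when you write it out: the moment-generating-function bound for $x=\lambda Y_i<0$. The series bound $e^x-1-x\le \frac{x^2}{2}\sum_{k\ge0}(x/3)^k$ is proved most naturally for $0\le x<3$. For negative $x$, use instead $e^x\le 1+x+x^2/2$, which holds for $x\le 0$ and is dominated by $1+x+\frac{x^2}{2(1-\lambda b/3)}$. Alternatively, use monotonicity of $(e^x-1-x)/x^2$. Either way you get $e^{\lambda Y_i}\le 1+\lambda Y_i+\frac{\lambda^2Y_i^2}{2(1-\lambda b/3)}$ for all $|Y_i|\le b$, which is what the conditional-expectation step requires.
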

We further introduce the elementary inequality as follows
\begin{theorem}\label{thm:elem_ine}
For any $1/(1+r)\in(0,1)$ and any $u\ge -1/(1+r)$,
\[
\log(1+u)\ge u-\frac{u^2}{2(1-1/(1+r))}.
\]
\end{theorem}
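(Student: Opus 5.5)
Set $a=1/(1+r)\in(0,1)$, so the hypothesis reads $u\ge -a$ and the claim is $\log(1+u)\ge u-\frac{u^2}{2(1-a)}$. The plan is a one-variable calculus argument. Define
\[
g(u)=\log(1+u)-u+\frac{u^2}{2(1-a)},\qquad u\in[-a,\infty).
\]
The goal is to show that $g\ge 0$ on this interval. Two facts make $g$ well defined and smooth there: since $u\ge -a>-1$, we have $1+u\ge 1-a>0$, and $g(0)=0$.

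Differentiate to get
\[
g'(u)=\frac{1}{1+u}-1+\frac{u}{1-a}=-\frac{u}{1+u}+\frac{u}{1-a}=u\left(\frac{1}{1-a}-\frac{1}{1+u}\right).
\]
Because $1+u\ge 1-a>0$ on the domain, we have $\frac{1}{1+u}\le\frac{1}{1-a}$. Hence the bracket is nonnegative, and $g'(u)$ has the same sign as $u$: $g'\le 0$ on $[-a,0]$ and $g'\ge 0$ on $[0,\infty)$.

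It follows that $g$ is nonincreasing on $[-a,0]$ and nondecreasing on $[0,\infty)$. Its minimum over $[-a,\infty)$ is therefore attained at $u=0$, where $g(0)=0$. So $g(u)\ge 0$ for all $u\ge -a$, which is exactly the stated inequality.

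There is no real obstacle. The only point needing care is that the lower bound $u\ge -a$ with $a<1$ is used twice: once to keep $\log(1+u)$ defined, and once to get the comparison $\frac{1}{1+u}\le \frac{1}{1-a}$ that fixes the sign of the bracket. At the endpoint $u=-a$ the bracket vanishes, and the argument still goes through.
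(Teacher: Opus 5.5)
Your proof is correct and complete. With $a=1/(1+r)$, the factorization $g'(u)=u\bigl(\tfrac{1}{1-a}-\tfrac{1}{1+u}\bigr)$ and the bound $1+u\ge 1-a>0$ show that $g$ is nonincreasing on $[-a,0]$ and nondecreasing on $[0,\infty)$. Hence $g\ge g(0)=0$.

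The paper gives no proof of this statement. It quotes it as an elementary inequality, and the proof environment that follows it in the appendix is actually the proof of Theorem~\ref{thm:conf_sequen}. There the inequality is applied with $r=c$, once to $u=\lambda_i^{+}(\upsilon)(\bar S_i-\bar\upsilon_i)$ and once to $u=-\lambda_i^{-}(\upsilon)(\bar S_i-\bar\upsilon_i)$. Both are at least $-1/(1+c)$ because $\bar S_i,\bar\upsilon_i\in[0,1]$. Your argument therefore supplies the verification the paper omits; there is no competing route to compare.

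Your sign analysis of $g'$ also matters for the constant. A second-order Taylor expansion $\log(1+u)=u-\tfrac{u^2}{2(1+\xi)^2}$, with the remainder bounded crudely via $(1+\xi)^2\ge(1-a)^2$, gives only $(1-a)^2$ in the denominator for $u\in[-a,0)$. That is strictly weaker than the stated $1-a$, since $0<1-a<1$. Your factorization avoids that loss.
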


\begin{proof}
The proof structure consists of two steps.
\begin{itemize}
    \item First, we establish the anytime-valid coverage of the inverted hedged capital process under the scaling, showing that $\bigl(\cM_t^{\pm}\bigr)_{1\le t\le N}$ is a $(1-\alpha_1)$ confidence sequence for $R_N$.
    \item Second, we derive an explicit contraction rate for $\cM_t^{\pm}$ under the choice
    $\lambda_t^{+}(\upsilon)=|\tilde\lambda_t^{+}|\wedge 1/(\bar\upsilon_t+c)$ and
    $\lambda_t^{-}(\upsilon)=|\tilde\lambda_t^{-}|\wedge 1/(1-\bar\upsilon_t+c)$ together with the envelope condition
    $|\tilde\lambda_t^{\pm}|\asymp 1/\sqrt{t\log(t+1)}$.
\end{itemize}

\textbf{Step 1: Anytime-validity via test martingales and inversion.}

Let $\cF_{t-1}=\sigma(D_N,i_{1:t-1},\tilde{\ell}_f)$. By Theorem~\ref{thm:signal_unbias},
\[
\bE[\hat S_t\mid \cF_{t-1}] = R_N.
\]
Recall that we use the predictable scaling
\[
\bar S_t = \frac{\hat S_t-a_t}{b_t-a_t},
\qquad
\bar\upsilon_t = \frac{\upsilon-a_t}{b_t-a_t},
\qquad
\bar R_{N,t} = \frac{R_N-a_t}{b_t-a_t},
\]
where $a_t$ and $b_t$ are $\cF_{t-1}$-measurable and $\hat S_t\in[a_t,b_t]$ almost surely. By Assumption~\ref{ass:scaling}, the scaling intervals are non-degenerate, so $b_t-a_t>0$ for all $1\le t\le N$. Hence the above normalization is well defined. Since this is an affine transformation,
\[
\bE[\bar S_t\mid \cF_{t-1}] = \bar R_{N,t}.
\]
In particular, under the null hypothesis $\cH_0(\upsilon): R_N=\upsilon$, we have
\[
\bE[\bar S_t\mid \cF_{t-1},\cH_0(\upsilon)] = \bar\upsilon_t.
\]
Moreover, by construction $\bar S_t\in[0,1]$. We now verify directly that the one-sided capital processes
\[
\cK_t^{+}(\upsilon)=\prod_{i=1}^t\left(1+\lambda_i^{+}(\upsilon)\left(\bar S_i-\bar\upsilon_i\right)\right),
\qquad
\cK_t^{-}(\upsilon)=\prod_{i=1}^t\left(1-\lambda_i^{-}(\upsilon)\left(\bar S_i-\bar\upsilon_i\right)\right)
\]
are test martingales under $\cH_0(\upsilon)$.

Recall Assumption~\ref{ass:betting} that we choose
\[
\lambda_t^{+}(\upsilon)=|\tilde\lambda_t^{+}|\wedge \frac{1}{\bar\upsilon_t+c},
\qquad
\lambda_t^{-}(\upsilon)=|\tilde\lambda_t^{-}|\wedge \frac{1}{1-\bar\upsilon_t+c},
\]
where $(\tilde\lambda_t^{+})_{1\le t\le N}$ and $(\tilde\lambda_t^{-})_{1\le t\le N}$ are real-valued predictable sequences not depending on $\upsilon$.
First, since $\bar S_t\in[0,1]$ and $\lambda_t^{+}(\upsilon)\le 1/(\bar\upsilon_t+c)$ by Assumption~\ref{ass:betting}, we have
\[
1+\lambda_t^{+}(\upsilon)\left(\bar S_t-\bar\upsilon_t\right)
\ge
1-\lambda_t^{+}(\upsilon)\bar\upsilon_t
\ge
1-\frac{\bar\upsilon_t}{\bar\upsilon_t+c}
=
\frac{c}{\bar\upsilon_t+c}
>0.
\]
Hence each multiplicative factor in $\cK_t^{+}(\upsilon)$ is nonnegative.

Similarly, since $\bar S_t\in[0,1]$ and $\lambda_t^{-}(\upsilon)\le 1/(1-\bar\upsilon_t+c)$ by Assumption~\ref{ass:betting}, we have
\[
1-\lambda_t^{-}(\upsilon)\left(\bar S_t-\bar\upsilon_t\right)
\ge
1-\lambda_t^{-}(\upsilon)(1-\bar\upsilon_t)
\ge
1-\frac{1-\bar\upsilon_t}{1-\bar\upsilon_t+c}
=
\frac{c}{1-\bar\upsilon_t+c}
>0.
\]
Hence each multiplicative factor in $\cK_t^{-}(\upsilon)$ is also nonnegative.

Next, since $\lambda_t^{+}(\upsilon)$ is $\cF_{t-1}$-measurable by Assumption~\ref{ass:betting}, under $\cH_0(\upsilon)$ we have
\begin{eqnarray*}
\bE\left[\cK_t^{+}(\upsilon)\mid \cF_{t-1}\right]
&=&
\cK_{t-1}^{+}(\upsilon)\,
\bE\left[1+\lambda_t^{+}(\upsilon)\left(\bar S_t-\bar\upsilon_t\right)\mid \cF_{t-1},\cH_0(\upsilon)\right]\\
&=&
\cK_{t-1}^{+}(\upsilon)\left(1+\lambda_t^{+}(\upsilon)\bE\left[\bar S_t-\bar\upsilon_t\mid \cF_{t-1},\cH_0(\upsilon)\right]\right)\\
&=&
\cK_{t-1}^{+}(\upsilon).
\end{eqnarray*}
Therefore $\{\cK_t^{+}(\upsilon)\}_{1\le t\le N}$ is a nonnegative martingale under $\cH_0(\upsilon)$.

In the same way, since $\lambda_t^{-}(\upsilon)$ is also $\cF_{t-1}$-measurable by Assumption~\ref{ass:betting},
\begin{eqnarray*}
\bE\left[\cK_t^{-}(\upsilon)\mid \cF_{t-1}\right]
&=&
\cK_{t-1}^{-}(\upsilon)\,
\bE\left[1-\lambda_t^{-}(\upsilon)\left(\bar S_t-\bar\upsilon_t\right)\mid \cF_{t-1},\cH_0(\upsilon)\right]\\
&=&
\cK_{t-1}^{-}(\upsilon)\left(1-\lambda_t^{-}(\upsilon)\bE\left[\bar S_t-\bar\upsilon_t\mid \cF_{t-1},\cH_0(\upsilon)\right]\right)\\
&=&
\cK_{t-1}^{-}(\upsilon).
\end{eqnarray*}
Therefore $\{\cK_t^{-}(\upsilon)\}_{1\le t\le N}$ is also a nonnegative martingale under $\cH_0(\upsilon)$.

Hence their hedged combination
\[
\cK_t^{\pm}(\upsilon)
=
\theta \cK_t^{+}(\upsilon)
+
(1-\theta)\cK_t^{-}(\upsilon),
\qquad
\theta\in[0,1],
\]
is again a nonnegative martingale under $\cH_0(\upsilon)$, and therefore a test martingale.

By Ville's inequality~\cite{Ville1939},
\[
\Pr_{R_N=\upsilon}\left(\exists 1\le t\le N:\ \cK_t^{\pm}(\upsilon)\ge \frac{1}{\alpha_1}\right)\le \alpha_1.
\]
Therefore, the inverted set with $\upsilon\in[L,U]$ (Assumption~\ref{ass:loss_bound})
\[
\cB_t^{\pm}
=
\left\{
\upsilon\in[L,U]:
\cK_t^{\pm}(\upsilon)<\frac{1}{\alpha_1}
\right\}
\]
is a $(1-\alpha_1)$ confidence sequence for $R_N$.

We further define the running intersection
\[
\cM_t^{\pm}=\bigcap_{i\le t}\cB_i^{\pm}.
\]
Then $\cM_t^{\pm}\subseteq \cB_t^{\pm}$ for each $t$, so $\{\cM_t^{\pm}\}_{1\le t\le N}$ is nested and has non-increasing width. It also remains $(1-\alpha_1)$ anytime-valid because the corresponding ever-miss events coincide:
\[
\upsilon\notin \cM_t^{\pm}
\Longleftrightarrow
\exists i\le t \text{ such that } \upsilon\notin \cB_i^{\pm},
\]
and therefore
\[
\left\{\exists 1\le t\le N:\ \upsilon\notin \cM_t^{\pm}\right\}
=
\bigcup_{1\le t\le N}\bigcup_{i\le t}\left\{\upsilon\notin \cB_i^{\pm}\right\}
=
\bigcup_{i\ge 1}\left\{\upsilon\notin \cB_i^{\pm}\right\}
=
\left\{\exists 1\le t\le N:\ \upsilon\notin \cB_t^{\pm}\right\}.
\]
Hence
\[
\Pr\left(\exists 1\le t\le N:\ \upsilon\notin \cM_t^{\pm}\right)
=
\Pr\left(\exists 1\le t\le N:\ \upsilon\notin \cB_t^{\pm}\right)
\le \alpha_1,
\]
which shows that $\{\cM_t^{\pm}\}_{1\le t\le N}$ is also a $(1-\alpha_1)$ anytime-valid confidence sequence for $R_N$.

\medskip

\textbf{Step 2: Shrinkage and width bound via a time-uniform martingale concentration argument.}

Fix $c\in(0,1)$. Recall that, under Assumption~\ref{ass:betting}, we choose $\lambda_t^{+}(\upsilon)=|\tilde\lambda_t^{+}|\wedge 1/(\bar\upsilon_t+c)$ and $\lambda_t^{-}(\upsilon)=|\tilde\lambda_t^{-}|\wedge 1/(1-\bar\upsilon_t+c)$, where $(\tilde\lambda_t^{+})_{1\le t\le N}$ and $(\tilde\lambda_t^{-})_{1\le t\le N}$ are real-valued predictable sequences not depending on $\upsilon$,
and there exist constants $0<\underline\gamma\le \overline\gamma<\infty$ such that for all $1\le t\le N$,
\[
\underline\gamma/\sqrt{t\log(t+1)} \le |\tilde\lambda_t^{\pm}| \le \overline\gamma/\sqrt{t\log(t+1)}.
\]

\textbf{We start with $\upsilon<R_N$ and analyze the one-sided capital process $\cK_t^{+}(\upsilon)$.} For each $i\ge 1$ with $\bar S_i\in[0,1]$ and $\bar\upsilon_i\in[0,1]$, we have
\[
\lambda_i^{+}(\upsilon)\left(\bar S_i-\bar\upsilon_i\right)\ge \frac{1}{\bar\upsilon_i+c}(-\bar\upsilon_i)\ge -\frac{1}{1+c}.
\]

Applying the elementary inequality (Theorem~\ref{thm:elem_ine}) termwise with $u=\lambda_i^{+}(\upsilon)(\bar S_i-\bar\upsilon_i)$ gives
\begin{eqnarray*}
\log \cK_t^+(\upsilon)
&=&
\sum_{i=1}^t\log\left(1+\lambda_i^{+}(\upsilon)\left(\bar S_i-\bar\upsilon_i\right)\right)\\
&\ge&
\sum_{i=1}^t\lambda_i^{+}(\upsilon)\left(\bar S_i-\bar\upsilon_i\right)
-\frac{1}{2(1-1/(1+c))}\sum_{i=1}^t\left(\lambda_i^{+}(\upsilon)\right)^2\left(\bar S_i-\bar\upsilon_i\right)^2.
\end{eqnarray*}

Using $\bar S_i-\bar\upsilon_i=(\bar S_i-\bar R_{N,i})+(\bar R_{N,i}-\bar\upsilon_i)$ and $(\bar S_i-\bar\upsilon_i)^2\le 1$, we obtain
\begin{eqnarray}\label{eq:logK+}
\log \cK_t^+(\upsilon)
&\ge&
\sum_{i=1}^t\lambda_i^{+}(\upsilon)\left(\bar R_{N,i}-\bar\upsilon_i\right)+\sum_{i=1}^t\lambda_i^{+}(\upsilon)(\bar S_i-\bar R_{N,i})-\frac{\sum_{i=1}^t\left(\lambda_i^{+}(\upsilon)\right)^2\left(\bar S_i-\bar\upsilon_i\right)^2}{2(1-1/(1+c))}\nonumber\\
&\ge&
(R_N-\upsilon)\sum_{i=1}^t\frac{\lambda_i^{+}(\upsilon)}{b_i-a_i}+\underbrace{\sum_{i=1}^t\lambda_i^{+}(\upsilon)(\bar S_i-\bar R_{N,i})}_{\circled{c}}-\frac{\sum_{i=1}^t\left(\lambda_i^{+}(\upsilon)\right)^2}{2(1-1/(1+c))}.
\end{eqnarray}

Define the martingale-difference increments
\[
Y_t=\lambda_t^{+}(\upsilon)\left(\bar S_t-\bar R_{N,t}\right),
\qquad
W_t=\sum_{i=1}^t Y_i
=
\sum_{i=1}^t\lambda_i^{+}(\upsilon)\left(\bar S_i-\bar R_{N,i}\right)=\circled{c}.
\]
Since $\bE[\bar S_t\mid \cF_{t-1}]=\bar R_{N,t}$ and $\lambda_t^{+}(\upsilon)$ is $\cF_{t-1}$-measurable,
\[
\bE[Y_t\mid \cF_{t-1}]
=
\lambda_t^{+}(\upsilon)\bE[\bar S_t-\bar R_{N,t}\mid \cF_{t-1}]
=0,
\]
so $(Y_t)_{1\le t\le N}$ is a martingale difference sequence. Moreover,
\[
|Y_t|\le \lambda_t^{+}(\upsilon)|\bar S_t-\bar R_{N,t}|\le \frac{1}{c}
\quad\text{almost surely}.
\]

Finally, its predictable quadratic variation is
\begin{eqnarray}\label{eq:scale_Vt+}
V_t^{+}(\upsilon)
&=&
\sum_{i=1}^t \bE[Y_i^2\mid \cF_{i-1}]\nonumber\\
&=&
\sum_{i=1}^t
\bE\left[
\left(\lambda_i^{+}(\upsilon)\right)^2
\left(\bar S_i-\bar R_{N,i}\right)^2
\mid \cF_{i-1}
\right]\nonumber\\
&=&
\sum_{i=1}^t
\left(\lambda_i^{+}(\upsilon)\right)^2
\bE\left[
\left(\bar S_i-\bar R_{N,i}\right)^2
\mid \cF_{i-1}
\right]\nonumber\\
&=&
\sum_{i=1}^t
\left(\lambda_i^{+}(\upsilon)\right)^2
\mathrm{Var}\left(\bar S_i\mid \cF_{i-1}\right)\nonumber\\
&=&
\sum_{i=1}^t
\frac{\left(\lambda_i^{+}(\upsilon)\right)^2}{(b_i-a_i)^2}
\mathrm{Var}\left(\hat S_i\mid \cF_{i-1}\right)\nonumber\\
&\overset{(a)}{\le}&
\sum_{i=1}^t
\frac{\left(\lambda_i^{+}(\upsilon)\right)^2}{\underline\Delta^2}
\mathrm{Var}\left(\hat S_i\mid \cF_{i-1}\right)\nonumber\\
&\overset{(b)}{\le}&
\sum_{i=1}^t
\frac{\overline\gamma^2}{i\log(i+1)\underline\Delta^2}
\mathrm{Var}\left(\hat S_i\mid \cF_{i-1}\right)\nonumber\\
&=&
\frac{\overline\gamma^2}{\underline\Delta^2}
\sum_{i=1}^t
\frac{\mathrm{Var}\left(\hat S_i\mid \cF_{i-1}\right)}{i\log(i+1)}\ ,
\end{eqnarray}
where $(a)$ follows from Assumption~\ref{ass:scaling}, and $(b)$ follows from Assumption~\ref{ass:betting}.

We now upgrade the finite-horizon bound in Theorem~\ref{thm:freedman} to an anytime-valid bound via stitching.
Fix a target error probability $\omega\in(0,1)$ and any $\rho>1$. For $j=0,1,2,\dots$, define the dyadic horizons
\[
T_j=2^j,
\qquad
\omega_j=\frac{3\omega}{\pi^2(j+1)^2},
\qquad
\delta_j=\max\left\{1,\ \log\left(\frac{\bigl(1+\lceil \log_\rho(2T_j)\rceil_{+}\bigr)\pi^2(j+1)^2}{3\omega}\right)\right\}.
\]

Let $-W_t=\sum_{i=1}^t(-Y_i)$. Then $(-Y_t)_{1\le t\le N}$ is also a bounded martingale difference sequence with the same predictable quadratic variation $V_t^{+}(\upsilon)$ and bound $|{-Y_t}|\le 1/c$. For each $j$, consider the event
\begin{eqnarray*}
\cE_j
&=&
\left\{
\forall t\le T_j:
W_t \ge -\sqrt{2\rho V_t^{+}(\upsilon)\delta_j}-\frac{4 \delta_j}{3c}
\right\}\\
&=&
\left\{
\forall t\le T_j:
-W_t \le \sqrt{2\rho V_t^{+}(\upsilon)\delta_j}+\frac{4 \delta_j}{3c}
\right\}\ .
\end{eqnarray*}

Theorem~\ref{thm:freedman} applied to $-W_t$ yields, for each $j$,
\begin{eqnarray*}
\Pr(\cE_j^c)&=&\Pr\left(\left\{
\exists t\le T_j:
-W_t > \sqrt{2\rho V_t^{+}(\upsilon)\delta_j}+\frac{4 \delta_j}{3c}
\right\}\right)\\
&\le& \left(1+\left\lceil \log_\rho\left(\frac{2T_j}{\delta_j}\right)\right\rceil_{+}\right)\exp(-\delta_j)\ ,
\end{eqnarray*}
where $\cE_j^c$ denotes the complement event of $\cE_j$.

Since $\delta_j\ge1$, we have $2T_j/\delta_j\le 2T_j$, hence
$\lceil \log_\rho(2T_j/\delta_j)\rceil_{+}\le \lceil \log_\rho(2T_j)\rceil_{+}$, and therefore
\[
\Pr(\cE_j^c)
\le
\Bigl(1+\bigl\lceil \log_\rho(2T_j)\bigr\rceil_{+}\Bigr)\exp(-\delta_j)
\le \omega_j,
\]
where the last inequality holds by the definition of $\delta_j$.
By a union bound,
\[
\Pr\Bigl(\bigcap_{j\ge0}\cE_j\Bigr)
\ge 1-\sum_{j\ge0}\Pr(\cE_j^c)
\ge 1-\sum_{j\ge0}\omega_j
=1-\omega/2.
\]

On the event $\bigcap_{j\ge0}\cE_j$, all bounds $\cE_j$ hold simultaneously.
Fix any $1\le t\le N$ and let $j(t)=\lceil\log_2 t\rceil$. Then $t\le 2^{j(t)}=T_{j(t)}$.
Since $\cE_{j(t)}$ ensures that the stated inequality holds for all times up to $T_{j(t)}$,
it in particular holds at time $t$, that is,
\begin{equation}\label{eq:LBS_t}
W_t \ge -\sqrt{2\rho V_t^{+}(\upsilon)\delta_{\lceil \log_2 t\rceil}}-\frac{4 \delta_{\lceil \log_2 t\rceil}}{3c}.
\end{equation}

Combining \eqref{eq:LBS_t} with \eqref{eq:logK+}, we obtain that, with probability at least $1-\omega/2$,
simultaneously for all $1\le t\le N$,
\begin{eqnarray*}
\log \cK_t^+(\upsilon)
&\ge&
(R_N-\upsilon)\sum_{i=1}^t\frac{\lambda_i^{+}(\upsilon)}{b_i-a_i}
-\frac{1}{2(1-1/(1+c))}\sum_{i=1}^t\left(\lambda_i^{+}(\upsilon)\right)^2\nonumber\\
&&-\sqrt{2\rho V_t^{+}(\upsilon)\delta_{\lceil \log_2 t\rceil}}
-\frac{4}{3c}\delta_{\lceil \log_2 t\rceil}.
\end{eqnarray*}

Substituting \eqref{eq:scale_Vt+} into the display above further gives
\begin{eqnarray}\label{eq:logK_lower_final}
\log \cK_t^+(\upsilon)
&\ge&
(R_N-\upsilon)\underbrace{\sum_{i=1}^t\frac{\lambda_i^{+}(\upsilon)}{b_i-a_i}}_{\circled{d}}
-\frac{1}{2(1-1/(1+c))}\underbrace{\sum_{i=1}^t\left(\lambda_i^{+}(\upsilon)\right)^2}_{\circled{e}}\nonumber\\
&&-\sqrt{\frac{2\rho \overline\gamma^2}{\underline\Delta^2}
\sum_{i=1}^t
\frac{\mathrm{Var}\left(\hat S_i\mid \cF_{i-1}\right)}{i\log(i+1)}\delta_{\lceil \log_2 t\rceil}}
-\frac{4}{3c}\delta_{\lceil \log_2 t\rceil}.
\end{eqnarray}

We now bound the two deterministic aggregates
\[
\circled{d}=\sum_{i=1}^t \frac{\lambda_i^{+}(\upsilon)}{b_i-a_i},
\qquad
\circled{e}=\sum_{i=1}^t \bigl(\lambda_i^{+}(\upsilon)\bigr)^2,
\]
under the standing assumption that there exist constants $0<\underline\gamma\le \overline\gamma<\infty$ such that for all $1\le t\le N$,
\[
\frac{\underline\gamma}{\sqrt{t\log(t+1)}} \le |\tilde\lambda_t^{+}| \le \frac{\overline\gamma}{\sqrt{t\log(t+1)}}.
\]

Since $\lambda_i^{+}(\upsilon)\le |\tilde\lambda_i^{+}|$,
\[
\circled{e}\le \sum_{i=1}^t |\tilde\lambda_i^{+}|^2
\le
\overline\gamma^2\sum_{i=1}^t \frac{1}{i\log(i+1)}.
\]
Since $\log(i+1)\ge \log i$ for $i\ge2$, we have
\[
\sum_{i=2}^t \frac{1}{i\log(i+1)}
\le
\sum_{i=2}^t \frac{1}{i\log i}
\le
\int_{2}^{t+1}\frac{dx}{x\log x}.
\]
Therefore, there exists an absolute constant $C_0$ such that for all $t\ge 2$,
\begin{equation}\label{eq:Bt_bound_new}
\circled{e}\le \overline\gamma^2\bigl(C_0+\log\log(t+1)\bigr).
\end{equation}

Since every $\bar\upsilon_i\in[0,1]$, we have $\bar\upsilon_i+c\le1+c$ and thus $1/(\bar\upsilon_i+c)\ge 1/(1+c)$. Choose a deterministic integer $t_c\ge1$ such that
\[
\frac{\overline\gamma}{\sqrt{t_c\log(t_c+1)}} \le \frac{1}{1+c}.
\]
Such a finite $t_c$ exists because
\[
\frac{\overline\gamma}{\sqrt{t\log(t+1)}}\to 0
\qquad\textnormal{as }t\to\infty.
\]

For any $t\ge 2t_c$ and any $i\in\{\lfloor t/2\rfloor,\ldots,t\}$, we have $i\ge t_c$ and hence
\[
|\tilde\lambda_i^{+}|\le\frac{1}{1+c}\le \frac{1}{\bar\upsilon_i+c},
\]
so $\lambda_i^{+}(\upsilon)=|\tilde\lambda_i^{+}|$. Moreover, by Assumption~\ref{ass:scaling},
\[
b_i-a_i\le \overline\Delta,
\]
and therefore for all such $i$,
\[
\frac{\lambda_i^{+}(\upsilon)}{b_i-a_i}
=
\frac{|\tilde\lambda_i^{+}|}{b_i-a_i}
\ge
\frac{1}{\overline\Delta}\cdot \frac{\underline\gamma}{\sqrt{i\log(i+1)}}
\ge \frac{1}{\overline\Delta}\cdot \frac{\underline\gamma}{\sqrt{t\log(t+1)}}.
\]
Therefore, for all $t\ge 2t_c$,
\begin{equation}\label{eq:At_bound_new}
\circled{d}\ge\sum_{i=\lfloor t/2\rfloor}^{t}\frac{\lambda_i^{+}(\upsilon)}{b_i-a_i}
\ge\frac{t}{2}\cdot \frac{\underline\gamma}{\overline\Delta\sqrt{t\log(t+1)}}
=
\frac{\underline\gamma\sqrt{t}}{2\overline\Delta\sqrt{\log(t+1)}}.
\end{equation}

By substituting \eqref{eq:Bt_bound_new} and \eqref{eq:At_bound_new} into \eqref{eq:logK_lower_final}, we obtain that, with probability at least $1-\omega/2$,
simultaneously for all $t\ge 2t_c$,
\begin{eqnarray}\label{eq:logK_lower_simple}
\log \cK_t^+(\upsilon)
&\ge&
(R_N-\upsilon)\frac{\underline\gamma\sqrt{t}}{2\overline\Delta\sqrt{\log(t+1)}}
-\frac{\overline\gamma^2\bigl(C_0+\log\log(t+1)\bigr)}{2(1-1/(1+c))}\nonumber\\
&&-\sqrt{\frac{2\rho \overline\gamma^2}{\underline\Delta^2}
\sum_{i=1}^t
\frac{\mathrm{Var}\left(\hat S_i\mid \cF_{i-1}\right)}{i\log(i+1)}\delta_{\lceil \log_2 t\rceil}}
-\frac{4}{3c}\delta_{\lceil \log_2 t\rceil}.
\end{eqnarray}

Recall that $\cB_t^{\pm}$ is defined by
\[
\cB_t^{\pm}=\left\{\upsilon\in[L,U]:\cK_t^{\pm}(\upsilon)<\frac{1}{\alpha_1}\right\}.
\]
Therefore, to show that a candidate value $\upsilon$ does not belong to $\cB_t^{\pm}$, it suffices to verify the converse inequality
\[
\cK_t^{\pm}(\upsilon)\ge \frac{1}{\alpha_1},
\]
since this directly implies $\upsilon\notin\cB_t^{\pm}$.

On the event where \eqref{eq:logK_lower_simple} holds, any $\upsilon<R_N$ satisfying
\begin{eqnarray}\label{eq:R_N-v_lower}
\lefteqn{R_N-\upsilon\ge}\\
&&
\underbrace{\frac{\log(\frac{1}{\theta\alpha_1})+\frac{\overline\gamma^2\bigl(C_0+\log\log(t+1)\bigr)}{2(1-1/(1+c))}
+\sqrt{\frac{2\rho \overline\gamma^2}{\underline\Delta^2}
\sum_{i=1}^t
\frac{\mathrm{Var}\left(\hat S_i\mid \cF_{i-1}\right)}{i\log(i+1)}\delta_{\lceil \log_2 t\rceil}}
+\frac{4}{3c}\delta_{\lceil \log_2 t\rceil}
}{\underline\gamma\sqrt{t}/(2\overline\Delta\sqrt{\log(t+1)})}}_{\circled{f}}
\nonumber
\end{eqnarray}
ensures that $\log\cK_t^{+}(\upsilon)\ge \log(1/(\theta\alpha_1))$, and hence $\cK_t^{+}(\upsilon)\ge 1/(\theta\alpha_1)$.
Using $\cK_t^{\pm}(\upsilon)\ge \theta\,\cK_t^{+}(\upsilon)$, we further obtain $\cK_t^{\pm}(\upsilon)\ge 1/\alpha_1$.
Consequently, any such $\upsilon$ is excluded from $\cB_t^{\pm}$.

To turn the exclusion condition \eqref{eq:R_N-v_lower} into an explicit radius, we upper bound $\circled{f}$.
There exists a constant $G_1>0$, depending only on $\underline\Delta,\overline\Delta,c,\rho,\underline\gamma,\overline\gamma,\alpha_1,\theta,\omega$, such that for all $t\ge 2t_c$,
\begin{align*}
\circled{f}\ &\leq \ G_1\sqrt{\frac{\log(t+1)}{t}}\times\\
&
\left[
1
+
\log\log(t+1)
+
\sqrt{\sum_{i=1}^t
\frac{\mathrm{Var}\left(\hat S_i\mid \cF_{i-1}\right)}{i\log(i+1)}\delta_{\lceil\log_2 t\rceil}}
+
\delta_{\lceil\log_2 t\rceil}
\right].
\end{align*}

Moreover, by Theorem~\ref{thm:signal_unbias} together with Assumptions~\ref{ass:loss_bound} and \ref{ass:beta}, for each $i\ge 1$ we have
\begin{eqnarray*}
\mathrm{Var}\left(\hat S_i\mid \cF_{i-1}\right)
&=&
\frac{1}{N^2}\sum_{j\in\cJ_{i-1}}
\frac{\left(\ell\left(f(\x_j),y_j\right)-\tilde{\ell}_f\left(\x_j\right)\right)^2}
{q_i(j\mid i_{1:i-1},D_N)}\\
&&\qquad-
\left(
\frac{1}{N}\sum_{j\in\cJ_{i-1}}
\left(\ell\left(f(\x_j),y_j\right)-\tilde{\ell}_f\left(\x_j\right)\right)
\right)^2\\
&\le&
\frac{1}{N^2}\sum_{j\in\cJ_{i-1}}
\frac{(U-L)^2}
{q_i(j\mid i_{1:i-1},D_N)}\\
&\le&
\frac{1}{N^2}\sum_{j\in\cJ_{i-1}}
(U-L)^2\frac{N-i+1}{\beta}\\
&=&
\frac{(U-L)^2(N-i+1)^2}{\beta N^2}\\
&\le&
\frac{(U-L)^2}{\beta}.
\end{eqnarray*}
Hence $\mathrm{Var}\left(\hat S_i\mid \cF_{i-1}\right)$ is uniformly bounded, and therefore the same comparison used in \eqref{eq:Bt_bound_new} yields
\begin{eqnarray}\label{eq:cond_var_O}
\sum_{i=1}^t
\frac{\mathrm{Var}\left(\hat S_i\mid \cF_{i-1}\right)}{i\log(i+1)}
&\le&
\frac{(U-L)^2}{\beta}\sum_{i=1}^t
\frac{1}{i\log(i+1)}\nonumber\\
&\le&
\frac{(U-L)^2}{\beta}\bigl(C_0+\log\log(t+1)\bigr)\nonumber\\
&=&
O(\log\log t)\ .
\end{eqnarray}

Finally, the stitching construction yields $\delta_{\lceil\log_2 t\rceil}=O(\log(1/\omega)+\log\log t)$. Substituting these scalings into the previous display shows that the dominant contribution is of order $\log\log t$, and therefore
\[
\circled{f}\le
G_2
\sqrt{
\frac{\log(t+1)\log\log(t+1)}{t}
\left(
\log\log(t+1)+
\sum_{i=1}^t
\frac{\mathrm{Var}\left(\hat S_i\mid \cF_{i-1}\right)}{i\log(i+1)}
\right)
}
\]
for a constant $G_2$ depending only on $\underline\Delta,\overline\Delta,c,\rho,\underline\gamma,\overline\gamma,\alpha_1,\theta,\omega$.

Putting the pieces together, on the event where \eqref{eq:logK_lower_simple} holds, every $\upsilon<R_N$ with
\[
R_N-\upsilon\ge G_2
\sqrt{
\frac{\log(t+1)\log\log(t+1)}{t}
\left(
\log\log(t+1)+
\sum_{i=1}^t
\frac{\mathrm{Var}\left(\hat S_i\mid \cF_{i-1}\right)}{i\log(i+1)}
\right)
}
\]
is excluded from $\cB_t^{\pm}$. Equivalently, $\cB_t^{\pm}$ cannot contain any point smaller than
\[
R_N-G_2
\sqrt{
\frac{\log(t+1)\log\log(t+1)}{t}
\left(
\log\log(t+1)+
\sum_{i=1}^t
\frac{\mathrm{Var}\left(\hat S_i\mid \cF_{i-1}\right)}{i\log(i+1)}
\right)
},
\]
and hence the left endpoint satisfies, with probability at least $1-\omega/2$,
simultaneously for all $t\ge 2t_c$,
\[
\inf \cB_t^{\pm} \ge R_N-G_2
\sqrt{
\frac{\log(t+1)\log\log(t+1)}{t}
\left(
\log\log(t+1)+
\sum_{i=1}^t
\frac{\mathrm{Var}\left(\hat S_i\mid \cF_{i-1}\right)}{i\log(i+1)}
\right)
}.
\]

\medskip

\textbf{We now turn to the upper endpoint and analyze the one-sided capital process $\cK_t^{-}(\upsilon)$ for $\upsilon>R_N$.}
For each $1\le t\le N$ with $\bar S_t\in[0,1]$ and $\bar\upsilon_t\in[0,1]$, we have
\[
-\lambda_i^{-}(\upsilon)\left(\bar S_i-\bar\upsilon_i\right)
\ge
-\frac{1}{1-\bar\upsilon_i+c}(1-\bar\upsilon_i)
\ge
-\frac{1}{1+c}.
\]

Applying the elementary inequality (Theorem~\ref{thm:elem_ine}) termwise with
$u=-\lambda_i^{-}(\upsilon)(\bar S_i-\bar\upsilon_i)$ gives
\begin{eqnarray*}
\log \cK_t^-(\upsilon)
&=&
\sum_{i=1}^t\log\left(1-\lambda_i^{-}(\upsilon)\left(\bar S_i-\bar\upsilon_i\right)\right)\\
&\ge&
-\sum_{i=1}^t\lambda_i^{-}(\upsilon)\left(\bar S_i-\bar\upsilon_i\right)
-\frac{1}{2(1-1/(1+c))}\sum_{i=1}^t\left(\lambda_i^{-}(\upsilon)\right)^2\left(\bar S_i-\bar\upsilon_i\right)^2.
\end{eqnarray*}

Using $\bar S_i-\bar\upsilon_i=(\bar S_i-\bar R_{N,i})+(\bar R_{N,i}-\bar\upsilon_i)$ and $(\bar S_i-\bar\upsilon_i)^2\le 1$, we obtain
\begin{eqnarray}\label{eq:logK-}
\log \cK_t^-(\upsilon)
&\ge&
\sum_{i=1}^t\lambda_i^{-}(\upsilon)\left(\bar\upsilon_i-\bar R_{N,i}\right)-\sum_{i=1}^t\lambda_i^{-}(\upsilon)(\bar S_i-\bar R_{N,i})-\frac{\sum_{i=1}^t\left(\lambda_i^{-}(\upsilon)\right)^2\left(\bar S_i-\bar\upsilon_i\right)^2}{2(1-1/(1+c))}\nonumber\\
&\ge&
(\upsilon-R_N)\sum_{i=1}^t\frac{\lambda_i^{-}(\upsilon)}{b_i-a_i}
-\underbrace{\sum_{i=1}^t\lambda_i^{-}(\upsilon)(\bar S_i-\bar R_{N,i})}_{\circled{c}^-}
-\frac{\sum_{i=1}^t\left(\lambda_i^{-}(\upsilon)\right)^2}{2(1-1/(1+c))}.
\end{eqnarray}

Define the martingale-difference increments
\[
Y_t^-=\lambda_t^{-}(\upsilon)\left(\bar S_t-\bar R_{N,t}\right),
\qquad
W_t^-=\sum_{i=1}^t Y_i^-
=
\sum_{i=1}^t\lambda_i^{-}(\upsilon)\left(\bar S_i-\bar R_{N,i}\right)=\circled{c}^-.
\]
Since $\bE[\bar S_t\mid \cF_{t-1}]=\bar R_{N,t}$ and $\lambda_t^{-}(\upsilon)$ is $\cF_{t-1}$-measurable,
\[
\bE[Y_t^-\mid \cF_{t-1}]
=
\lambda_t^{-}(\upsilon)\bE[\bar S_t-\bar R_{N,t}\mid \cF_{t-1}]
=0,
\]
so $(Y_t^-)_{1\le t\le N}$ is a martingale difference sequence. Moreover,
\[
|Y_t^-|
\le
\lambda_t^{-}(\upsilon)|\bar S_t-\bar R_{N,t}|
\le
\frac{1}{c}
\quad\text{almost surely}.
\]

Finally, its predictable quadratic variation is
\begin{eqnarray}\label{eq:scale_Vt-}
V_t^{-}(\upsilon)
&=&
\sum_{i=1}^t \bE[(Y_i^-)^2\mid \cF_{i-1}]\nonumber\\
&=&
\sum_{i=1}^t
\bE\left[
\left(\lambda_i^{-}(\upsilon)\right)^2
\left(\bar S_i-\bar R_{N,i}\right)^2
\mid \cF_{i-1}
\right]\nonumber\\
&=&
\sum_{i=1}^t
\left(\lambda_i^{-}(\upsilon)\right)^2
\bE\left[
\left(\bar S_i-\bar R_{N,i}\right)^2
\mid \cF_{i-1}
\right]\nonumber\\
&=&
\sum_{i=1}^t
\left(\lambda_i^{-}(\upsilon)\right)^2
\mathrm{Var}\left(\bar S_i\mid \cF_{i-1}\right)\nonumber\\
&=&
\sum_{i=1}^t
\frac{\left(\lambda_i^{-}(\upsilon)\right)^2}{(b_i-a_i)^2}
\mathrm{Var}\left(\hat S_i\mid \cF_{i-1}\right)\nonumber\\
&\overset{(a)}{\le}&
\sum_{i=1}^t
\frac{\left(\lambda_i^{-}(\upsilon)\right)^2}{\underline\Delta^2}
\mathrm{Var}\left(\hat S_i\mid \cF_{i-1}\right)\nonumber\\
&\overset{(b)}{\le}&
\sum_{i=1}^t
\frac{\overline\gamma^2}{i\log(i+1)\underline\Delta^2}
\mathrm{Var}\left(\hat S_i\mid \cF_{i-1}\right)\nonumber\\
&=&
\frac{\overline\gamma^2}{\underline\Delta^2}
\sum_{i=1}^t
\frac{\mathrm{Var}\left(\hat S_i\mid \cF_{i-1}\right)}{i\log(i+1)}\ ,
\end{eqnarray}
where $(a)$ follows from Assumption~\ref{ass:scaling}, and $(b)$ follows from Assumption~\ref{ass:betting}.

We now upgrade the finite-horizon bound in Theorem~\ref{thm:freedman} to an anytime-valid bound via stitching.
Fix a target error probability $\omega\in(0,1)$ and any $\rho>1$. For $j=0,1,2,\dots$, define the dyadic horizons
\[
T_j=2^j,
\qquad
\omega_j=\frac{3\omega}{\pi^2(j+1)^2},
\qquad
\delta_j=\max\left\{1,\ \log\left(\frac{\bigl(1+\lceil \log_\rho(2T_j)\rceil_{+}\bigr)\pi^2(j+1)^2}{3\omega}\right)\right\}.
\]

For each $j$, consider the event
\begin{eqnarray*}
\cE_j
&=&
\left\{
\forall t\leq T_j:
W_t^- \le \sqrt{2\rho V_t^{-}(\upsilon)\delta_j}+\frac{4 \delta_j}{3c}
\right\}.
\end{eqnarray*}

Theorem~\ref{thm:freedman} applied to $W_t^-$ yields, for each $j$,
\begin{eqnarray*}
\Pr(\cE_j^c)&=&\Pr\left(\left\{
\exists t\leq T_j:
W_t^- > \sqrt{2\rho V_t^{-}(\upsilon)\delta_j}+\frac{4 \delta_j}{3c}
\right\}\right)\\
&\le& \left(1+\left\lceil \log_\rho\left(\frac{2T_j}{\delta_j}\right)\right\rceil_{+}\right)\exp(-\delta_j).
\end{eqnarray*}
Since $\delta_j\ge1$, we have $2T_j/\delta_j\le 2T_j$, hence
$\lceil \log_\rho(2T_j/\delta_j)\rceil_{+}\le \lceil \log_\rho(2T_j)\rceil_{+}$, and therefore
\[
\Pr(\cE_j^c)
\le
\Bigl(1+\bigl\lceil \log_\rho(2T_j)\bigr\rceil_{+}\Bigr)\exp(-\delta_j)
\le \omega_j,
\]
where the last inequality holds by the definition of $\delta_j$.
By a union bound,
\[
\Pr\Bigl(\bigcap_{j\ge0}\cE_j\Bigr)
\ge 1-\sum_{j\ge0}\Pr(\cE_j^c)
\ge 1-\sum_{j\ge0}\omega_j
=1-\omega/2.
\]

On the event $\bigcap_{j\ge0}\cE_j$, all bounds $\cE_j$ hold simultaneously.
Fix any $1\le t\le N$ and let $j(t)=\lceil\log_2 t\rceil$. Then $t\le 2^{j(t)}=T_{j(t)}$.
Since $\cE_{j(t)}$ ensures that the stated inequality holds for all times up to $T_{j(t)}$,
it in particular holds at time $t$, that is,
\begin{equation}\label{eq:UBS_t}
W_t^- \le \sqrt{2\rho V_t^{-}(\upsilon)\delta_{\lceil \log_2 t\rceil}}+\frac{4 \delta_{\lceil \log_2 t\rceil}}{3c}.
\end{equation}

Combining \eqref{eq:UBS_t} with \eqref{eq:logK-} yields that, with probability at least $1-\omega/2$,
simultaneously for all $1\le t\le N$,
\begin{eqnarray*}
\log \cK_t^-(\upsilon)
&\ge&
(\upsilon-R_N)\sum_{i=1}^t\frac{\lambda_i^{-}(\upsilon)}{b_i-a_i}
-\frac{1}{2(1-1/(1+c))}\sum_{i=1}^t\left(\lambda_i^{-}(\upsilon)\right)^2\nonumber\\
&&-\sqrt{2\rho V_t^{-}(\upsilon)\delta_{\lceil \log_2 t\rceil}}
-\frac{4}{3c}\delta_{\lceil \log_2 t\rceil}.
\end{eqnarray*}

Substituting the bound in \eqref{eq:scale_Vt-} into the display above further gives
\begin{eqnarray}\label{eq:logK_upper_final}
\log \cK_t^-(\upsilon)
&\ge&
(\upsilon-R_N)\underbrace{\sum_{i=1}^t\frac{\lambda_i^{-}(\upsilon)}{b_i-a_i}}_{\circled{d}^-}
-\frac{1}{2(1-1/(1+c))}\underbrace{\sum_{i=1}^t\left(\lambda_i^{-}(\upsilon)\right)^2}_{\circled{e}^-}\nonumber\\
&&-\sqrt{\frac{2\rho \overline\gamma^2}{\underline\Delta^2}
\sum_{i=1}^t
\frac{\mathrm{Var}\left(\hat S_i\mid \cF_{i-1}\right)}{i\log(i+1)}\delta_{\lceil \log_2 t\rceil}}
-\frac{4}{3c}\delta_{\lceil \log_2 t\rceil}.
\end{eqnarray}

We now bound the two deterministic aggregates
\[
\circled{d}^-=\sum_{i=1}^t \frac{\lambda_i^{-}(\upsilon)}{b_i-a_i},
\qquad
\circled{e}^-=\sum_{i=1}^t \bigl(\lambda_i^{-}(\upsilon)\bigr)^2,
\]
under the standing assumption that there exist constants $0<\underline\gamma\le \overline\gamma<\infty$ such that for all $1\le t\le N$,
\[
\frac{\underline\gamma}{\sqrt{t\log(t+1)}} \le |\tilde\lambda_t^{-}| \le \frac{\overline\gamma}{\sqrt{t\log(t+1)}}.
\]

Since $\lambda_i^{-}(\upsilon)\le |\tilde\lambda_i^{-}|$,
\[
\circled{e}^-\le \sum_{i=1}^t |\tilde\lambda_i^{-}|^2
\le
\overline\gamma^2\sum_{i=1}^t \frac{1}{i\log(i+1)}.
\]
Since $\log(i+1)\ge \log i$ for $i\ge2$, we have
\[
\sum_{i=2}^t \frac{1}{i\log(i+1)}
\le
\sum_{i=2}^t \frac{1}{i\log i}
\le
\int_{2}^{t+1}\frac{dx}{x\log x}.
\]
Therefore, there exists an absolute constant $C_0$ such that for all $t\ge 2$,
\begin{equation}\label{eq:Bt_bound_new_upper}
\circled{e}^-\le \overline\gamma^2\bigl(C_0+\log\log(t+1)\bigr).
\end{equation}

Since every $\bar\upsilon_i\in[0,1]$, we have $1-\bar\upsilon_i+c\le1+c$ and thus $1/(1-\bar\upsilon_i+c)\ge1/(1+c)$. Choose a deterministic integer $t_c\ge1$ such that
\[
\frac{\overline\gamma}{\sqrt{t_c\log(t_c+1)}} \le \frac{1}{1+c}.
\]
Such a finite $t_c$ exists because
\[
\frac{\overline\gamma}{\sqrt{t\log(t+1)}}\to 0
\qquad\textnormal{as }t\to\infty.
\]

For any $t\ge 2t_c$ and any $i\in\{\lfloor t/2\rfloor,\ldots,t\}$, we have $i\ge t_c$ and hence
\[
|\tilde\lambda_i^{-}|\le\frac{1}{1+c}\le \frac{1}{1-\bar\upsilon_i+c},
\]
so $\lambda_i^{-}(\upsilon)=|\tilde\lambda_i^{-}|$. Moreover, by Assumption~\ref{ass:scaling},
\[
b_i-a_i\le \overline\Delta,
\]
and therefore for all such $i$,
\[
\frac{\lambda_i^{-}(\upsilon)}{b_i-a_i}
=
\frac{|\tilde\lambda_i^{-}|}{b_i-a_i}
\ge
\frac{1}{\overline\Delta}\cdot \frac{\underline\gamma}{\sqrt{i\log(i+1)}}
\ge \frac{1}{\overline\Delta}\cdot \frac{\underline\gamma}{\sqrt{t\log(t+1)}}.
\]
Therefore, for all $t\ge 2t_c$,
\begin{equation}\label{eq:At_bound_new_upper}
\circled{d}^-\ge\sum_{i=\lfloor t/2\rfloor}^{t}\frac{\lambda_i^{-}(\upsilon)}{b_i-a_i}
\ge\frac{t}{2}\cdot \frac{\underline\gamma}{\overline\Delta\sqrt{t\log(t+1)}}
=
\frac{\underline\gamma\sqrt{t}}{2\overline\Delta\sqrt{\log(t+1)}}.
\end{equation}

By substituting \eqref{eq:Bt_bound_new_upper} and \eqref{eq:At_bound_new_upper} into \eqref{eq:logK_upper_final}, we have that, with probability at least $1-\omega/2$,
simultaneously for all $t\ge 2t_c$,
\begin{eqnarray}\label{eq:logK_upper_simple}
\log \cK_t^-(\upsilon)
&\ge&
(\upsilon-R_N)\frac{\underline\gamma\sqrt{t}}{2\overline\Delta\sqrt{\log(t+1)}}
-\frac{\overline\gamma^2\bigl(C_0+\log\log(t+1)\bigr)}{2(1-1/(1+c))}\nonumber\\
&&-\sqrt{\frac{2\rho \overline\gamma^2}{\underline\Delta^2}
\sum_{i=1}^t
\frac{\mathrm{Var}\left(\hat S_i\mid \cF_{i-1}\right)}{i\log(i+1)}\delta_{\lceil \log_2 t\rceil}}
-\frac{4}{3c}\delta_{\lceil \log_2 t\rceil}.
\end{eqnarray}

Recall that $\cB_t^{\pm}$ is defined by
\[
\cB_t^{\pm}=\left\{\upsilon\in[L,U]:\cK_t^{\pm}(\upsilon)<\frac{1}{\alpha_1}\right\}.
\]
Therefore, to show that a candidate value $\upsilon$ does not belong to $\cB_t^{\pm}$, it suffices to verify the converse inequality
\[
\cK_t^{\pm}(\upsilon)\ge \frac{1}{\alpha_1},
\]
since this directly implies $\upsilon\notin\cB_t^{\pm}$.

On the event where \eqref{eq:logK_upper_simple} holds, any $\upsilon>R_N$ satisfying
\begin{eqnarray}\label{eq:v-R_N_upper}
\lefteqn{\upsilon-R_N\ge}\\
&&
\underbrace{\frac{\log(\frac{1}{(1-\theta)\alpha_1})+\frac{\overline\gamma^2\bigl(C_0+\log\log(t+1)\bigr)}{2(1-1/(1+c))}
+\sqrt{\frac{2\rho \overline\gamma^2}{\underline\Delta^2}
\sum_{i=1}^t
\frac{\mathrm{Var}\left(\hat S_i\mid \cF_{i-1}\right)}{i\log(i+1)}\delta_{\lceil \log_2 t\rceil}}
+\frac{4}{3c}\delta_{\lceil \log_2 t\rceil}
}{\underline\gamma\sqrt{t}/(2\overline\Delta\sqrt{\log(t+1)})}}_{\circled{f}^-}
\nonumber
\end{eqnarray}
ensures that $\log\cK_t^{-}(\upsilon)\ge \log(1/((1-\theta)\alpha_1))$, and hence $\cK_t^{-}(\upsilon)\ge 1/((1-\theta)\alpha_1)$.
Using $\cK_t^{\pm}(\upsilon)\ge (1-\theta)\,\cK_t^{-}(\upsilon)$, we further obtain $\cK_t^{\pm}(\upsilon)\ge 1/\alpha_1$.
Consequently, any such $\upsilon$ is excluded from $\cB_t^{\pm}$.

To turn the exclusion condition \eqref{eq:v-R_N_upper} into an explicit radius, we upper bound $\circled{f}^-$.
There exists a constant $G_3>0$, depending only on $\underline\Delta,\overline\Delta,c,\rho,\underline\gamma,\overline\gamma,\alpha_1,\theta,\omega$, such that for all $t\ge 2t_c$,
\begin{align*}
\circled{f}^-\ &\leq \ G_3\sqrt{\frac{\log(t+1)}{t}}\times\\
&
\left[
1
+
\log\log(t+1)
+
\sqrt{\sum_{i=1}^t
\frac{\mathrm{Var}\left(\hat S_i\mid \cF_{i-1}\right)}{i\log(i+1)}\delta_{\lceil\log_2 t\rceil}}
+
\delta_{\lceil\log_2 t\rceil}
\right].
\end{align*}

Using \eqref{eq:cond_var_O}, which gives
\[
\sum_{i=1}^t
\frac{\mathrm{Var}\left(\hat S_i\mid \cF_{i-1}\right)}{i\log(i+1)}
=
O(\log\log t),
\]
together with
\[
\delta_{\lceil\log_2 t\rceil}=O(\log(1/\omega)+\log\log t),
\]
we obtain
\[
\circled{f}^-\le
G_4
\sqrt{
\frac{\log(t+1)\log\log(t+1)}{t}
\left(
\log\log(t+1)+
\sum_{i=1}^t
\frac{\mathrm{Var}\left(\hat S_i\mid \cF_{i-1}\right)}{i\log(i+1)}
\right)
}
\]
for a constant $G_4$ depending only on $\underline\Delta,\overline\Delta,c,\rho,\underline\gamma,\overline\gamma,\alpha_1,\theta,\omega$.

Putting the pieces together, on the event where \eqref{eq:logK_upper_simple} holds, every $\upsilon>R_N$ with
\[
\upsilon-R_N\ge G_4
\sqrt{
\frac{\log(t+1)\log\log(t+1)}{t}
\left(
\log\log(t+1)+
\sum_{i=1}^t
\frac{\mathrm{Var}\left(\hat S_i\mid \cF_{i-1}\right)}{i\log(i+1)}
\right)
}
\]
is excluded from $\cB_t^{\pm}$. Equivalently, $\cB_t^{\pm}$ cannot contain any point bigger than
\[
R_N+G_4
\sqrt{
\frac{\log(t+1)\log\log(t+1)}{t}
\left(
\log\log(t+1)+
\sum_{i=1}^t
\frac{\mathrm{Var}\left(\hat S_i\mid \cF_{i-1}\right)}{i\log(i+1)}
\right)
},
\]
and hence the upper endpoint satisfies, with probability at least $1-\omega/2$,
simultaneously for all $t\ge 2t_c$,
\[
\sup \cB_t^{\pm} \le R_N+G_4
\sqrt{
\frac{\log(t+1)\log\log(t+1)}{t}
\left(
\log\log(t+1)+
\sum_{i=1}^t
\frac{\mathrm{Var}\left(\hat S_i\mid \cF_{i-1}\right)}{i\log(i+1)}
\right)
}.
\]

\medskip

\textbf{Two-sided width bound and shrinkage.} On the intersection of the corresponding high-probability events for $\cK_t^{+}$ and $\cK_t^{-}$, which holds with probability at least $1-\omega$ by a union bound,
we have simultaneously for all $t\ge 2t_c$,
\begin{eqnarray*}
&&\inf \cB_t^{\pm}\ge
R_N-G_2
\sqrt{
\frac{\log(t+1)\log\log(t+1)}{t}
\left(
\log\log(t+1)+
\sum_{i=1}^t
\frac{\mathrm{Var}\left(\hat S_i\mid \cF_{i-1}\right)}{i\log(i+1)}
\right)
},\\
&&\sup \cB_t^{\pm}\le
R_N+G_4
\sqrt{
\frac{\log(t+1)\log\log(t+1)}{t}
\left(
\log\log(t+1)+
\sum_{i=1}^t
\frac{\mathrm{Var}\left(\hat S_i\mid \cF_{i-1}\right)}{i\log(i+1)}
\right)
}.
\end{eqnarray*}
Since both radii are of order
\[
\sqrt{
\frac{\log(t+1)\log\log(t+1)}{t}
\left(
\log\log(t+1)+
\sum_{i=1}^t
\frac{\mathrm{Var}\left(\hat S_i\mid \cF_{i-1}\right)}{i\log(i+1)}
\right)
},
\]
there exists a constant $G_5>0$ such that for all $t\ge 2t_c$,
\begin{eqnarray*}
\cB_t^{\pm}\subseteq
\left[R_N-G_5
\sqrt{
\frac{\log(t+1)\log\log(t+1)}{t}
\left(
\log\log(t+1)+
\sum_{i=1}^t
\frac{\mathrm{Var}\left(\hat S_i\mid \cF_{i-1}\right)}{i\log(i+1)}
\right)
},\right.\\
\left.
R_N+G_5
\sqrt{
\frac{\log(t+1)\log\log(t+1)}{t}
\left(
\log\log(t+1)+
\sum_{i=1}^t
\frac{\mathrm{Var}\left(\hat S_i\mid \cF_{i-1}\right)}{i\log(i+1)}
\right)
}\right].
\end{eqnarray*}
In particular, on this event,
\[
\mathrm{width}\left(\cB_t^{\pm}\right)
\le
2G_5\sqrt{
\frac{\log(t+1)\log\log(t+1)}{t}
\left(
\log\log(t+1)+
\sum_{i=1}^t
\frac{\mathrm{Var}\left(\hat S_i\mid \cF_{i-1}\right)}{i\log(i+1)}
\right)
}.
\]

Finally, recall that $\cM_t^{\pm}=\bigcap_{i\le t}\cB_i^{\pm}\subseteq \cB_t^{\pm}$ for each $t$.
Therefore the same width bound holds for $\cM_t^{\pm}$ as follows
\begin{eqnarray*}
\mathrm{width}\left(\cM_t^{\pm}\right)&\le&
2G_5\sqrt{
\frac{\log(t+1)\log\log(t+1)}{t}
\left(
\log\log(t+1)+
\sum_{i=1}^t
\frac{\mathrm{Var}\left(\hat S_i\mid \cF_{i-1}\right)}{i\log(i+1)}
\right)
}\ .
\end{eqnarray*}

Define
\begin{equation}\label{eq:t_0_define}
t_c
=
\min\left\{
t\in\mathbb Z_+:
t\log(t+1)\ge ((1+c)\overline\gamma)^2
\right\},
\qquad
t_0=\max\{2t_c,8\}.
\end{equation}
For all $t\ge t_0$, we have
\[
\log(t+1)\le 2\log t,
\qquad
\log\log(t+1)\le 2\log\log t.
\]
Indeed, $t\ge 2$ implies $\log(t+1)\le 2\log t$, and $t\ge 8$ implies $\log\log(t+1)\le 2\log\log t$.

Hence, after slightly enlarging the constant $G_5$ to $G_6$ if necessary, we obtain
\[
\mathrm{width}\left(\cM_t^{\pm}\right)\le
G_6\sqrt{
\frac{\log t\log\log t}{t}
\left(
\log\log t+
\sum_{i=1}^t
\frac{\mathrm{Var}\left(\hat S_i\mid \cF_{i-1}\right)}{i\log(i+1)}
\right)
},
\qquad
\forall t\ge t_0.
\]

This completes the proof.
\end{proof}

\appsubsection{Proofs of Theorem~\ref{thm:coverage_convergence}}
We present the detailed proof of Theorem~\ref{thm:coverage_convergence} as follows.
\begin{proof}
We decompose the proof into two high-probability events: one for the sequential uncertainty in estimating $R_N$ from adaptive labels, and one for the finite-pool approximation error $|R_N-R|$.

Let
\[
\cE_1=\left\{\forall 1\le t\le N,\ R_N\in \cM_t^{\pm}\right\},
\qquad
\cE_2=\left\{|R_N-R|\le \Delta_N(\alpha_2)\right\}.
\]
By Theorem~\ref{thm:conf_sequen}, $\left(\cM_t^{\pm}\right)_{1\le t\le N}$ is a $(1-\alpha_1)$ anytime-valid confidence sequence for $R_N$, hence
\[
\Pr\left(\cE_1\right)
=
\Pr\left(\forall 1\le t\le N,\ R_N\in \cM_t^{\pm}\right)
\ge 1-\alpha_1.
\]

Next, by Assumption~\ref{ass:loss_bound}, we have $\ell\left(f(\x),y\right)\in[L,U]$. Since the pool examples are i.i.d., Hoeffding's inequality gives, for any $\beta>0$,
\[
\Pr\left(|R_N-R|>\beta\right)\le 2\exp\left(-\frac{2N\beta^2}{(U-L)^2}\right).
\]
Choosing
\[
\Delta_N(\alpha_2)=(U-L)\sqrt{\frac{\log\left(2/\alpha_2\right)}{2N}}
\]
yields $\Pr(\cE_2)\ge 1-\alpha_2$.

Assume $\cE_1\cap \cE_2$ holds. Fix any $1\le t\le N$.
From $\cE_1$, we have $R_N\in \cM_t^{\pm}$.
From $\cE_2$, we have $R\in\left[R_N-\Delta_N(\alpha_2),\,R_N+\Delta_N(\alpha_2)\right]$.
Combining these two statements implies
\[
R
=
R_N+\left(R-R_N\right)
\in
\cM_t^{\pm}+\left[-\Delta_N(\alpha_2),\,\Delta_N(\alpha_2)\right]
=
\cM_t^{\pm}\oplus\left[-\Delta_N(\alpha_2),\,\Delta_N(\alpha_2)\right].
\]
By the definition of $C_t$, it follows that $R\in C_t$.
Since $1\le t\le N$ was arbitrary, we conclude
\[
\cE_1\cap \cE_2 \subseteq \left\{\forall 1\le t\le N,\ R\in C_t\right\}.
\]

Taking complements, we obtain
\[
\left\{\exists 1\le t\le N:\ R\notin C_t\right\}
\subseteq
\cE_1^c\cup \cE_2^c.
\]
Therefore, by a union bound,
\[
\Pr\left(\exists 1\le t\le N:\ R\notin C_t\right)
\le
\Pr\left(\cE_1^c\right)+\Pr\left(\cE_2^c\right)
\le
\alpha_1+\alpha_2
=
\alpha.
\]
This proves that $\left(C_t\right)_{1\le t\le N}$ is $(1-\alpha)$ anytime-valid for $R$.

For any set $A\subset\mathbb{R}$ and any $r\ge 0$, the Minkowski sum satisfies
\[
\sup\left(A\oplus[-r,r]\right)=\sup A+r,
\qquad
\inf\left(A\oplus[-r,r]\right)=\inf A-r,
\]
hence
\[
\mathrm{width}\left(A\oplus[-r,r]\right)
=
\left(\sup A+r\right)-\left(\inf A-r\right)
=
\mathrm{width}\left(A\right)+2r.
\]
Applying this with $A=\cM_t^{\pm}$ and $r=\Delta_N(\alpha_2)$ gives
\[
\mathrm{width}\left(\cM_t^{\pm}\oplus[-\Delta_N(\alpha_2),\Delta_N(\alpha_2)]\right)
=
\mathrm{width}\left(\cM_t^{\pm}\right)+2\Delta_N(\alpha_2).
\]
Since
\[
C_t
=
\left(\cM_t^{\pm}\oplus[-\Delta_N(\alpha_2),\Delta_N(\alpha_2)]\right)\cap [L,U],
\]
and intersecting with $[L,U]$ can only reduce the width, we have
\[
\mathrm{width}\left(C_t\right)
\le
\mathrm{width}\left(\cM_t^{\pm}\right)+2\Delta_N(\alpha_2).
\]

By Theorem~\ref{thm:conf_sequen}, for all sufficiently large $t$,
\[
\mathrm{width}\left(\cM_t^{\pm}\right)\le
G_6\sqrt{
\frac{\log t\log\log t}{t}
\left(
\log\log t+
\sum_{i=1}^t
\frac{\mathrm{Var}\left(\hat S_i\mid \cF_{i-1}\right)}{i\log(i+1)}
\right)
}.
\]
Moreover, by Theorem~\ref{thm:signal_unbias} together with Assumptions~\ref{ass:loss_bound} and \ref{ass:beta}, for each $i\ge 1$ we have
\begin{eqnarray*}
\mathrm{Var}\left(\hat S_i\mid \cF_{i-1}\right)
&=&
\frac{1}{N^2}\sum_{j\in\cJ_{i-1}}
\frac{\left(\ell\left(f(\x_j),y_j\right)-\tilde{\ell}_f\left(\x_j\right)\right)^2}
{q_i(j\mid i_{1:i-1},D_N)}\\
&&\qquad-
\left(
\frac{1}{N}\sum_{j\in\cJ_{i-1}}
\left(\ell\left(f(\x_j),y_j\right)-\tilde{\ell}_f\left(\x_j\right)\right)
\right)^2\\
&\le&
\frac{1}{N^2}\sum_{j\in\cJ_{i-1}}
\frac{(U-L)^2}
{q_i(j\mid i_{1:i-1},D_N)}\\
&\le&
\frac{1}{N^2}\sum_{j\in\cJ_{i-1}}
(U-L)^2\frac{N-i+1}{\beta}\\
&=&
\frac{(U-L)^2(N-i+1)^2}{\beta N^2}\\
&\le&
\frac{(U-L)^2}{\beta}\ .
\end{eqnarray*}
Since $\log(i+1)\ge \log i$ for $i\ge 2$, we have
\[
\sum_{i=2}^t \frac{1}{i\log(i+1)}
\le
\sum_{i=2}^t \frac{1}{i\log i}
\le
\int_{2}^{t+1}\frac{dx}{x\log x}.
\]
Therefore, there exists an absolute constant $C_0$ such that for all $t\ge 2$,
\[
\sum_{i=1}^t
\frac{1}{i\log(i+1)}
\le
C_0+\log\log(t+1)\ .
\]
Hence
\begin{eqnarray*}
\sum_{i=1}^t
\frac{\mathrm{Var}\left(\hat S_i\mid \cF_{i-1}\right)}{i\log(i+1)}
&\le&
\frac{(U-L)^2}{\beta}\sum_{i=1}^t
\frac{1}{i\log(i+1)}\\
&\le&
\frac{(U-L)^2}{\beta}\bigl(C_0+\log\log(t+1)\bigr)\\
&=&
O(\log\log t)\ .
\end{eqnarray*}

Substituting this into the bound above yields
\[
\mathrm{width}\left(\cM_t^{\pm}\right)
=
O\left(\log\log t\sqrt{\frac{\log t}{t}}\right).
\]

On the other hand, by the definition of $\Delta_N(\alpha_2)$,
\[
\Delta_N(\alpha_2)
=
O\left(\sqrt{\frac{1}{N}}\right)
\quad\textnormal{for fixed }\alpha_2.
\]
Combining these bounds yields
\[
\mathrm{width}\left(C_t\right)
=
O\left(\log\log t\sqrt{\frac{\log t}{t}}\right)
+
O\left(\sqrt{\frac{1}{N}}\right).
\]
This completes the proof.
\end{proof}

\appsection{Synthetic Experimental Setting for Table~\ref{tab:rq1-coverage}}
\label{app:synthetic}

Cer-Eval~\cite{Wang:Chen:Bo:Xu2025} adaptively selects evaluation samples and constructs CIs by partitioning the evaluation pool and applying adaptive concentration inequalities within each partition~\cite{Zhao:Zhou:Sabharwal:Ermon2016}. This approach can be effective when the partition is fixed and aligned with the true risk structure, since a low-variance partition can yield tighter CIs with fewer evaluated samples. However, there is a gap between its fixed-partition certification and its proposed algorithm: the analysis relies on concentration bounds for a fixed or well-behaved partition, whereas the algorithm repeatedly learns and updates the partition from the evaluated samples, their embeddings, and their observed risk values. This creates a double-dipping issue, where the same evaluated data are used both to choose the partition and to quantify the within-partition variation. When the embedding used for partitioning is uninformative about the target risk, the adaptive partition can overfit the evaluated samples and underestimate the variation among unevaluated samples, leading to overly narrow CIs. 

We present a synthetic setting to further illustrate the gap between the theoretical guarantees and the proposed algorithm of Cer-Eval~\cite{Wang:Chen:Bo:Xu2025}. Specifically, we construct a setting where the evaluation pool contains genuine risk heterogeneity, but this heterogeneity is independent of the embeddings used for partitioning. Therefore, any embedding-based partition cannot reliably transfer the risk structure observed on evaluated samples to the remaining unevaluated samples.

\textbf{Data-generating process.}
We generate a finite evaluation pool $D_N=\{\x_j\}_{j=1}^N$. For each sample $\x_j$, we first draw a latent group $G_j\in\{1,\ldots,G\}$. The latent group determines the risk level of the sample, but it is not encoded in the embedding used for partitioning. Specifically, each group $g$ is assigned a risk probability $p_g$, and we choose these $G$ probabilities to be evenly spaced between $0.1$ and $0.9$. Given $G_j=g$, the realized per-sample risk value is generated as
\[
Z_j \mid G_j=g \sim \mathrm{Bernoulli}(p_g),
\]
so samples from different latent groups have different risk distributions. The finite-pool risk is
\[
R_N=\frac{1}{N}\sum_{j=1}^N Z_j .
\]

The crucial design choice is that the representation used for partitioning is independent of the latent risk structure:
\[
\x_j \perp (G_j,Z_j).
\]
Thus, the pool contains genuine risk heterogeneity through the latent groups, but this heterogeneity is invisible from the embeddings. In other words, two samples may have very different risk probabilities because they belong to different latent groups, but their embeddings provide no information about this difference. Therefore, an embedding-based partition cannot reliably infer the risk behavior of unevaluated samples from the evaluated samples.

This setting is not adversarial in the sense of manipulating the evaluation order or using unbounded risks. The samples are generated i.i.d., and the realized risk values remain bounded in $[0,1]$. The only difficulty is that the representation used for partitioning is statistically unrelated to the target risk. This isolates a realistic failure mode: an embedding may capture semantic similarity, while being weakly related or unrelated to the particular evaluation risk being certified.

\textbf{Why this setting is challenging for Cer-Eval.}
This setting is challenging for Cer-Eval because the partition is learned from the evaluated samples, while the embeddings used to construct the partition are independent of the target risk values. Hence, any apparent risk structure discovered from the evaluated samples cannot be reliably transferred to the unevaluated samples. The key issue is a double-dipping effect: the observed risk values are used both to learn the partition and to estimate the within-partition variation. As a result, the adaptive partition may look highly informative on the evaluated samples, for example by separating samples with observed risk value $0$ from those with observed risk value $1$, even though the embeddings themselves contain no information about the risk.

This creates a mismatch between the fixed-partition analysis and the adaptive procedure used in practice. Cer-Eval constructs its CI as if the learned partition were fixed or well behaved, so a small empirical within-partition variation on the evaluated samples leads to a small reported CI width. However, in this setting, the learned partition provides no predictive information about the risk values of unevaluated samples, because the embeddings used to assign samples to partition cells are independent of $Z_j$. Therefore, the small within-partition variation reflects overfitting to the evaluated risks rather than genuine risk reduction on the remaining pool. Consequently, Cer-Eval can produce overly narrow CIs and suffer undercoverage, especially when the CI is repeatedly updated and used to stop once the target precision is reached.

\textbf{Setups for \ourmethod~on synthetic data.}
We compare three variants of \ourmethod in this synthetic setting. The first is the standard \ourmethod, for which we generate simple noisy surrogate risk scores as auxiliary information:
\[
\tilde{\ell}_f(\x_j)
=
\Pi_{[0,1]}(p_{G_j}+\sigma_s \xi_j),
\quad
\xi_j\sim N(0,1),
\]
where $\Pi_{[0,1]}$ denotes clipping to $[0,1]$. In this variant, the sampling probabilities are computed using only the surrogate risk scores, since the true risk values $Z_j$ are unobserved before evaluation. Specifically, we use a pre-evaluation uncertainty proxy proportional to $\sqrt{\tilde{\ell}_f(\x_j)(1-\tilde{\ell}_f(\x_j))}$, which prioritizes samples whose surrogate risk scores are closer to $1/2$.

The second variant is \ourmethod~without surrogate assistance, where we set $\tilde{\ell}_f(\x_j)=0$ for all samples and use uniform sampling. This variant illustrates that the certification remains valid even without informative surrogate risk scores. The third variant is Oracle \ourmethod, which uses the same surrogate risk scores but samples according to the infeasible oracle proposal proportional to the true residual magnitude:
\[
q_t^\star(j) \propto |Z_j-\tilde{\ell}_f(\x_j)|\ .
\]
This oracle rule uses the unobserved realized risk values $Z_j$ before evaluation, and is therefore included only as an idealized benchmark.

\appsection{Experimental Setup Details}
\label{app:setup}

This appendix expands the main-text setup of Section~\ref{sec:exp-setup}.
Throughout, $\mathcal{D}_N = \{x_j\}_{j=1}^{N}$ denotes the evaluation pool containing all available samples from the benchmark split used for evaluation.

\appsubsection{Benchmarks}
\label{app:benchmarks}

We evaluate on three publicly available classification benchmarks chosen to span label-space size, task difficulty, and pool size.
% For every benchmark, the bounded-loss requirement of Ass.~\ref{ass:loss_bound} is satisfied directly.
% \zs{We use the dataset maintained on Huggingface (better to give links)}.

\textbf{SST-2}~\cite{socher-etal-2013-recursive}\footnote{\url{https://huggingface.co/datasets/stanfordnlp/sst2}}. Single-sentence binary sentiment classification, derived from the Stanford Sentiment Treebank by restricting the phrase-level annotations on the Pang--Lee movie review corpus to the polarity-only sentence labels.
We use the validation split as the evaluation pool, giving $N_{\text{SST2}} = 67{,}349$. 
The label space $\mathcal{Y} = \{\text{negative}, \text{positive}\}$ is approximately balanced.  
SST-2 instantiates the $|\mathcal{Y}| = 2$ regime: under $0$-$1$ loss the residual
$\ell - \widetilde{\ell}_f$ takes values in a discrete two-modal set whenever the surrogate is reasonably calibrated, which produces a sharply separated acquisition score and a highly non-uniform variance-optimal proposal $q_t^\star$ in Thm.~\ref{thm:optimal}.
% The benchmark therefore probes the regime in which \ourmethod\ extracts the largest gains from inverse-probability weighting at fixed pool size.

\textbf{MMLU}~\cite{Hendrycks:Burns:Basart:Zou:Mazeika:Song2021}\footnote{\url{https://huggingface.co/datasets/cais/mmlu}}. 
Four-way multiple-choice question answering across $57$ academic subjects distributed over four broad strata: STEM (e.g., college mathematics, physics, chemistry, computer science), humanities (e.g., philosophy, formal logic, world history, moral disputes), social sciences (e.g., econometrics, sociology, public relations), and applied/professional domains (e.g., professional medicine, professional law, jurisprudence).
Every question is presented with four answer options labelled $\{\text{A}, \text{B}, \text{C}, \text{D}\}$ and a single correct choice, giving label-space cardinality $|\mathcal{Y}| = 4$. 
Following the benchmark's evaluation convention, we obtain a single evaluation pool with $N_{\text{MMLU}} = 115{,}700$.

\textbf{AG\,News}~\citep{zhang2015character}\footnote{\url{https://huggingface.co/datasets/fancyzhx/ag_news}}. 
Four-way topic classification from the headline and lead paragraph of news articles in the AG corpus, with label space $\mathcal{Y} = \{\text{World}, \text{Sports}, \text{Business}, \text{Sci/Tech}\}$. 
The standard release combines $120{,}000$ training and $7{,}600$ test examples for a total pool of $N_{\text{AGNews}} = 127{,}600$, with classes balanced exactly by construction.
AG News probes a third regime distinct from SST-2 and MMLU. 
Topic classification on lead paragraphs relies on dispersed lexical cues that smaller surrogate models capture less reliably than they do sentiment polarity or short-form factual recall.

Together the three benchmarks span the three axes whose effect on \ourmethod\'s behaviour is captured by the analysis. The label-space cardinality varies as $|\mathcal{Y}| \in \{2, 4, 4\}$, controlling the granularity of the surrogate prediction and therefore the reachable
correlation between $\ell$ and $\widetilde{\ell}_f$. 
The within-pool difficulty distribution varies from approximately homogeneous (SST-2, AG News) to substantially heterogeneous across $57$ subjects (MMLU), which controls the tail of the residual distribution and the leverage available to $q_t^\star$. 
In every case the full pool plays the role of $\mathcal{D}_N$, so the finite-pool risk $R_N$ is the average loss over the entire benchmark.
The pool size varies and controls the magnitude of the population correction $\Delta_N(\alpha_2)$.

\appsubsection{Models and surrogate-target pairs}
\label{app:models}

The target model is the LLM whose population risk we certify; the surrogate model produces the auxiliary score $\widetilde{\ell}_f$ for unevaluated samples. 
We instantiate {\em six} surrogate-target pairs that vary along the model-family and model-generation axes, holding the surrogate substantially smaller than the target so that surrogate scoring is strictly cheaper than target scoring.

\begin{enumerate}
\item \textbf{Llama-3-8B $\to$ Llama-3-70B}: same family, same generation.
\item \textbf{Llama-3-8B $\to$ DeepSeek-67B}: cross-family, same generation.
\item \textbf{Llama-3-8B $\to$ Mixtral-8$\times$7B}: cross-family, same
generation\footnote{Mixtral parameter counts are reported as active parameters per
token.}.
\item \textbf{Llama-3-8B $\to$ Qwen-2.5-72B}: cross-family, same generation.
\item \textbf{Llama-2-7B $\to$ Llama-3-70B}: same family, cross-generation.
\item \textbf{Llama-2-7B $\to$ DeepSeek-67B}: cross-family, cross-generation.
\end{enumerate}

For all open-weights models we use the publicly released checkpoints without further fine-tuning. 
All predictions are obtained at temperature $0$. 
Following~\citet{Berrada:Kossen:Freddie:Razzak:Gal:Rainforth2025}, for the multi-choice tasks (MMLU, AG News), we score each candidate label by its next-token log-probability and predict $\arg\max$; 
SST-2 follows the same scoring convention restricted to the two label tokens.

\appsubsection{Baselines}
\label{app:baselines}

\textbf{\evbaseline}~\cite{Waudby-Smith:Ramdas2024}. Applies the hedged-capital confidence sequence of~\eqref{eq:hedhedCP} to uniform sampling without a surrogate ($\tilde{\ell}\equiv 0$). 
All other components (prediction fractions, scaling, $\alpha$-split) match \ourmethod\ for fairness; the only differences are the absence of the surrogate and the choice of sampler.

\textbf{\nosurr.} Ablates the surrogate-assisted approximation by setting $\tilde{\ell}\equiv 0$, while retaining the uncertainty-guided sampling strategy. 
Comparison against \evbaseline\ isolates the contribution of active sampling at $\tilde{\ell}\equiv 0$; comparison against \ourmethod\ isolates the contribution of the surrogate at the same uncertainty-guided sampling strategy.

\textbf{Oracle.} Uses the variance-minimizing proposal $q_t^\star(j)\propto |\ell(f(\x_j),y_j)-\tilde{\ell}_f(\x_j)|$ from Thm.~\ref{thm:optimal}. Computing $q_t^\star$ requires the true per-sample residuals, which are unknown for unevaluated samples, so \oracleacq~is not deployable in practice. We include it as an oracle reference for the best variance reduction attainable under the inferential signal in~\eqref{eq:signal_estimator}; the gap between \ourmethod~and \oracleacq~therefore measures the suboptimality of the practical sample-selection score.

\textbf{Cer-Eval}~\cite{Wang:Chen:Bo:Xu2025}.
At each round, \cereval~ partitions the evaluation pool using the evaluated samples, computes within-partition statistics and CI radii through Bernstein-style or adaptive concentration bounds, selects the partition that contributes most to the current uncertainty, and evaluates a new sample from that partition. This strategy can be effective when the partition is fixed or well aligned with the true risk structure, since a low-variance partition can reduce the number of evaluated samples. However, there is a gap between its theoretical certification and its proposed algorithm: the certification relies on concentration arguments for a fixed or well-behaved partition, whereas the algorithm repeatedly learns and updates the partition from the evaluated samples, their embeddings, and their observed risks. This creates a double-dipping issue, where the same evaluated data are used both to choose the partition and to quantify the within-partition variation. As shown in our diagnostic coverage study~Tab.~\ref{tab:rq1-coverage}, such adaptive partitioning can fail to maintain anytime-valid coverage under repeated monitoring and stopping. Therefore, we report \cereval~separately to illustrate this gap, and focus the main efficiency comparisons on methods designed to be anytime-valid.

% \subsection{Acquisition strategies}
% \label{app:acquisition-sweep}

% The optimal proposal of Theorem~\ref{thm:optimal} requires per-sample true losses and is therefore infeasible in practice. Remark~\ref{rem:01loss} introduces \zs{three} label-free instantiations for the 0–1 loss, substituting label-free proxies (predictive uncertainty, hard/soft pseudo-label loss with respect to a surrogate) for the unobserved residual. 
% We deploy Strategy~4 (hard pseudo-label loss against target predictive uncertainty) in the main results because it is the most label-free of the five. Strategies~1–3 and~5 are evaluated in this appendix; results are in Table~\ref{tab:app-acquisition}.

\appsubsection{Hyperparameters}
\label{app:hyperparameters}

\textbf{Significance level and split.}
We set the total significance level to $\alpha=0.05$, which controls the miscoverage probability of the final CI. We split it evenly as $\alpha_1=\alpha_2=0.025$: $\alpha_1$ is used for the anytime-valid $e$-process inference of the finite-pool risk $R_N$, while $\alpha_2$ is used for the finite-pool-to-population correction via Hoeffding's inequality. App.~\ref{app:hyperparameters} studies the sensitivity of the number of evaluated samples required to reach $\epsilon$ to this split.

\textbf{$e$-process construction.}
Following the standard $e$-process construction of~\citet{Waudby-Smith:Ramdas2024}, we use the predictable fractions in Rem.~\ref{rem:betting-fractions} with constant $c=0.5$. We set the hedge weight to $\theta=0.5$ in the hedged capital process in~\eqref{eq:hedhedCP}, treating the upper and lower directions symmetrically.

\textbf{Scaling.}
We use the global scaling strategy in Rem.~\ref{rem:scaling}. Specifically, we choose fixed predictable bounds $a$ and $b$ that satisfy $a\le \hat S_t\le b$ for all rounds, and scale the signal as $\bar S_t=(\hat S_t-a)/(b-a)$ before constructing the $e$-process.

\textbf{Sampling lower bound and stopping rule.}
For sampling, we set $\beta=0.4$ to control the lower bound of the sampling probabilities, as required by Assumption~\ref{ass:beta}. The sensitivity to this choice is also studied in App.~\ref{app:hyperparameters}. We stop the evaluation at the first time $t$ such that $\mathrm{width}(C_t)\le \epsilon$, with $\epsilon=0.05$ in all main experiments.
% The global constant-bound alternative is benchmarked in Appendix~\ref{app:sensitivity} and yields qualitatively identical behavior at slightly worse constants.

% \paragraph{Candidate grid.} 
% Test inversion is implemented over a finite grid on $[L,U]$. Following
% Remark~\ref{rem:inversion}, we initialize the search at the running
% estimate $\hat{R}_t$ and expand outward to locate the acceptance
% boundary, refining the endpoints by bisection. The grid resolution and
% its effect on width are reported in Appendix~\ref{app:sensitivity}.

\appsubsection{Trials and reporting}
\label{app:reporting}
Each experimental configuration (i.e., a surrogate-target pair on a particular dataset) is run for 50 random seeds.
Pooled trajectories in Figs.~\ref{fig:rq2-efficiency-main}~and~\ref{fig:width-ce} stack per-seed widths across independent runs and report mean width with a $\pm 1\,\sigma$ band across the per-pair means.
% The per-seed wall-clock cost of each method is reported in Appendix~\ref{app:wallclock}. 
% The dominant cost is target-LLM forward passes; \ourmethod\'s sampling and CI bookkeeping add negligible overhead per round.

\appsection{Empirical coverage on real LLM evaluation under $0$-$1$ loss}
\label{app:01-coverage}

In addition to the coverage investigation with synthetic data reported in Tab.~\ref{tab:rq1-coverage}, we also report the empirical coverage results for the main real LLM evaluation experiments under $0$-$1$ loss (Tab.~\ref{tab:labels-to-eps} and Fig.~\ref{fig:rq2-efficiency-main}). The evaluation follows the same sequential protocol as in Sec.~\ref{sec:experiments}: after each round, the CI is updated, and the procedure stops at the time when the CI width reaches the target precision $\epsilon=0.05$.
Table~\ref{tab:01-coverage} reports the empirical miscoverage rate at this stopping time. Both \ourmethod~and \evbaseline~remain below the target level $\alpha=0.05$ on all datasets, supporting the anytime-valid coverage guarantee under adaptive stopping.

\begin{table}[h]
\centering
\small
\caption{Empirical miscoverage rate at the stopping time when the CI width reaches $\epsilon=0.05$, pooled across the surrogate-target pairs within each dataset. Brackets give Wilson 95\% intervals. Both \ourmethod~and \evbaseline~remain below the nominal level $\alpha=0.05$ on all datasets.}
\label{tab:01-coverage}
\setlength{\tabcolsep}{10pt}
\renewcommand{\arraystretch}{1.2}
\begin{tabular}{lccc}
\toprule
Method & \textsc{SST-2} & \textsc{MMLU} & \textsc{AG$\,$News} \\
\midrule
\evbaseline
& $0.013$~{\color{gray!90}\tiny$[0.005,\,0.034]$}
& $0.007$~{\color{gray!90}\tiny$[0.002,\,0.024]$}
& $0.007$~{\color{gray!90}\tiny$[0.002,\,0.024]$} \\
\nosurr
& $0.000$~{\color{gray!90}\tiny$[0.000,\,0.013]$}
& $0.000$~{\color{gray!90}\tiny$[0.000,\,0.013]$}
& $0.000$~{\color{gray!90}\tiny$[0.000,\,0.013]$} \\
\ourmethod
& $0.000$~{\color{gray!90}\tiny$[0.000,\,0.013]$}
& $0.000$~{\color{gray!90}\tiny$[0.000,\,0.013]$}
& $0.013$~{\color{gray!90}\tiny$[0.005,\,0.034]$} \\
\midrule
\oracleacq
& $0.000$~{\color{gray!90}\tiny$[0.000,\,0.013]$}
& $0.000$~{\color{gray!90}\tiny$[0.000,\,0.013]$}
& $0.000$~{\color{gray!90}\tiny$[0.000,\,0.013]$} \\
\bottomrule
\end{tabular}
\end{table}

\appsection{Certifiable and Label-efficient LLM evaluation on Cross-Entropy Loss}
\label{app:ce-results}

We follow the deployment-style protocol in Sec.~\ref{sec:experiments}, replacing the $0$--$1$ risk with the cross-entropy risk
$\ell_{\mathrm{ce}}(f,\x,y)=-\log p_f(y\mid \x)$, which is bounded in $[L,U]$ by clipping at the maximum cross-entropy value observed over the pool. The sample-selection rule follows Strategy A in Rem.~\ref{rem:cross_entropy}, and all other hyperparameters are kept the same as in App.~\ref{app:setup}. Fig.~\ref{fig:width-ce} shows the resulting CI-width trajectories.

\begin{table}[h]
\centering
\small
\caption{Empirical miscoverage at the stopping time when the CI width reaches the target precision $\epsilon = 0.05$ under the \textbf{cross-entropy} loss, pooled across the six surrogate-target pairs within each dataset ($50$ seeds per pair). 
Point estimates are accompanied by Wilson $95\%$ intervals (in gray). 
All $e$-process-based methods stay within the nominal $\alpha = 0.05$ on every dataset, consistent with the by-construction anytime-validity of $\cM_t^\pm$ in \eqref{eq:M_t} and of $C_t$ via Thm.~(\ref{thm:coverage_convergence}).
}
\label{tab:ce-coverage}
\setlength{\tabcolsep}{10pt}
\renewcommand{\arraystretch}{1.2}
\begin{tabular}{l c c c}
\toprule
\textbf{Method} & \textsc{SST-2} & \textsc{MMLU} & \textsc{AG~News} \\
\midrule
\evbaseline
& $0.000$~{\color{gray!90}\tiny$[0.000,\,0.013]$}
& $0.000$~{\color{gray!90}\tiny$[0.000,\,0.013]$}
& $0.007$~{\color{gray!90}\tiny$[0.002,\,0.024]$} \\
\nosurr
& $0.000$~{\color{gray!90}\tiny$[0.000,\,0.013]$}
& $0.007$~{\color{gray!90}\tiny$[0.002,\,0.024]$}
& $0.000$~{\color{gray!90}\tiny$[0.000,\,0.013]$} \\
\ourmethod
& $0.000$~{\color{gray!90}\tiny$[0.000,\,0.013]$}
& $0.000$~{\color{gray!90}\tiny$[0.000,\,0.013]$}
& $0.000$~{\color{gray!90}\tiny$[0.000,\,0.013]$} \\
\midrule
\oracleacq
& $0.000$~{\color{gray!90}\tiny$[0.000,\,0.013]$}
& $0.000$~{\color{gray!90}\tiny$[0.000,\,0.013]$}
& $0.000$~{\color{gray!90}\tiny$[0.000,\,0.013]$} \\
\bottomrule
\end{tabular}
\end{table}

\begin{figure}[h]
    \centering
    \includegraphics[width=\linewidth]{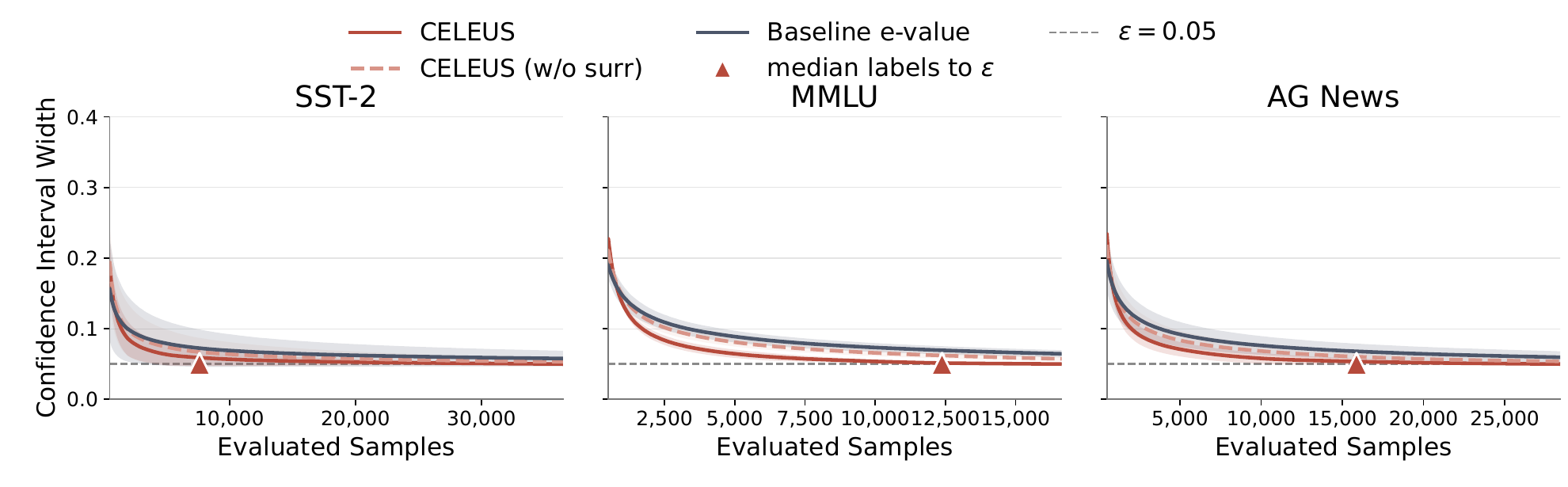}
    \caption{Confidence interval width versus number of evaluated samples under cross-entropy loss on SST-2, MMLU, and AG News. Each curve is the mean width at each label budget, aggregated over $50$ seeds and the four surrogate-target pairs of App.~\ref{app:models}; the shaded band is $\pm 1\sigma$ across the per-pair means. The dashed line marks the target precision $\epsilon = 0.05$, and triangles mark the median samples-to-$\epsilon$ for \ourmethod, which reaches $\epsilon$ first on every benchmark.}
    \label{fig:width-ce}
\end{figure}

\textbf{Miscoverage rate at the stopping time.}
Tab.~\ref{tab:ce-coverage} shows that the empirical miscoverage at the stopping time remains below the prescribed level $\alpha=0.05$ for all cross-entropy configurations, consistent with the $0$-$1$ risk results in Tabs.~\ref{tab:rq1-coverage} and~\ref{tab:01-coverage}. This follows from the $e$-process construction: the anytime-valid guarantee does not depend on the specific risk form, but only requires the boundedness condition of evaluation score in Assumption~\ref{ass:loss_bound}.

\textbf{Efficiency.}
The relative ordering of methods at the stopping time matches the $0$-$1$ case: in Fig.~\ref{fig:width-ce}, \ourmethod~has smaller CI width than \nosurr, which in turn improves over \evbaseline, across the evaluation process. Two observations are worth noting. First, the required numbers of evaluated samples are larger, because cross-entropy risk has a wider range and the surrogate residual correspondingly contributes more to $\mathrm{Var}(\widehat{S}_t \mid \mathcal{F}_{t-1})$. Second, as in the $0$--$1$ setting, the benefit of surrogate-assisted approximation is largest on SST-2 and MMLU, where the entropy proxy is more aligned with the target cross-entropy risk, and smallest on AG$\,$News.

\appsection{Sample-selection strategy comparison}
\label{app:acquisition-strategy}

The deployed configurations in Sec.~\ref{sec:experiments} and Sec.~\ref{app:ce-results} use one strategy from each of the two label-free families: Strategy~A of Rem.~\ref{rem:0-1} for the $0$-$1$ loss and Strategy~A of Rem.~\ref{rem:cross_entropy} for cross-entropy. 
This appendix compares the alternatives within each family, holding the trajectory machinery, $e$-process, and stopping rule fixed; only the formula that produces $q_t$ changes across runs.

\begin{figure}[h]
    \centering
    \includegraphics[width=\linewidth]{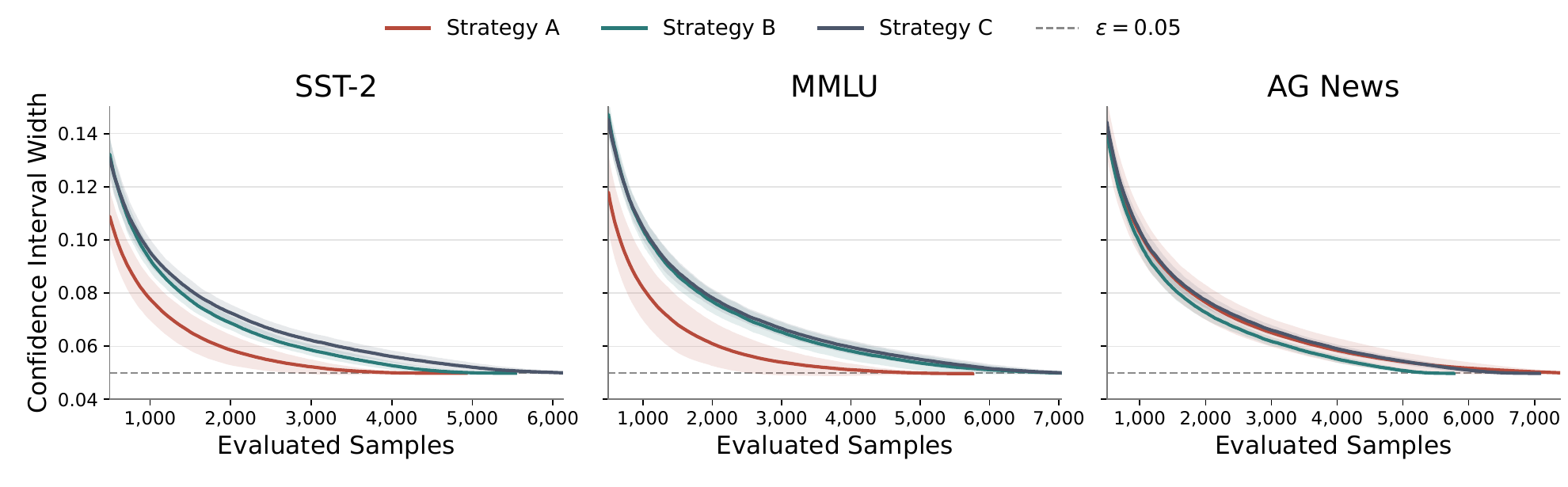}
    \caption{
    Sample-selection strategy comparison under $0$-$1$ risk. We compare the sample-selection strategies in Rem.~\ref{rem:0-1} by plotting CI width versus the number of evaluated samples on SST-2, MMLU, and AG$\,$News. Each curve is averaged over $50$ seeds, and the dashed line marks the target precision $\epsilon=0.05$.
    }
    \label{fig:acq-01}
\end{figure}

\begin{figure}[h]
    \centering
    \includegraphics[width=\linewidth]{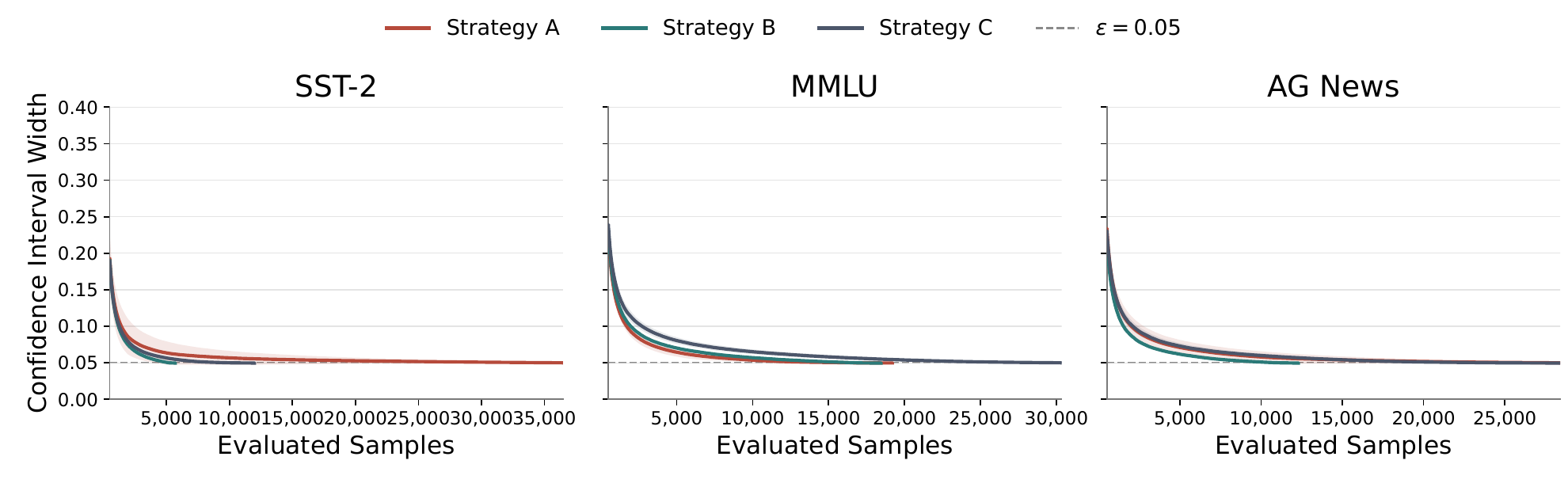}
    \caption{
    Sample-selection strategy comparison under cross-entropy risk. We compare the sample-selection strategies in Rem.~\ref{rem:cross_entropy} by plotting CI width versus the number of evaluated samples on SST-2, MMLU, and AG$\,$News. Each curve is averaged over $50$ seeds, and the dashed line marks the target precision $\epsilon=0.05$.
    }
    \label{fig:acq-ce}
\end{figure}

\textbf{$0$-$1$ loss family.} The strategies in Rem.~\ref{rem:0-1} differ in how they approximate the oracle residual $|\ell(f(\x_j),y_j)-\tilde{\ell}_f(\x_j)|$ using label-free quantities.
Strategy~A uses the hard pseudo-loss $\ell_{\mathrm{hard}}(f,g,\x_j)=\mathbf{1}\{\hat y_f(\x_j)\neq \hat y_g(\x_j)\}$ as the target-loss proxy and pairs it with the predictive dispersion score $\ell_{\mathrm{disp}}(f,\x_j)$ as the surrogate score.
Strategies~B and~C both use $\ell_{\mathrm{disp}}(f,\x_j)$ and the soft pseudo-loss $\ell_{\mathrm{soft}}(f,g,\x_j)=1-p_f(\hat y_g(\x_j)\mid \x_j)$, and therefore produce numerically identical sample-selection rankings.
They differ only in which quantity plays the role of the target-loss proxy inside the estimator in~\eqref{eq:lure_estimator}: Strategy~B treats $\ell_{\mathrm{disp}}$ as the target-loss proxy and $\ell_{\mathrm{soft}}$ as the surrogate score, whereas Strategy~C reverses this assignment.
This leaves the sampling order unchanged but changes the correction term, which can affect the conditional variance of $\widehat S_t$ and the cumulative-variance term in Thm.~\ref{thm:conf_sequen}.
Fig.~\ref{fig:acq-01} compares the three strategies on SST-2, MMLU, and AG$\,$News.
Strategy~A yields smaller CI widths than Strategies~B and~C across evaluation budgets on all three datasets, while remaining within the prescribed miscoverage level $\alpha=0.05$.
This is intuitive for $0$-$1$ loss: the hard pseudo-loss preserves the discrete agreement structure of the target loss more directly, whereas the soft pseudo-loss spreads this information over $[0,1]$.
As a result, Strategy~A more effectively prioritizes samples with large proxy residuals, aligning more closely with the variance-optimal sampling rule in Thm.~\ref{thm:optimal}.

\textbf{Cross-entropy loss family.} The three strategies in Rem.~\ref{rem:cross_entropy} differ in the same structural way, with predictive entropy and mode loss used as label-free proxies for the unobserved cross-entropy loss.
Strategy~A treats the mode loss $\ell_{\mathrm{mode}}(f,\x_j)=-\log p_f(\hat y_f(\x_j)\mid\x_j)$ as the target-loss proxy and uses the predictive entropy $H(f,\x_j)$ as the surrogate score.
Strategy~B reverses this assignment, treating $H(f,\x_j)$ as the target-loss proxy and $\ell_{\mathrm{mode}}(f,\x_j)$ as the surrogate score; hence it has the same sample-selection ranking as Strategy~A but a different correction term in~\eqref{eq:lure_estimator}.
Strategy~C is the only strategy that uses a separate surrogate model: it compares the target predictive entropy $H(f,\x_j)$ with the surrogate predictive entropy $H(g,\x_j)$, prioritizing samples where the two predictive distributions differ most in entropy.
Fig.~\ref{fig:acq-ce} compares the three strategies.
The differences among strategies become smaller at large $t$, because their effect enters the CI width through the cumulative conditional-variance term in Thm.~\ref{thm:conf_sequen}; as more samples are evaluated, this cumulative averaging reduces the impact of early sample-selection differences.

\appsection{Hyperparameter sensitivity}
\label{app:sensitivity}

\ourmethod\ has four configuration parameters that may affect the number of evaluated samples needed to reach the target precision, while leaving the anytime-valid coverage guarantee unchanged: the split of $\alpha$ between the $e$-process and the population correction, the lower bound $\beta$ on sampling probabilities, the hedge weight $\theta$, and the stabilizer $c$ in the predictable fractions. The default values
\[
\alpha_1=\alpha_2=0.025,\qquad \beta=0.4,\qquad \theta=0.5,\qquad c=0.5
\]
are fixed in advance and are not tuned on evaluation risks. We vary one parameter at a time while keeping the others at their default values. For $\beta$, we sweep over $\{0.1,0.2,0.3,0.4,0.5,0.6\}$. Tab.~\ref{tab:hp-sens} reports the number of evaluated samples needed to reach $\epsilon$ on four representative dataset-model configurations.

\begin{table}[h]
\centering
\small
\caption{
Hyperparameter sensitivity.
We report the median number of evaluated samples required to reach $\epsilon=0.05$ over $50$ seeds.
Each column corresponds to one dataset and surrogate-target pair: \ding{172} Llama-2-7B $\to$ Mixtral-8$\times$7B on SST-2; \ding{173} Llama-2-7B $\to$ Qwen-2.5-72B on MMLU; \ding{174} Llama-3-8B $\to$ DeepSeek-67B on AG$\,$News; and \ding{175} Llama-3-8B $\to$ Llama-3-70B on SST-2.
The default setting is $(\alpha_1=0.025,\ \theta=0.5,\ c=0.5)$, chosen without using prior information about the target LLMs or the evaluation datasets.
}
% \resizebox{0.6\linewidth}{!}{%
\begin{tabular}{lrrrr}
\toprule
\textbf{Config} & \textbf{SST-2}/\ding{172} & \textbf{MMLU}/\ding{173} & \textbf{AG\,News}/\ding{174} & \textbf{SST-2}/\ding{175} \\
\midrule
\multicolumn{5}{l}{$\alpha$-split, with $\alpha_1 + \alpha_2 = 0.05$} \\
\midrule
baseline & 2,426 & 2,116 & 5,436 & 3,997 \\
$\alpha_1=0.005$ & 3,398 & 2,823 & 7,544 & 5,552 \\
$\alpha_1=0.045$ & \textbf{2,086} & \textbf{1,733} & \textbf{4,606} & \textbf{3,494} \\
\midrule
\multicolumn{5}{l}{$\theta$, with $0 < \theta < 1$} \\
\midrule
baseline & \textbf{2,426} & \textbf{2,116} & \textbf{5,436} & \textbf{3,997} \\
$\theta=0.25$ & 2,531 & 2,199 & 5,673 & 4,188 \\
$\theta=0.75$ & 2,570 & 2,218 & 5,685 & 4,250 \\
\midrule
\multicolumn{5}{l}{$c$, with $0<c<1$} \\
\midrule
baseline & 2,426 & 2,116 & 5,436 & 3,997 \\
$c=0.3$ & 2,533 & 2,141 & 5,709 & 4,266 \\
$c=0.7$ & \textbf{2,415} & \textbf{2,112} & \textbf{5,213} & \textbf{3,892} \\
\bottomrule
\end{tabular}
% }
\label{tab:hp-sens}
\end{table}

\textbf{Sensitivity to the $\alpha$-split.}
Tightening $\alpha_1$ from $0.025$ to $0.005$ increases the required number of evaluated samples by $40\%$, $33\%$, $39\%$, and $39\%$ on configurations 1-4, respectively. Loosening $\alpha_1$ to $0.045$ reduces the required number by $14\%$-$19\%$. This trend is consistent with the $e$-process construction: a smaller $\alpha_1$ raises the rejection threshold $1/\alpha_1$, so more evidence is needed before the CI can shrink. Although a larger $\alpha_1$ leaves a smaller $\alpha_2$ for the finite-pool-to-population correction, this correction remains small for the large pools used in our experiments. We keep the symmetric split as default because it is conservative and keeps the population correction stable for smaller pools.

\textbf{Sensitivity to the hedge weight $\theta$.}
Changing $\theta$ from $0.5$ to $0.25$ or $0.75$ changes the number of evaluated samples by less than $6\%$ on every configuration, and both perturbations slightly increase the sample budget. This suggests that the symmetric choice $\theta=0.5$ is near-optimal in our experiments, as it balances the upper and lower hedged capital processes in \eqref{eq:hedhedCP}.

\textbf{Sensitivity to the constant $c$.}
Changing $c$ to $0.3$ or $0.7$ has a mild effect. The choice $c=0.3$ increases the number of evaluated samples by $4\%$-$7\%$, while $c=0.7$ slightly reduces it on some configurations. Since $c$ controls the scale of the predictable fractions in Rem.~\ref{rem:betting-fractions}, this result indicates that the default $c=0.5$ provides a stable trade-off across datasets.

\begin{figure}[h]
    \centering
    \includegraphics[width=\linewidth]{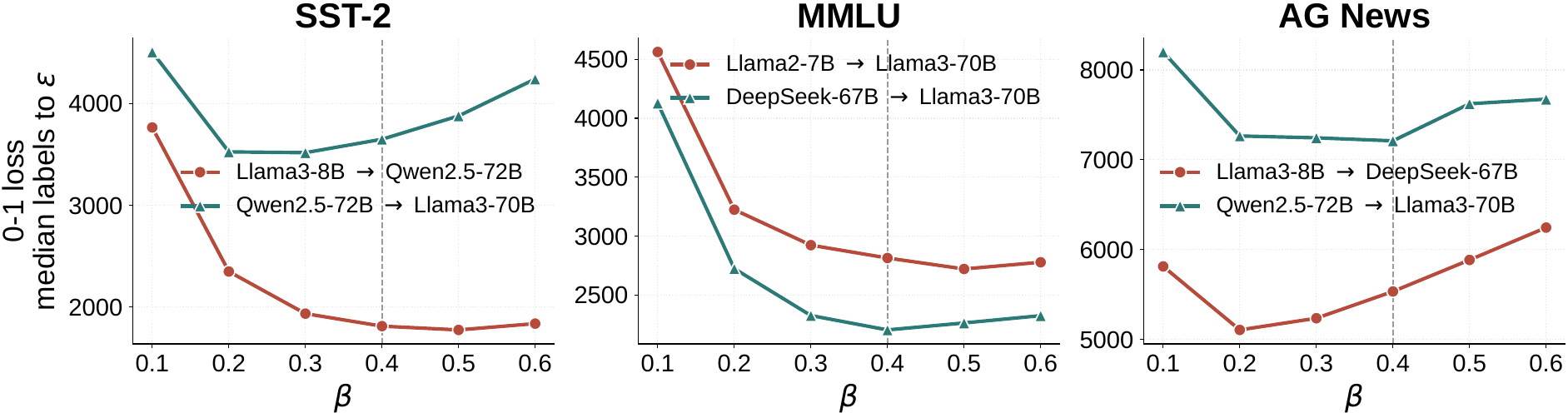}
    \caption{Sensitivity of median samples-to-$\epsilon$ to the betting-magnitude floor $\beta$, on three benchmarks (SST-2, MMLU, AG News) and two surrogate-target pairs per benchmark.
    Each curve is the median number of evaluate samples used, over $50$ seeds at $\epsilon = 0.05$ under $0$-$1$ loss.
    The vertical dashed line marks the deployed value $\beta = 0.4$.
    Curves are L-shaped on the left and either flat or mildly increasing on the right; the deployed value is within $5$\% of the per-experimental configuration minimum on every curve.}
\label{fig:beta-sens}
\end{figure}

\textbf{Sensitivity to $\beta$.}
The parameter $\beta$ controls the lower bound on the sampling probabilities and hence prevents the sampling rule from assigning too little probability to any remaining sample. Fig.~\ref{fig:beta-sens} reports the results on six dataset-model configurations. The default value $\beta=0.4$ is at, or within $5\%$ of, the best value on every curve. When $\beta$ is too small, some sampling probabilities can become very small, which slows down the accumulation of evidence and leads to wider CIs. When $\beta$ is too large, the sampling rule is forced closer to uniform sampling, which can reduce the benefit of uncertainty-guided sampling. The best values mostly lie in $\beta\in[0.3,0.5]$, showing that the default choice is stable across benchmarks.

Overall, the sensitivity analysis shows that the performance of \ourmethod\ changes smoothly under these hyperparameter perturbations. The default choices provide stable performance across datasets without tuning on evaluation risks.

\appsection{Computational cost}
\label{app:wallclock}

Tab.~\ref{tab:wallclock} reports the per-round wall-clock cost of \ourmethod\ and \evbaseline\ baseline on the same four (dataset, pair) experimental configurations used in App.~\ref{app:sensitivity}.
The wall-clock figures exclude target-model inference, which is identical across methods at fixed acquisition order.
The reported numbers therefore isolate the extra cost that \ourmethod\ pays for the e-process update, the LURE-weighted estimator of ~\eqref{eq:lure_estimator}, and the surrogate-driven proposal.

Most of the per-round cost of \ourmethod\ comes from running the surrogate model on the remaining unevaluated samples, which takes time proportional to $N - t$.
Because the surrogate is much smaller than the target, this extra work is cheap in absolute terms.
Even on the slowest experimental configuration, mmlu/2, \ourmethod\ adds about $0.51$\,s per round.
Combined with the $54$--$62$\,\% reduction in samples-to-$\epsilon$ reported in Tab.~\ref{tab:rq1-coverage}, the time saved by labelling fewer samples is much larger than the time added by the per-round overhead.

\begin{table}[t]
\centering
\small
\caption{
Per-round wall-clock cost in seconds, averaged over $50$ seeds.
Each column indexes a dataset-pair: \ding{172}: llama2-7b $\to$ Mixtral-8$\times$7B on SST-2, \ding{173}: llama2-7b $\to$ Qwen2.5-72B on MMLU, \ding{174}: llama3-8b $\to$ DeepSeek-67B on AG News, \ding{175}: llama3-8b $\to$ Llama-3-70B on SST-2. 
}
% \resizebox{0.6\linewidth}{!}{%
\begin{tabular}{lrrrr}
\toprule
\textbf{Method} & \textbf{SST-2}/\ding{172} & \textbf{MMLU}/\ding{173} & \textbf{AG\,News}/\ding{174} & \textbf{SST-2}/\ding{175} \\
\midrule
\ourmethod & 0.0388 & 0.5101 & 0.0963 & 0.0251 \\
\evbaseline & 0.0064 & 0.0305 & 0.0066 & 0.0018 \\
\bottomrule
\end{tabular}\label{tab:wallclock}
% }
\end{table}

\appsection{Results for each surrogate-target model pair}
\label{app:per-configuration-efficiency}

In addition to the dataset-level averages, we report the CI-width trajectories and evaluated samples-to-$\epsilon$ for each $(\textit{dataset}, \textit{surrogate}, \textit{target})$ configuration separately in Fig.~\ref{fig:per-pair-ci-width}. The results show that \ourmethod~reaches the target precision with fewer evaluated samples than \evbaseline~across all configurations, not only after averaging over surrogate-target pairs. The improvement is especially clear on AG$\,$News, where the gap between \ourmethod~and \evbaseline~is large for all six surrogate-target pairs, and \ourmethod~typically reaches $\epsilon=0.05$ at roughly half of the budget required by \evbaseline. Across the full grid, \oracleacq~acts as a lower envelope, while \ourmethod~consistently moves the CI-width trajectory closer to this oracle curve than the surrogate-free baselines.

The comparison between \ourmethod~and \nosurr~shows the role of surrogate scoring. When the surrogate scores are more informative for the target risk, the gap between the two methods becomes larger; for example, in several Llama-3-8B surrogate configurations, \ourmethod~shrinks the CI visibly faster than \nosurr. In configurations where the surrogate-target relationship is weaker, the two curves are closer, indicating that the benefit of surrogate assistance is smaller but the anytime-valid guarantee is still preserved. This pattern is consistent with the variance bound in Thm.~\ref{thm:conf_sequen}: informative surrogate scores reduce the residual $\ell-\widetilde{\ell}_f$, which lowers the cumulative conditional variance and accelerates CI shrinkage. Thus, the per-configuration results support the same conclusion as the dataset-level summary: \ourmethod~improves efficiency through both uncertainty-guided sampling and surrogate-assisted approximation, while retaining certifiable coverage.

\paragraph{A corner case.}
On the SST-2 evaluation of the Llama-3-8B $\to$ Mixtral-8$\times$7B row, the median samples-to-$\epsilon$ of \ourmethod\ ($2{,}422$) is slightly smaller than that of \oracleacq\ ($2{,}451$).
The difference is about $30$ evaluated samples and lies within the seed-to-seed spread of either method; it does not imply that \ourmethod~is better than \oracleacq.
However, this case is still informative about what the oracle proposal in Thm.~\ref{thm:optimal} optimizes.
The oracle proposal minimizes the conditional variance of the one-step signal $\widehat S_t$ given the past, but the reported metric is the stopping time determined by the entire CI trajectory.
These are closely related but not identical objectives.

On this pair, the target risk is small, with $R_N\approx0.104$, so the surrogate residuals are highly sparse and unevenly distributed across the pool.
In such a finite-sample regime, a proposal that is optimal for the next-step conditional variance need not produce a uniformly smaller stopping time on every random run, because the CI width also depends on the accumulated trajectory of signals, the scaling bounds, and the realized sampling path.
By contrast, \ourmethod's practical proposal is smoother and may occasionally produce a slightly shorter trajectory to the target precision on this specific configuration.
Therefore, this small reversal should be viewed as finite-sample variability rather than a violation of the oracle optimality result.
It also suggests that designing a label-free proposal that targets the full CI trajectory, rather than only the one-step variance, is an interesting direction for future work.

\begin{figure}
    \centering
    \includegraphics[width=\linewidth]{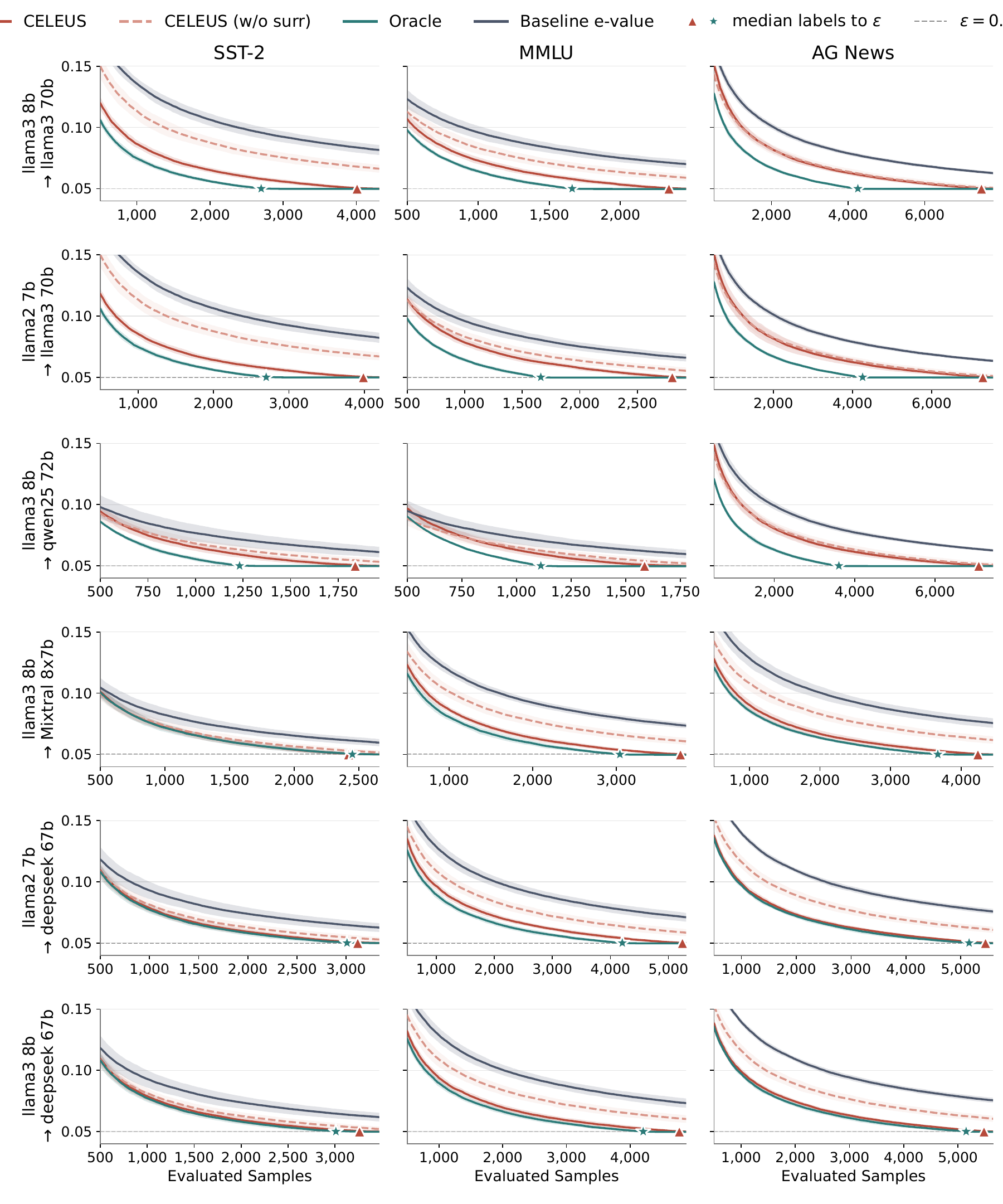}
    \caption{Confidence interval width versus evaluated samples on each (\textit{dataset}, \textit{surrogate}, \textit{target}) experimental configuration. Rows index the six surrogate-target pairs of App.~\ref{app:models}, and columns index the three benchmarks. Each curve is the mean width over $50$ seeds. The dashed line marks the target precision $\epsilon = 0.05$, and triangles mark the median samples-to-$\epsilon$. The ordering \ourmethod\ $\leq$ \nosurr\ $\leq$ \evbaseline\ holds on every panel, and the gap between \ourmethod\ and \nosurr\ tracks the role of surrogate scoring.}
    \label{fig:per-pair-ci-width}
\end{figure}

\end{document}